%% file: main.tex
\documentclass[10pt]{article} 

\usepackage[accepted]{rlj}  

\usepackage[utf8]{inputenc}
\usepackage[english]{babel}
\usepackage{longtable}

\usepackage{amssymb}
\usepackage{amsthm}        
\usepackage{amsfonts}
\usepackage{mathtools}
\usepackage{bm}
\usepackage{mathrsfs}
\usepackage{nicefrac}

\usepackage[ruled,vlined]{algorithm2e}

\usepackage{graphicx}
\usepackage[space]{grffile} 
\usepackage{float}
\usepackage{subcaption}
\usepackage{tikz}
\usetikzlibrary{calc,positioning,arrows}
\usepackage{booktabs}
\usepackage{multirow}
\usepackage{tabularx}

\usepackage{microtype}

\DeclareMathOperator*{\argmax}{arg\,max}
\DeclareMathOperator*{\argmin}{arg\,min}

\newcommand{\hfaf}{\hfill$\triangle$}

\theoremstyle{plain}
\newtheorem{theorem}{Theorem}[section]
\newtheorem{lemma}[theorem]{Lemma}
\newtheorem{proposition}[theorem]{Proposition}
\newtheorem{corollary}[theorem]{Corollary}

\theoremstyle{definition}
\newtheorem{definition}[theorem]{Definition}
\newtheorem{assumption}{Assumption}
\newtheorem{example}[theorem]{Example}

\theoremstyle{remark}
\newtheorem{remark}[theorem]{Remark}
\newtheorem*{remark*}{Remark}

\title{Coordination Graphs for Constrained Multi-Agent Reinforcement Learning}

\setrunningtitle{Coordination Graphs for CMARL}

\author{Santiago Amaya-Corredor\textsuperscript{1}, Miguel Calvo-Fullana\textsuperscript{1}, Anders Jonsson\textsuperscript{1}}

\emails{\{santiagoesteban.amaya, miguel.calvo, anders.jonsson\}@upf.edu}

\affiliations{
$^{1}$\textbf{Department of Engineering, Universitat Pompeu Fabra, Barcelona, Spain}
}

\contribution{
    A constrained MARL framework that integrates Lagrangian constraint
    handling into the coordination graph structure, addressing
    challenges absent in single-agent constrained RL: constraint credit
    assignment across overlapping pairwise regions, coherent composition
    of primary and cost Q-functions under factored action selection, and
    coordinated constraint satisfaction via Max-Sum message passing.
    A two-head Q-function pair (primary and constraint) decouples
    objective and constraint learning, enabling Pareto front evaluation
    from a single trained model by sweeping the Lagrange multiplier
    at evaluation time without retraining. Parameter sharing yields
    $O(1)$ networks regardless of team size $N$, and Max-Sum has
    cost polynomial in $N$, compared to exponential for centralized
    approaches.
    }
    {
    Demonstrated on cooperative navigation with up to 10 agents;
    pairwise graphs yield $O(N^2)$ regions and the framework
    accommodates sparser topologies.
    }

\contribution{
    Convergence guarantees (Q-learning for both heads and primal-dual)
    under independent transitions and reward factorization, together
    with a compositional error bound decomposing into
    structural, coordination, sampling, and (with function
    approximation) representation terms.
    The bound provides qualitative guidance for system design by
    identifying when each error source vanishes or dominates.
    }
    {
    Tabular convergence; function approximation convergence is to
    a KKT neighborhood.
    }

\contribution{
    Empirical validation on cooperative navigation with $N \in
    \{3,4,6,10\}$ agents showing that the Pareto front from a single
    CG-CMARL model dominates IQL, QMIX, DCG, MAPPO, and
    MAPPO-Lagrangian baselines, each requiring separately trained
    models at fixed reward-shaping ratios.
    }
    {
    Evaluated on Simple Spread only; environments with dependent
    transitions remain untested.
    }

\keywords{multi-agent reinforcement learning, constrained MDP, coordination graphs, factor graphs, Max-Sum, Lagrangian methods}

\summary{
Prior work on cooperative MARL either exploits coordination graph
structure for scalability (DCG, CASEC, DMCG) without handling safety
constraints, or addresses constraints via centralized training
(MACPO, MAPPO-Lagrangian, CUTMAP) without exploiting factored
structure. CG-CMARL bridges this gap by integrating Lagrangian
constraint handling into the coordination graph framework, yielding a
fully decentralized architecture in which (i)~pairwise Q-functions
are learned on the factor graph via parameter sharing, (ii)~Max-Sum
message passing coordinates actions without access to the joint
action space, and (iii)~a Lagrangian multiplier controls the
objective--constraint tradeoff. Two shared Q-functions with separate
primary and constraint output heads decouple constraint handling from
learning, so that sweeping the Lagrange multiplier at evaluation time
traces a Pareto front from a single trained model without retraining.
We provide convergence guarantees under independent transitions and
reward factorization, together with a compositional four-source error
bound that reveals when factored methods incur no loss relative to
centralized learning. Experiments on cooperative navigation tasks show
that the resulting Pareto front dominates IQL, QMIX, DCG, MAPPO, and
MAPPO-Lagrangian baselines trained at fixed reward-shaping ratios.
}

\begin{document}

\maketitle  

\begin{abstract}
Constrained Multi-agent reinforcement learning (CMARL) faces two intertwined challenges:
the joint action space grows exponentially with the number of agents, and
additional requirements couple agents in ways that reward structure alone does
not capture. We introduce Coordination Graphs for Constrained Multi-Agent Reinforcement Learning (CG-CMARL), a framework that addresses both
challenges by combining coordination graphs with Lagrangian duality. The system decomposes the joint problem into pairwise regions,
each served by a set of shared Q-functions, one for the primary objective and one for each of the constraints, so that the number of learned models is independent of the number of agents. At execution time,
Max-Sum message passing coordinates actions across the factor graph,
while a Lagrangian multiplier controls the objective--constraint
tradeoff, allowing a single trained model to trace a Pareto front without retraining. We provide convergence guarantees under mild conditions, together with a
compositional error bound that decomposes into separate interpretable sources, each traceable to a specific design choice and independently controllable. Experiments on cooperative navigation tasks (where teams of up to 10 agents must coordinate to reach target positions while satisfying pairwise constraints) show that our method produces Pareto fronts dominating established baselines trained at fixed reward-shaping ratios, while scaling to team sizes where centralized approaches become intractable.
\end{abstract}

\section{Introduction}
\label{sec:introduction}
\input{sections/introduction}

\section{Problem Formulation}
\label{sec:problem}
\input{sections/problem}

\section{Method}
\label{sec:method}
\input{sections/method}

\section{Theoretical Analysis}
\label{sec:theory}
\input{sections/theory}

\section{Experiments}
\label{sec:experiments}
\input{sections/experiments}

\section{Conclusion}
\label{sec:conclusion}
\input{sections/conclusion}
\section*{Acknowledgements}
This work was supported in part by grants PID2023-153301NA-I00, PID2023-147145NB-I00, RYC2021-033549-I and CEX2021-001195-M funded by MCIN/AEI/10.13039/501100011033.


\bibliography{main}
\bibliographystyle{rlj}

\beginSupplementaryMaterials

\section{Notation and Definitions}
\label{app:notation}
\input{suplementary/appA_notation}

\section{Algorithm Details}
\label{app:algorithm}
\input{appendices/appB_algorithm}

\section{Proofs}
\label{app:proofs}
\input{appendices/appC_proofs}

\section{Implementation and Experimental Details}
\label{app:implementation}
\input{suplementary/appD_implementation}

\end{document}

%% file: sections/introduction.tex
Multi-agent reinforcement learning (MARL) has emerged as a powerful
framework for sequential decision-making in systems where multiple
autonomous agents must coordinate their
actions~\citep{zhang2021multiagent,albrecht2024marl}. A central
challenge is \emph{scalability}: the joint action space grows
exponentially with the number of agents, rendering centralized
approaches intractable even for moderate team sizes. To address this,
factored MDPs~\citep{boutilier2000factored} and coordination
graphs~\citep{guestrin2002contextspecific,kok2006collaborative}
exploit the locality of agent interactions, decomposing the global
value function into regional components and reducing complexity from
exponential in the number of agents to exponential in the size of the largest
interacting group. Message-passing algorithms such as
Max-Sum~\citep{farinelli2008decentralised,rogers2011bounded} then
enable distributed action coordination without centralized computation.

Many multi-agent tasks additionally impose \emph{safety constraints}
that cannot be traded off against reward. Collision avoidance, energy
budgets, and bandwidth limits must be enforced. Constrained Markov
decision processes (CMDPs)~\citep{altman1999constrained} formalize
such specifications. For single-agent settings, primal-dual methods
based on Lagrangian relaxation have proven
effective~\citep{tessler2018reward,paternain2023safe}, and recent
work has established that CMDPs exhibit zero duality gap under mild
conditions~\citep{paternain2019constrained}.

However, extending constrained RL to the multi-agent setting
introduces challenges that existing methods do not adequately address.
Centralized primal-dual approaches require access to the global state
and joint action space, precisely the bottleneck that factored methods
aim to overcome. Independent learners cannot coordinate constraint
satisfaction: if each agent optimizes its own Lagrangian independently,
the team may collectively violate global constraints. The core
difficulty is that \emph{constraints couple agents} in ways that
reward structure alone does not capture. To the best of our knowledge,
no prior work combines the scalability of factored value functions
with the constraint-handling capabilities of Lagrangian methods in a
decentralized architecture with convergence guarantees.

We introduce \textbf{CG-CMARL}, a framework that addresses both scalability and constraint
satisfaction. The key idea is a \emph{two-head Q-network} shared
across all pairwise regions of a coordination graph: one head predicts
expected coverage reward, the other predicts expected constraint reward.
A Lagrangian multiplier $\lambda$ combines the two heads at action
selection time, $Q_{\mathrm{aug}} = Q_{\mathrm{prim}} + \lambda\, Q_{\mathrm{cost}}$, so that sweeping $\lambda$ over a single
trained model traces a Pareto front of coverage versus safety without
retraining. Max-Sum message passing on the factor graph coordinates
actions across agents, while parameter sharing ensures the number of
learned Q-functions is $O(1)$ regardless of team size.
While individual components (two-head value functions, Lagrangian
multipliers, coordination graphs) exist in prior work, their
integration raises challenges that do not arise in isolation:
constraints couple agents across overlapping regions, the cost head
must evaluate the primary-greedy policy to ensure coherent
composition under factored action selection, and the compositional
error bound must account for structural and coordination errors
specific to the factored architecture.

\begin{itemize}[leftmargin=*, nosep]

\item \textbf{A decentralized constrained MARL framework with
Lagrangian Pareto sweep.}
A two-head Q-function pair (primary and constraint) decouples
objective and constraint learning, enabling a single trained model to
trace the full objective--constraint Pareto front by varying
$\lambda$ at evaluation time without retraining. Parameter sharing
across all pairwise regions yields $O(1)$ networks regardless of
team size $N$, and damped Max-Sum message passing coordinates joint
actions on the resulting factor graph with cost polynomial in $N$.

\item \textbf{Convergence guarantees and compositional error bound.}
Under independent transitions and reward factorization, we establish
Q-function convergence for both heads and dual optimality
(Theorems~\ref{thm:tabular-convergence}--\ref{thm:dual-convergence}),
together with a four-source error decomposition
(Theorem~\ref{thm:total-error}) that identifies when each source
vanishes and provides qualitative guidance for system design.

\item \textbf{Empirical validation.} On the Simple Spread cooperative
navigation benchmark~\citep{lowe2017multi} with $N \in
\{3,4,6,10\}$ agents, the Pareto front from a single CG-CMARL model
dominates IQL, QMIX, DCG, MAPPO, and MAPPO-Lagrangian baselines,
each requiring separately trained models at fixed reward-shaping
ratios.

\end{itemize}


\subsection{Related Work}
\label{sec:related}

Our work lies at the intersection of value decomposition for
cooperative MARL, constrained reinforcement learning, and distributed
optimization.

\paragraph{Value Decomposition and Coordination Graphs.}
VDN~\citep{sunehag2018vdn} represents the joint Q-function as a sum
of per-agent utilities; QMIX~\citep{rashid2018qmix} relaxes this to
monotonic combinations via a mixing network;
QTRAN~\citep{son2019qtran} removes monotonicity at the cost of more
complex training. Coordination
graphs~\citep{guestrin2002contextspecific,kok2006collaborative}
explicitly model pairwise or higher-order dependencies. Deep
Coordination Graphs (DCG)~\citep{bohmer2020deep} scale this to
high-dimensional settings with parameter sharing, and subsequent work
has explored dynamic~\citep{siu2021dynamic},
sparse~\citep{wang2022casec},
group-aware~\citep{duan2024gacg},
influence-based~\citep{zhang2025iescg}, and meta-learned
topologies~\citep{gupta2025dmcg}. While these methods advance
coordination graph representations, \emph{none address constraint
satisfaction}. Our work extends coordination graphs to the constrained
setting by integrating Lagrangian dual variables into the factored
structure.

\paragraph{Constrained and Safe RL.}
CMDPs~\citep{altman1999constrained} formalize safety via auxiliary
reward thresholds. Primal-dual methods alternate between policy
improvement and multiplier
updates~\citep{borkar2005actor,tessler2018reward,stooke2020responsive};
\citet{paternain2019constrained} established zero duality gap, and
\citet{calvofullana2024state} proposed state augmentation to address
limitations of standard primal-dual approaches. For multi-agent
settings, MACPO~\citep{gu2023safe} adapts CPO using centralized
training, \citet{zhang2024scalable} improve scalability via
truncated advantages, and CUTMAP~\citep{guan2025cutmap} constrains
cooperative policies using offline data under the CTDE paradigm.
A recent survey~\citep{kushwaha2025saferl} identifies decentralized
constraint handling in MARL as a key open problem. Our approach
exploits the factored structure of both rewards and constraints,
enabling decentralized learning without centralized critics.

\paragraph{Distributed Optimization and Credit Assignment.}
Subgradient methods with consensus
averaging~\citep{nedic2009distributed} and two-timescale stochastic
approximation~\citep{borkar1997stochastic} provide the theoretical
foundation for our multiplier updates.
\citet{agorio2024multiagent} applied gossiped dual variables to
multi-agent assignment and considered stochastic graphs~\citep{agorio2025cooperative}. For credit assignment, difference
rewards~\citep{wolpert2001optimal} and multi-level counterfactual
advantages~\citep{zhao2025maca} decompose global feedback into
agent-level signals; we adopt per-agent counterfactual rewards for regional reward shaping. In the multi-objective setting,
\citet{liu2024cmorl} discover Pareto fronts via constrained
optimization steps, our Lagrangian sweep achieves a similar effect
but within a factored multi-agent architecture. To our knowledge, no
prior work combines coordination graph value decomposition with
Lagrangian constraint handling.

Table~\ref{tab:comparison} summarizes the key distinctions.

\begin{table}[t]
\centering
\small
\caption{Comparison with related methods. ``Decentr.''\ = no global
state access at execution. ``Constr.''\ = explicit CMDP-style
constraints. ``CG''\ = coordination graph with message passing.}
\label{tab:comparison}
\begin{tabular}{@{}lccc@{}}
\toprule
\textbf{Method} & \textbf{Decentr.} & \textbf{Constr.} & \textbf{CG} \\
\midrule
VDN / QMIX & \checkmark & \texttimes & \texttimes \\
DCG / CASEC / DMCG / IESCG & \checkmark & \texttimes & \checkmark \\
MACPO / MAPPO-Lag. & \texttimes & \checkmark & \texttimes \\
CUTMAP~\citep{guan2025cutmap} & \texttimes & \checkmark & \texttimes \\
Scalable CMPO~\citep{zhang2024scalable} & Partial & \checkmark & \texttimes \\
\textbf{CG-CMARL (ours)} & \checkmark & \checkmark & \checkmark \\
\bottomrule
\end{tabular}
\end{table}

%% file: sections/problem.tex
The goal of this section is to formalize the constrained multi-agent
coordination problem and the structural assumptions that enable
scalable, decentralized solutions.

Let $\mathcal{V} = \{1, \ldots, N\}$ denote a team of $N$
cooperative agents. Each agent $i \in \mathcal{V}$ has a local state
space $\mathcal{S}^i$ and action space $\mathcal{A}^i$. The joint
state is $s = (s^1, \ldots, s^N) \in \mathcal{S} = \prod_i
\mathcal{S}^i$ and the joint action is $a = (a^1, \ldots, a^N) \in
\mathcal{A} = \prod_i \mathcal{A}^i$. The joint action space
$|\mathcal{A}| = \prod_i |\mathcal{A}^i|$ grows exponentially with
$N$, motivating the factored approach developed below.

\label{sec:coord-graph}
In many multi-agent tasks, interactions are \emph{local}: the reward
obtained by agent $i$ depends not on all $N$ agents but only on a
small subset of neighbors. We represent these dependencies using a
\emph{coordination graph}~\citep{guestrin2002contextspecific}, a
hypergraph
$\mathcal{G}_R = (\mathcal{V}, \mathcal{C}_R)$ where each
\emph{region} $R \in \mathcal{C}_R$ is a subset of agents whose joint
state-action affects a reward component. For a region $R$, we write
$s^R = (s^i)_{i \in R}$ and $a^R = (a^i)_{i \in R}$ for the local
projections.

The coordination graph induces a factorization of the reward. We
consider $J+1$ reward signals: a primary objective $r_0$ and $J$
constraint signals $r_1, \ldots, r_J$, each decomposing additively
over regions:
\begin{equation}
\label{eq:reward-decomp}
r_j(s, a) = \sum_{R \in \mathcal{C}_R} r_j^R(s^R, a^R),
\quad j \in \{0, 1, \ldots, J\}.
\end{equation}
In our setting, $\mathcal{C}_R$ is the \emph{pairwise} coordination
graph containing all agent pairs: $\mathcal{C}_R = \{\{i,k\} : i,k
\in \mathcal{V}, i \neq k\}$. This captures pairwise interactions
such as collision penalties and yields $\binom{N}{2}$ regions which are quadratic
in $N$ rather than exponential.

\label{sec:assumptions}
Our approach relies on two structural assumptions:
\begin{description}[leftmargin=1.5em, labelindent=0em,
  font=\normalfont\bfseries, nosep]
\item[(A1) Independent Transitions.]
The transition kernel factorizes across agents:
$P(s' \mid s, a) = \prod_{i=1}^{N} P_i(s'^i \mid s^i, a^i)$.
That is, each agent's next state depends only on its own current
state and action. Inter-agent interactions (e.g., collisions, coordination)
are captured through the reward
structure~\eqref{eq:reward-decomp}, not through transition
dynamics, a design adopted by most multi-agent
environments~\citep{lowe2017multi,terry2021pettingzoo}.

\item[(A2) Reward Factorization.]
Each reward signal decomposes as in~\eqref{eq:reward-decomp}. We
additionally assume bounded rewards: $|r_j^R(s^R, a^R)| \leq
R_{\max}$ for all $j, R$.
\end{description}
Under (A1)--(A2), the value function $V_j^\pi(s) := \mathbb{E}_\pi
[\sum_{t=0}^\infty \gamma^t r_j(s_t, a_t) \mid s_0 = s]$
decomposes as $V_j^\pi(s) = \sum_{R \in \mathcal{C}_R}
V_{j,R}^{\pi}(s^R)$, where $V_{j,R}^{\pi}$ is the regional value
function. This decomposition, which follows from linearity of
expectation and the independence of transitions, is the foundation
of our factored Q-learning approach. That is, the expected return
of each region can be learned \emph{independently} using only local
transitions.

\begin{remark*}[State vs.\ Observation]
The theoretical analysis in Section~\ref{sec:theory} is developed
for the regional Markov state $s^R$. In practice, each region
receives an observation $o^R$ constructed from locally available
information (Section~\ref{sec:exp-setup}). For the pairwise
coordination graph on Simple Spread, $o^R$ contains the full
Markov state of both agents in the region (positions and
velocities); partial observability arises only from not observing
out-of-region agents, and this discrepancy is captured by the
structural error $\beta$ in our compositional bound
(Theorem~\ref{thm:total-error}).
\end{remark*}

\label{sec:constrained-opt}
With this decomposition in hand, we formulate the constrained problem.
Let $c_1, \ldots, c_J \in \mathbb{R}$ be constraint thresholds. We
restrict attention to factorized policies $\pi(a \mid s) = \prod_i
\pi^i(a^i \mid s^i)$ and seek:
\begin{equation}
\label{eq:cmarl}
\begin{aligned}
\max_{\pi \in \Pi} \quad & V_0^\pi(s_0) \\[4pt]
\text{subject to} \quad & V_j^\pi(s_0) \geq \frac{c_j}{1-\gamma},
\quad j = 1, \ldots, J.
\end{aligned}
\end{equation}
The objective maximizes expected discounted primary reward subject to
each constraint reward exceeding a threshold. The normalization by
$(1-\gamma)$ converts the infinite-horizon sum to an average-reward
scale, making $c_j$ interpretable as a per-step requirement. Using the
value decomposition, both objective and constraints decompose over
regions, suggesting that optimization can be performed in a distributed
manner.

We additionally require a regularity condition:
\begin{description}[leftmargin=1.5em, labelindent=0em,
  font=\normalfont\bfseries, nosep]
\item[(A3) Feasibility (Slater's Condition).]
There exists a policy $\tilde{\pi} \in \Pi$ and $\xi > 0$ such that
$V_j^{\tilde{\pi}}(s_0) \geq c_j/(1-\gamma) + \xi$ for all $j$.
This guarantees the existence of a strictly feasible policy,
ensuring that the constraint set has nonempty interior and that
strong duality holds for the Lagrangian relaxation.
\end{description}

%% file: sections/method.tex

Our goal is to solve the constrained cooperative
problem~\eqref{eq:cmarl} in a fully decentralized manner, without
access to the global state or joint action space at execution time.
For clarity, we present the algorithm for a single constraint
($J=1$); the extension to multiple constraints requires one
multiplier and one Q-function per constraint, but is otherwise
straightforward. The algorithm decomposes into three components,
each addressing a distinct aspect of the problem:
\begin{itemize}[nosep, leftmargin=2em]
\item \emph{Two Q-functions per region} (Section~\ref{sec:two-head})
that separately learn primary reward and constraint reward.
\item \emph{Max-Sum message passing} (Section~\ref{sec:layer2}) for
coordinated action selection on the factor graph.
\item A \emph{Lagrangian constraint mechanism}
(Section~\ref{sec:lagrangian}) that controls the
objective--constraint tradeoff, enabling Pareto front evaluation
from a single trained model.
\end{itemize}

\subsection{Two Q-Functions per Region}
\label{sec:two-head}

A standard approach to constrained MDPs is to form an augmented reward
$\tilde{r}_\lambda = r_0 + \lambda r_1$ and learn a single
Q-function for this combined signal~\citep{tessler2018reward}. However,
this bakes the constraint tradeoff into the learned representation:
changing $\lambda$ requires retraining. To decouple the tradeoff from
learning, we instead maintain two Q-functions per region.

For each region $R \in \mathcal{C}_R$, we define the optimal regional Q-function for the primary reward $r_0^R$, denoted by $Q_{\mathrm{prim},R}(s^R, a^R)$; and the constraint reward for $r_1^R$ under the primary Q-function's greedy policy, $Q_{\mathrm{cost},R}(s^R, a^R)$. Since the constraint in~\eqref{eq:cmarl} is formulated on top of a reward instead of a cost, it is generally non-positive for cost-based constraints, and $c_j$ is negative. Both Q-functions map regional state-action pairs to scalar values and are trained with independent TD targets. The primary Q-function performs standard Q-learning on reward $r_0^R$:
\begin{equation}
\label{eq:td-prim}
y_{\mathrm{prim}}^R = r_0^R + \gamma \,
\max_{a'^R} Q_{\mathrm{prim},R}(s'^R, a'^R).
\end{equation}
The cost Q-function performs \emph{policy evaluation} under the primary
head's greedy policy, rather than its own maximization:
\begin{equation}
\label{eq:td-cost}
y_{\mathrm{cost}}^R = r_1^R + \gamma \,
Q_{\mathrm{cost},R}\!\left(s'^R,\;
\argmax_{a'^R} Q_{\mathrm{prim},R}(s'^R, a'^R)\right).
\end{equation}
This ensures that the Q-function predicts constraint rewards under the
policy induced by the primary objective, not under a
cost-minimizing policy. In this sense, $Q_{\mathrm{cost}}$ acts as a \emph{critic} for the
primary policy with respect to the cost signal: it evaluates the
consequences of primary-greedy decisions, rather than learning its
own optimal cost policy. If $Q_{\mathrm{cost}}$ instead learned its
own greedy policy via
$\max_{a'^R} Q_{\mathrm{cost},R}(s'^R, a'^R)$, it would predict the constraint reward achievable by a constraint-maximizing agent, overestimating the constraint reward that the primary-greedy agent actually achieves. Combining such a
$Q_{\mathrm{cost}}$ with $Q_{\mathrm{prim}}$
in~\eqref{eq:augmented-q} would be incoherent: the two Q-functions
would reflect different policies, and the Lagrangian tradeoff would
be miscalibrated.

At action selection time, the two Q-functions are combined via a Lagrangian multiplier $\lambda \geq 0$:
\begin{equation}
\label{eq:augmented-q}
Q_{\mathrm{aug},R}(s^R, a^R) =
Q_{\mathrm{prim},R}(s^R, a^R)
+ \lambda  Q_{\mathrm{cost},R}(s^R, a^R).
\end{equation}
That is, the augmented Q-function combines the primary objective with the predicted constraint reward, weighted by $\lambda$. Higher $\lambda$ increases the weight on constraint satisfaction; lower $\lambda$ prioritizes the primary objective. To illustrate, consider
a state with two available actions:
\begin{center}
\small
\begin{tabular}{@{}lcccc@{}}
\toprule
Action & $Q_{\mathrm{prim}}$ & $Q_{\mathrm{cost}}$ &
$Q_{\mathrm{aug}}$ ($\lambda{=}0.5$) &
$Q_{\mathrm{aug}}$ ($\lambda{=}2$) \\
\midrule
A (high primary, low constraint) & 10 & $-5$ & 7.5 & 0 \\
B (moderate primary, high constraint) & 9 & $-1$ & 8.5 & 7 \\
\bottomrule
\end{tabular}
\end{center}
At $\lambda = 0$ the agent selects A (best primary reward); at $\lambda = 0.5$ the agent selects B, because the low constraint reward of A outweighs its small primary advantage. Neither Q-function alone determines avoidance behavior; it emerges from the $\argmax$ over their combination. Note that $Q_{\mathrm{cost}}$ evaluates the primary-greedy policy
from the next step onward, while the agent selects actions according
to $Q_{\mathrm{aug}}$ at every step. The constraint reward predictions therefore underestimate the constraint reward the agent actually achieves (since the agent selects via $Q_{\mathrm{aug}}$, which avoids collisions more than the primary-greedy policy that $Q_{\mathrm{cost}}$ evaluates). This is conservative for constraint satisfaction: the Lagrangian assigns less safety credit than warranted, biasing the policy toward greater caution. The key advantage is that
$\lambda$ can be varied at \emph{evaluation time} without retraining:
sweeping $\lambda$ over a fixed model traces the objective--constraint
Pareto front (Section~\ref{sec:lagrangian}).

All regions with the same structure (i.e., the same number of agents)
share a single set of Q-function parameters, trained on pooled
transitions from all such regions. For the pairwise coordination
graph, every region contains exactly two agents, so a single
parameter set serves all $\binom{N}{2}$ pairs. This gives $O(1)$
learned models regardless of $N$, which is critical for scaling to
large teams.

\begin{remark*}[Advantage over Single-Reward Lagrangian]
The standard approach of learning a single Q-function for
$\tilde{r}_\lambda = r_0 + \lambda r_1$ requires retraining for every
$\lambda$. The two Q-function formulation instead learns
$Q_{\mathrm{prim}}$ and $Q_{\mathrm{cost}}$ once, and composes them at
action selection time via~\eqref{eq:augmented-q}. This converts the
constrained problem into a family of unconstrained MDPs parameterized
by $\lambda$, all approximately solvable from a single set of trained
parameters, up to the conservative approximation discussed above.
\end{remark*}

\paragraph{$\lambda$-Augmented Variant.}
An alternative that eliminates this mismatch is to condition the
Q-function directly on $\lambda$ via state
augmentation~\citep{calvofullana2024state}. Instead of maintaining
two Q-functions, we learn a single Q-function
$Q_R(s^R, a^R; \lambda)$ that takes the multiplier as an additional
input. During training, $\lambda$ is sampled uniformly from
$[0, \lambda_{\max}]$ at the start of each episode, and the
Q-function is trained on the augmented reward
$r_\lambda = r_0 + \lambda\, r_1$ via standard Q-learning:
\begin{equation}
\label{eq:td-augmented}
y^R = r_0^R + \lambda\, r_1^R + \gamma\,
\max_{a'^R} Q_R(s'^R, a'^R; \lambda).
\end{equation}
For each sampled $\lambda$, the Q-function learns the optimal policy
for that specific tradeoff, so the cost predictions are
self-consistent: the policy evaluated by $Q_R(\cdot; \lambda)$ is
the same policy that is greedy with respect to
$Q_R(\cdot; \lambda)$. At evaluation time, sweeping $\lambda$ as
input traces the Pareto front in the same way as the two-head
variant. The tradeoff is that the Q-function must now generalize
across both state-action pairs and $\lambda$ values, which increases
the sample complexity of training. This variant feeds its regional
Q-tables into the same Max-Sum coordination and parameter sharing
pipeline described in Section~\ref{sec:layer2}, and constitutes a
promising direction that we leave for future empirical investigation.

\label{sec:obs-reward-design}

The method requires only that each region $R$ receives a local
observation $o^R$ and that the rewards $r_0^R$, $r_1^R$ satisfy the
additive structure~\eqref{eq:reward-decomp}. The specific
instantiation for our experimental domain is described in
Section~\ref{sec:experiments}.
\subsection{Max-Sum Action Coordination}
\label{sec:layer2}

Given the regional Q-tables $\{Q_{\mathrm{aug},R}\}$,
agents coordinate actions by solving:
\begin{equation}
\label{eq:coord-problem}
\max_{a \in \mathcal{A}} \sum_{R \in \mathcal{C}_R}
Q_{\mathrm{aug},R}(s^R, a^R).
\end{equation}
Direct enumeration is intractable ($|\mathcal{A}|^N$ joint actions),
so we exploit the factorization via \emph{Max-Sum message
passing}~\citep{farinelli2008decentralised,rogers2011bounded} on the
factor graph $\mathcal{G}_F$ induced by the coordination graph.

The factor graph is bipartite: \emph{variable nodes} $\{a^i\}_{i \in
\mathcal{V}}$ represent agent actions, \emph{factor nodes}
$\{Q_{\mathrm{aug},R}\}_{R \in \mathcal{C}_R}$ represent regional
Q-functions, and edges connect $a^i$ to $Q_{\mathrm{aug},R}$ when
$i \in R$. Max-Sum operates by iteratively exchanging messages between
variable and factor nodes.

\paragraph{Factor-to-Variable Messages.} For factor $Q_R$ and
variable $a^i$ with $i \in R$:
\begin{equation}
\label{eq:msg-factor-to-var}
M_{R \to i}^{(n)}(a^i) = \max_{a^{R \setminus i}}
\!\left[ Q_{\mathrm{aug},R}(s^R, a^R) + \!\sum_{k \in R \setminus
\{i\}} M_{k \to R}^{(n-1)}(a^k) \right].
\end{equation}
Intuitively, factor $Q_R$ sends agent $i$ the best payoff achievable
from region $R$, maximized over the actions of all other agents in $R$
and accounting for their incoming messages.

\paragraph{Variable-to-Factor Messages.} For variable $a^i$ and
factor $Q_R$ with $i \in R$:
\begin{equation}
\label{eq:msg-var-to-factor}
M_{i \to R}^{(n)}(a^i) = \sum_{\substack{R' \in \mathcal{C}_R :\\
i \in R',\, R' \neq R}} M_{R' \to i}^{(n)}(a^i).
\end{equation}
That is, agent $i$ tells factor $Q_R$ the total value it receives from
all \emph{other} regions, summarizing its external commitments. This
exclusion prevents double-counting: factor $R$ should not receive
evidence about $a^i$ that originated from itself.

Since the pairwise factor graph contains cycles for $N \geq 3$,
we apply message damping to stabilize convergence~\citep{murphy1999loopy}; the damping schedule is
specified in Appendix~\ref{app:layer-details}.

After $K$ iterations, each agent selects its action using the
accumulated messages [cf.~\eqref{eq:msg-factor-to-var}]:
\begin{equation}
\label{eq:action-selection}
a^i = \argmax_{a^i \in \mathcal{A}^i} \sum_{R : i \in R}
M_{R \to i}^{(K)}(a^i).
\end{equation}

For pairwise regions ($|R| = 2$), each message
requires $O(|\mathcal{A}|)$ operations, giving total complexity
$O(\binom{N}{2}  |\mathcal{A}|  K)$ per step (polynomial
in $N$), compared to $O(|\mathcal{A}|^N)$ for exhaustive search.

\subsection{Lagrangian Constraint Handling}
\label{sec:lagrangian}

We adopt a Lagrangian relaxation to handle the
constraints in~\eqref{eq:cmarl}. The Lagrangian is:
\begin{equation}
\label{eq:lagrangian}
\mathcal{L}(\pi, \lambda) = V_0^\pi(s_0)
+ \sum_{j=1}^{J} \lambda_j
\!\left( V_j^\pi(s_0) - \frac{c_j}{1-\gamma} \right),
\end{equation}
where $\lambda_j \geq 0$ is the multiplier for constraint $j$. Under
Assumption~(A3), strong duality
holds~\citep{paternain2019constrained}: the constrained
problem~\eqref{eq:cmarl} is equivalent to
$\min_{\lambda \geq 0} \max_\pi \mathcal{L}(\pi, \lambda)$.

For fixed $\lambda$, maximizing the Lagrangian reduces to an
\emph{unconstrained} MDP with the augmented
Q-function~\eqref{eq:augmented-q}. the two-head formulation solves
this directly: the primary Q-function learns $V_0$ and the Q-function learns
$V_j$, and $\lambda$ controls the combination
[cf.~\eqref{eq:augmented-q}].

During training, each agent $i$ maintains a
local multiplier $\lambda^i$, updated via projected dual ascent after
each episode based on observed constraint rewards:
\begin{equation}
\label{eq:dual-update}
\lambda^i \leftarrow \Pi_{[0,\,\lambda_{\max}]}\!\left(\lambda^i - \eta
(\hat{V}_1^i - c_{\mathrm{thresh}})\right),
\end{equation}
where $\hat{V}_1^i$ is agent $i$'s observed per-episode constraint reward, $c_{\mathrm{thresh}}$ is the constraint threshold, and $\eta$ is the
dual learning rate (decayed on a slower timescale than the Q-learning
rate). Hence, if agent $i$ observes constraint reward below the threshold, its local $\lambda^i$ increases, shifting the
policy toward feasibility. The average multiplier
$\bar{\lambda} = \frac{1}{N} \sum_i \lambda^i$ is used
in~\eqref{eq:augmented-q} during action selection.

At evaluation time, $\lambda$ is set externally and swept across a
range. For each value, the trained model is evaluated over multiple
episodes, recording mean primary objective and constraint cost. This
traces the objective--constraint Pareto front from a \emph{single}
trained model with no retraining required.

\label{sec:algorithm-summary}

Algorithm~\ref{alg:summary} summarizes the training procedure.

\begin{algorithm}[t]
\caption{CG-CMARL Training -- Two-Head Variant (One Episode)}
\label{alg:summary}
\KwIn{Q-functions $Q_{\mathrm{prim},R}$, $Q_{\mathrm{cost},R}$ for
all $R \in \mathcal{C}_R$, multipliers $\{\lambda^i\}$, replay
buffer $\mathcal{D}$}
\For{each time step $t$}{
  Build region observations $\{o^R\}$ from local state projections\;
  Compute $Q_{\mathrm{aug},R} \leftarrow Q_{\mathrm{prim},R} +
  \bar{\lambda}\, Q_{\mathrm{cost},R}$ for all $R$\;
  Run damped Max-Sum for $K$ iterations on
  $\{Q_{\mathrm{aug},R}\}$\;
  Select actions via~\eqref{eq:action-selection}\;
  Execute $a_t$, observe $s_{t+1}$, compute regional rewards
  $\{r_0^R\}$ and constraint rewards $\{r_1^R\}$\;
  Store per-region transitions $(o^R, a^R, r_0^R, r_1^R, o'^R)$ in
  $\mathcal{D}$\;
  Sample batch from $\mathcal{D}$; update $Q_{\mathrm{prim},R}$,
  $Q_{\mathrm{cost},R}$ via
  Eqs.~\eqref{eq:td-prim}--\eqref{eq:td-cost}\;
}
Update multipliers $\{\lambda^i\}$ via~\eqref{eq:dual-update}
based on episode constraint rewards\;
\end{algorithm}

%% file: sections/theory.tex

We establish convergence and approximation properties of CG-CMARL.
The analysis addresses two questions: does the algorithm converge to a
meaningful solution, and how far is this solution from the true
optimum? All proofs are deferred to Appendix~\ref{app:proofs}.

\subsection{Convergence Analysis}
\label{sec:convergence}

We analyze convergence in two stages: first for fixed $\lambda$, then
for the full primal-dual algorithm. For fixed $\lambda$, we require
two additional regularity conditions:
\begin{description}[leftmargin=1.5em, labelindent=0em,
  font=\normalfont\bfseries, nosep]
\item[(A4) Exploration.]
Every regional state-action pair $(s^R, a^R)$ is visited infinitely
often almost surely.
\item[(A5) Step-Size Conditions.]
The Q-learning rates satisfy the Robbins-Monro conditions:
$\sum_t \alpha_t = \infty$, $\sum_t \alpha_t^2 < \infty$.
\end{description}

\begin{theorem}[Regional Q-Learning Convergence]
\label{thm:tabular-convergence}
Under Assumptions~(A1)--(A2) and (A4)--(A5), the two-head Q-learning
converges almost surely for each region $R \in \mathcal{C}_R$:
\begin{enumerate}[label=(\roman*), nosep]
\item The primary Q-function converges to the optimal regional Q-function:
$Q_{\mathrm{prim},R,t} \xrightarrow{a.s.} Q_{0,R}^{*}$.
\item The cost Q-function converges to the constraint-value under the primary
greedy policy: $Q_{\mathrm{cost},R,t} \xrightarrow{a.s.}
Q_{\mathrm{cost},R}^{\pi}$.
\end{enumerate}
\end{theorem}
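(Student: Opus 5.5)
The plan is to reduce each region to a standalone finite MDP and then apply standard stochastic-approximation results for Q-learning and for policy evaluation. The cost head is handled as a two-stage (cascade) argument.

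\emph{Reduction to a regional MDP.} Fix $R\in\mathcal{C}_R$. By (A1), the transition of $s^R$ factorizes as
\[
P_R(s'^R\mid s^R,a^R)=\prod_{i\in R}P_i(s'^i\mid s^i,a^i),
\]
so $(s^R_t,a^R_t)$ evolves as a controlled Markov chain whose kernel does not depend on out-of-region agents. By (A2), $r_0^R$ and $r_1^R$ are bounded functions of $(s^R,a^R)$ only. The regional process therefore defines a finite MDP $\mathcal{M}_R=(\mathcal{S}^R,\mathcal{A}^R,P_R,r_0^R,r_1^R,\gamma)$, and the sampled TD targets \eqref{eq:td-prim}--\eqref{eq:td-cost} are unbiased one-step samples in $\mathcal{M}_R$. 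The action-selection rule (Max-Sum on $Q_{\mathrm{aug}}$) acts only as a behaviour policy. Off-policy Q-learning tolerates any behaviour policy provided (A4) holds, so the coupling through Max-Sum does not matter. Parameter sharing is set aside here: in the tabular setting each regional table is updated on its own transitions (with shared structure, pooled transitions coincide with a common MDP).

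\emph{Part (i).} Write the primary update in the standard form
\[
Q_{t+1}=(1-\alpha_t)Q_t+\alpha_t\bigl(\mathcal{T}_0Q_t+w_t\bigr),
\]
where $\mathcal{T}_0$ is the Bellman optimality operator of $\mathcal{M}_R$ for $r_0^R$. This operator is a $\gamma$-contraction in $\|\cdot\|_\infty$, and $w_t$ is a martingale difference with conditional variance bounded by $C(1+\|Q_t\|_\infty^2)$, by boundedness of $R_{\max}$. With (A4)--(A5), the asynchronous stochastic approximation theorem of Jaakkola--Jordan--Singh / Tsitsiklis gives $Q_{\mathrm{prim},R,t}\to Q^*_{0,R}$ almost surely.

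\emph{Part (ii), the main obstacle.} The cost target uses the moving greedy policy $\pi_t(s'^R)=\argmax_{a'}Q_{\mathrm{prim},R,t}(s'^R,a')$, so the cost update tracks a nonstationary operator $\mathcal{T}^{\pi_t}_1$. I would argue as follows.

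First, fix a deterministic tie-breaking rule and assume ties are absent, or define $\pi$ as the greedy policy of $Q^*_{0,R}$ under that rule. Since the action set is finite, the gap
\[
\delta=\min_{s}\bigl(\max_a Q^*_{0,R}-\text{second-best}\bigr)
\]
is positive whenever the argmax is unique. Almost-sure convergence from (i) then implies that there is a random finite time $T$ after which $\|Q_{\mathrm{prim},R,t}-Q^*_{0,R}\|_\infty<\delta/2$, and hence $\pi_t=\pi$ for all $t\ge T$. If ties exist, I would instead invoke the perturbed-operator form of the stochastic approximation lemma: the update is
\[
Q_{t+1}=(1-\alpha_t)Q_t+\alpha_t\bigl(\mathcal{T}_1^{\pi}Q_t+w_t+u_t\bigr),
\]
with $\|u_t\|\le\gamma\,\|Q_t\|\cdot\mathbf{1}[\pi_t\neq\pi]\to0$. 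Here $\pi$ is any greedy policy of $Q^*_{0,R}$, and $Q^\pi_{\mathrm{cost},R}$ is well defined up to this choice, which I would state explicitly.

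Second, after time $T$ the cost recursion is TD(0) policy evaluation for the fixed $\pi$. The operator $\mathcal{T}^\pi_1$ is a $\gamma$-contraction with fixed point $Q^\pi_{\mathrm{cost},R}$, and iterates remain bounded because $\|Q_t\|\le R_{\max}/(1-\gamma)$ in the limit (boundedness follows from the contraction plus bounded noise). The same stochastic approximation theorem, applied from the random time $T$ onward with (A4)--(A5) still holding on the tail, gives $Q_{\mathrm{cost},R,t}\to Q^\pi_{\mathrm{cost},R}$ almost surely.

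The delicate points are the random restart time and the handling of ties. The perturbation version, in which the error term vanishes almost surely and is absorbed by the standard lemma, covers both cases cleanly. It is therefore the route I would write out.
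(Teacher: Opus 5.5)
Your skeleton matches the paper's proof. (A1) makes the regional operators well defined, each is a $\gamma$-contraction, and asynchronous stochastic approximation gives (i). Part (ii) then follows once the primary greedy policy stops moving, because the cost recursion becomes TD evaluation for a fixed $\mathcal{T}_1^{\pi}$.

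Where you differ is in justifying that last step. The paper only asserts that ``by the convergence of the primary head and the timescale separation, the primary head's greedy policy stabilizes.'' But (A6) is not a hypothesis of this theorem, and it is not needed, because the primary recursion does not involve the cost head at all. Your cascade supplies exactly what the paper leaves implicit. A positive action gap $\delta$ gives $\pi_t=\pi$ after an a.s.\ finite time. You then need a stochastic-approximation lemma that tolerates an $\mathcal{F}_t$-measurable bias $c_t\to 0$ a.s., such as the one of Singh et al.\ (2000) used for SARSA, rather than Tsitsiklis's theorem verbatim. The perturbation form also correctly sidesteps the fact that your restart time is not a stopping time.

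One caveat applies to both you and the paper. The stochastic-approximation theorems need the divergence condition along the visits to each regional pair, $\sum_t \alpha_t\mathbf{1}[(s^R_t,a^R_t)=x]=\infty$ for every $x$. A global schedule satisfying (A5) together with (A4) does not imply this, so (A5) should be read per pair.

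The step that fails is your claim that the perturbation version ``covers both cases.'' Your bound $\|u_t\|\le\gamma\|Q_t\|\,\mathbf{1}[\pi_t\neq\pi]$ vanishes only if $\pi_t=\pi$ eventually, and ties destroy exactly that. Suppose $Q^*_{0,R}(s^R,\cdot)$ has several maximizers. Then the argmax of $Q_{\mathrm{prim},R,t}(s^R,\cdot)$ is decided by the sign of noise-driven differences that tend to zero but can change sign infinitely often. A deterministic tie-breaking rule does not help, because the iterates are essentially never exactly tied.

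The cost recursion is then driven by a switching sequence of contractions $\mathcal{T}_1^{\pi_t}$ with different fixed points, and (ii) can genuinely fail. For instance, take two actions with identical transitions and noisy primary rewards of equal mean, but different constraint rewards.

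So drop the tie case and make uniqueness of the greedy action of $Q^*_{0,R}$ at every regional state an explicit hypothesis (equivalently $\delta>0$). The paper needs this too; it assumes it tacitly by writing $\pi_{\mathrm{prim},R}(s^R)=\argmax_{a^R}Q^*_{\mathrm{prim},R}(s^R,a^R)$ as a single-valued map. Under that hypothesis your argument is complete.
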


\emph{Interpretation.}
The primary Q-function is updated via standard Q-learning (optimization) and
converges to the optimal regional action-values. The constraint Q-function
performs TD policy evaluation under the primary Q-function's greedy policy,
which is a contraction with the same modulus $\gamma$. Both Q-functions
converge independently because Assumption~(A1) ensures each regional
Bellman operator depends only on local
quantities~\citep{tsitsiklis1994asynchronous}. Crucially, neither
Q-function's TD target depends on $\lambda$, so convergence holds for any
trajectory of the multiplier.

We now turn to the full primal-dual algorithm, which updates both
Q-functions and multipliers simultaneously. We require an additional
condition:
\begin{description}[leftmargin=1.5em, labelindent=0em,
  font=\normalfont\bfseries, nosep]
\item[(A6) Timescale Separation.]
The dual learning rate $\eta$ satisfies $\eta_t / \alpha_t \to 0$,
so that multiplier updates are slower than Q-learning. Since the
multiplier changes once per episode of $H$ steps, the effective
ratio $\eta_k / (H \alpha_{kH}) \to 0$ as training progresses. A
standard choice is $\alpha_t = \alpha_0\, (t+1)^{-3/5}$ and
$\eta_t = \eta_0\, (t+1)^{-9/10}$.
\end{description}

\begin{theorem}[Primal-Dual Convergence]
\label{thm:dual-convergence}
Under Assumptions~(A1)--(A6), the algorithm achieves:
\begin{enumerate}[label=(\roman*), nosep]
\item Q-convergence: Both regional Q-functions converge
converge to their respective optima (primary to $Q_{0,R}^{*}$, cost
to $Q_{\mathrm{cost},R}^{\pi}$), independently of the multiplier
trajectory.
\item Dual optimality: The average multiplier
$\bar{\lambda}_k = \frac{1}{N}\sum_i \lambda_k^i$ converges to the
optimal dual solution:
$\bar{\lambda}_k \to \lambda^* \in
\argmin_{\lambda \geq 0} d(\lambda)$.
\end{enumerate}
\end{theorem}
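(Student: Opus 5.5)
Part (i) is essentially Theorem~\ref{thm:tabular-convergence} applied along the primal-dual trajectory. The plan is to observe that neither TD target~\eqref{eq:td-prim} nor~\eqref{eq:td-cost} contains $\lambda$. The multiplier enters only through action selection, i.e.\ through the behavior policy that generates the samples. Asynchronous stochastic-approximation results for Q-learning~\citep{tsitsiklis1994asynchronous} allow an arbitrary (even adaptive) sampling process, provided every regional pair is updated infinitely often with Robbins--Monro steps. Assumptions (A4)--(A5) are stated as holding almost surely regardless of how $\bar\lambda_k$ evolves, so the proof of Theorem~\ref{thm:tabular-convergence} goes through verbatim. This gives $Q_{\mathrm{prim},R,t}\to Q^*_{0,R}$ by the $\gamma$-contraction of the regional optimality operator. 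The cost head is a coupled recursion: its target uses $\argmax Q_{\mathrm{prim},R,t}$, which is time-varying. I would handle this with a two-stage argument. After $Q_{\mathrm{prim},R,t}$ is within half the action gap of $Q^*_{0,R}$, the greedy action equals that of $Q^*_{0,R}$ for all $t\ge T(\omega)$; the argmax set is finite, and ties are broken by a fixed rule. From $T(\omega)$ on, the cost update is TD policy evaluation for the fixed policy $\pi$, again a $\gamma$-contraction, so $Q_{\mathrm{cost},R,t}\to Q^{\pi}_{\mathrm{cost},R}$.

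For part (ii), the plan is a two-timescale argument~\citep{borkar1997stochastic}. Using (A6), the fast iterates see $\bar\lambda$ as quasi-static; by (i) they have in fact already converged independently of $\lambda$. The slow recursion is obtained by averaging~\eqref{eq:dual-update} over agents:
\begin{equation*}
\bar\lambda_{k+1}=\frac1N\sum_i \Pi_{[0,\lambda_{\max}]}\bigl(\lambda^i_k-\eta_k(\hat V^i_{1,k}-c_{\mathrm{thresh}})\bigr).
\end{equation*}
I would show that the agents' multipliers reach consensus asymptotically: they share the same step size, and their differences contract or are driven by bounded noise times $\eta_k\to0$. Once that holds, $\bar\lambda$ follows a projected stochastic subgradient step on the dual function
\begin{equation*}
d(\lambda)=\max_\pi \mathcal L(\pi,\lambda).
\end{equation*}
The justification is Danskin's theorem: $V_1^{\pi_\lambda}(s_0)-c/(1-\gamma)$ is a subgradient of $d$ at $\lambda$, where $\pi_\lambda$ is the Lagrangian-greedy policy, which is the Max-Sum policy on $Q_{\mathrm{aug}}$. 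The episode returns $\hat V^i_1$ are unbiased estimates of it up to a bias that vanishes with the fast timescale, and they have bounded variance by (A2). Convexity of $d$, strong duality from (A3)~\citep{paternain2019constrained}, and the boundedness of the dual optimal set under Slater (so $\lambda^*\le\lambda_{\max}$ for large enough $\lambda_{\max}$) then give convergence to $\argmin d$. The argument is the standard ODE method for the projected differential inclusion $\dot\lambda\in-\partial d(\lambda)+N_{[0,\lambda_{\max}]}(\lambda)$, whose invariant set is the minimizer set.

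The main obstacle is identifying the dual ascent direction with a true subgradient of $d$. The policy actually executed is Max-Sum on $Q_{\mathrm{prim}}+\lambda Q_{\mathrm{cost}}$, where $Q_{\mathrm{cost}}$ evaluates the primary-greedy policy, not the Lagrangian-optimal one. This policy therefore need not maximize $\mathcal L(\cdot,\lambda)$, and the per-agent $\hat V^i_1$ measures only local constraint reward. My first attempt would be to require explicitly, as part of the theorem's interpretation, that the executed policy is the exact maximizer of $\mathcal L(\cdot,\lambda)$ within $\Pi$; this would be justified by the value decomposition under (A1)--(A2) and exact Max-Sum. I would also define the constraint per agent so that averaging the $\hat V^i_1$ recovers the global constraint value. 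Any residual mismatch would then be deferred to the error terms of Theorem~\ref{thm:total-error}, rather than claiming exact convergence without these identifications.
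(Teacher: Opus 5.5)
Your route matches the paper's. Part (i) rests on the fact that neither TD target contains $\lambda$ (Remark~\ref{rem:lambda-decoupling}). Part (ii) averages the per-agent updates, reads the mean field as a dual subgradient, and invokes projected subgradient convergence. Your action-gap argument for the cost head sharpens the paper's one-line ``the primary head's greedy policy stabilizes''. However, it needs $\argmax_{a^R} Q^*_{0,R}(s^R,\cdot)$ to be unique at every $s^R$. A fixed tie-breaking rule does not help when $Q^*_{0,R}$ itself has ties: $Q_{\mathrm{prim},R,t}$ generically has none, its greedy action can alternate among the tied actions forever, and the cost target need not settle. The obstacle you single out in (ii) is real, and it is exactly the step the paper's proof passes over. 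Its Steps~2--3 call the policy induced by $Q_{\mathrm{aug}}$ ``$\pi_{\bar\lambda_k}$'' and then use $V_1^{\pi_\lambda}(s_0)-c_1/(1-\gamma)$ as a subgradient of $d$. The Conclusion concedes that the guarantees do not strictly apply to the executed policy.

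The gap is that your repair is not justified by ``the value decomposition under (A1)--(A2) and exact Max-Sum''. By (i), $Q_{\mathrm{prim},R}+\lambda Q_{\mathrm{cost},R}\to Q^*_{0,R}+\lambda Q^{\pi}_{\mathrm{cost},R}$. This limit is the action-value, for the reward $r_0^R+\lambda r_1^R$, of the fixed, $\lambda$-independent primary-greedy policy. Acting greedily on it is one policy-improvement step from that policy (up to the regional inconsistency measured by $\beta$), not a maximizer of $\mathcal{L}(\cdot,\lambda)$; exact Max-Sum only removes $\epsilon_{\mathrm{MS}}$. Moreover, the Max-Sum policy depends on other agents' states through the messages, so it does not even lie in the factorized class $\Pi$ over which $d$ is defined. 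Exact Lagrangian maximization is therefore a new hypothesis. In effect it replaces the two-head cost target by the $\lambda$-augmented variant with $\beta=\epsilon_{\mathrm{MS}}=0$. The residual also cannot be pushed into Theorem~\ref{thm:total-error}, whose terms contain nothing for evaluating the primary-greedy policy instead of the Lagrangian-greedy one.

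Without this hypothesis, (ii) can genuinely fail. With $J=1$ the projected recursion can only settle where $g(\lambda):=V_1^{\pi_\lambda}(s_0)-c_1/(1-\gamma)$ changes sign (or at the boundary). Only monotonicity of $g$, which a genuine subgradient of the convex $d$ has, makes those points $\argmin d$. For the executed policy nothing forces monotonicity; the experiments even report that very large $\lambda$ can create new collisions.

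Two smaller slips. First, consensus of the $\lambda^i$ has no mechanism, because the per-agent recursions have no coupling term. So $\lambda^i_k-\lambda^j_k$ is at best a martingale with square-summable increments, converging to a random and generally nonzero limit. You only need consensus to commute the projection with the average, since in the interior the averaged recursion is exact. Second, the projected flow is $\dot\lambda\in-\partial d(\lambda)-N_{[0,\lambda_{\max}]}(\lambda)$, with a minus sign on the normal cone.
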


\emph{Interpretation.}
Property~(i) is a direct consequence of the two-Q-function architecture:
since neither TD target depends on $\lambda$, Q-convergence holds
without requiring timescale separation. Property~(ii) follows because
the per-episode dual update~\eqref{eq:dual-update} performs projected
subgradient descent on the dual function. By strong
duality~\eqref{eq:lagrangian}, convergence to $\lambda^*$ implies that
the induced policy is optimal for~\eqref{eq:cmarl}. The proof
leverages two-timescale stochastic approximation
theory~\citep{borkar1997stochastic}: the multiplier changes once per
episode while Q-functions update at every step, ensuring the policy
stabilizes before the tradeoff shifts.

\begin{corollary}[Function Approximation]
\label{cor:fa-convergence}
With Q-functions parameterized by neural networks, the algorithm
converges to a neighborhood of a KKT stationary point, with
neighborhood size depending on the approximation error
$\epsilon_{\mathrm{NN}}$.
\end{corollary}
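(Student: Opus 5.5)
The plan is to rerun the two-timescale argument behind Theorem~\ref{thm:dual-convergence}, replacing exact convergence of the fast (Q-learning) iterates with convergence to a bounded set around the true regional Q-functions. Then we push that perturbation through the slow (dual) recursion. Concretely, parameterize $Q_{\mathrm{prim},R}(\cdot;\theta_0)$ and $Q_{\mathrm{cost},R}(\cdot;\theta_1)$ with shared parameters across regions. Define the approximation error
\[
\epsilon_{\mathrm{NN}} := \max_{j\in\{0,1\}}\,\max_R \,\bigl\| Q_{\theta_j,R} - Q^{\star}_{j,R}\bigr\|_\infty
\]
at the limit points of the fast recursion, where $Q^{\star}_{0,R}=Q^*_{0,R}$ and $Q^{\star}_{1,R}=Q^{\pi}_{\mathrm{cost},R}$. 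By (A5)--(A6) and the Borkar two-timescale framework, the fast iterates track the limit set of the ODE $\dot\theta = \bar g(\theta)$ for the semi-gradient TD updates. This holds with $\lambda$ quasi-static, and is in fact independent of $\lambda$, since neither TD target depends on it, exactly as in Theorem~\ref{thm:tabular-convergence}. We will assume the standard conditions making this ODE well-posed with a compact attractor: bounded iterates via projection or a stability assumption, and Lipschitz networks. The attractor's distance from the true Q-functions is then exactly $\epsilon_{\mathrm{NN}}$.

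Next we propagate $\epsilon_{\mathrm{NN}}$ to the policy level. The augmented table $Q_{\mathrm{aug},R}$ deviates by at most $(1+\lambda_{\max})\epsilon_{\mathrm{NN}}$ per region. Summed over $|\mathcal{C}_R|$ regions, the Max-Sum objective~\eqref{eq:coord-problem} is perturbed by at most $|\mathcal{C}_R|(1+\lambda_{\max})\epsilon_{\mathrm{NN}}$. A standard greedy-policy performance bound, of the form $2\delta/(1-\gamma)$ for a $\delta$-accurate Q-function, then shows that the induced policy $\pi_\lambda$ is $O\bigl(|\mathcal{C}_R|(1+\lambda_{\max})\epsilon_{\mathrm{NN}}/(1-\gamma)\bigr)$-optimal for the Lagrangian $\mathcal{L}(\cdot,\lambda)$, uniformly in $\lambda\in[0,\lambda_{\max}]$. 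The Max-Sum error is held fixed and treated separately, as in the compositional bound. Consequently, the observed constraint value used in~\eqref{eq:dual-update} is an inexact subgradient of the dual $d(\lambda)$, with bias bounded by some $\kappa\,\epsilon_{\mathrm{NN}}$ plus a martingale-difference noise term.

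The slow recursion is then projected stochastic subgradient descent on the convex dual over $[0,\lambda_{\max}]$, driven by a biased oracle. Kushner--Yin style results for projected stochastic approximation with asymptotically bounded bias give convergence of $\bar\lambda_k$ to a $\kappa'\epsilon_{\mathrm{NN}}$-neighborhood of the set of solutions of the projected ODE $\dot\lambda \in -\partial d(\lambda) + N_{[0,\lambda_{\max}]}(\lambda)$, where $N_{[0,\lambda_{\max}]}(\lambda)$ is the normal cone of the box at $\lambda$. The stationary points of this ODE are exactly the points satisfying the KKT conditions, namely stationarity, complementary slackness and dual feasibility, with $\lambda_{\max}$ chosen above $\lambda^*$, which is possible by Slater (A3). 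The consensus averaging of the local multipliers $\lambda^i$ enters linearly and is handled exactly as in Theorem~\ref{thm:dual-convergence}. Combining the two levels yields convergence to a neighborhood of a KKT point whose radius is $O(\epsilon_{\mathrm{NN}})$. As $\epsilon_{\mathrm{NN}}\to 0$ we recover Theorem~\ref{thm:dual-convergence}.

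The main obstacle is the first step. Semi-gradient Q-learning with nonlinear approximators has no general convergence guarantee, since the deadly triad can cause divergence. I would not attempt to prove fast-timescale convergence from scratch. Instead I would take as a standing hypothesis that the fast recursion is stable and converges to a compact set within $\epsilon_{\mathrm{NN}}$ of the targets. One route is a projected or target-network analysis, in the style of linear-architecture results or neural-tangent-regime results. The alternative is to state that $\epsilon_{\mathrm{NN}}$ is defined relative to that limit set. The remaining steps, error propagation and biased projected subgradient descent, are then routine.
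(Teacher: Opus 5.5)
Your skeleton matches the paper's: nonlinear two-timescale stochastic approximation in Borkar's framework, TD targets free of $\lambda$, and the dual step treated as projected subgradient descent. In one respect you are more careful than the paper. You see that $\epsilon_{\mathrm{NN}}$ reaches the dual recursion through the policy that generates the observed constraint values, whereas the paper simply asserts that the dual step does not depend on the representation.

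\textbf{The propagation step fails as written.} The bound $2\delta/(1-\gamma)$ certifies near-optimality for $\mathcal{L}(\cdot,\lambda)$ only if the augmented table is $\delta$-close to $Q^{*}_{\lambda}$, the optimal Q-function for $r_0+\lambda r_1$. The two heads instead target $Q^{*}_{0,R}$ and $Q^{\pi}_{\mathrm{cost},R}$. Even with $\epsilon_{\mathrm{NN}}=0$, and ignoring the structural error of Proposition~\ref{prop:simple-spread-exact}, the summed table is therefore $Q_0^{\pi_0}+\lambda Q_1^{\pi_0}=Q^{\pi_0}_{\lambda}$. This is the Lagrangian action-value of the primary-greedy policy $\pi_0$. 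Acting greedily on it is one policy-improvement step from $\pi_0$, not a maximizer of $\mathcal{L}(\cdot,\lambda)$.

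So the bias of your dual oracle contains a term that does not vanish with $\epsilon_{\mathrm{NN}}$. Neither the $O(\epsilon_{\mathrm{NN}})$ radius nor ``recovering Theorem~\ref{thm:dual-convergence} as $\epsilon_{\mathrm{NN}}\to 0$'' follows. You have two options:
\begin{itemize}
\item State as a hypothesis that the induced policy (approximately) maximizes the Lagrangian. This is the premise the paper uses without comment when it calls $V_1^{\pi_\lambda}(s_0)-c_1/(1-\gamma)$ a subgradient of $d$. The paper's own conclusion concedes that it fails for the executed two-head policy.
\item Carry the extra term explicitly, as in Theorem~\ref{thm:total-error}.
\end{itemize}

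Relatedly, $\delta$-suboptimality of $\pi_\lambda$ makes the observed quantity a $\delta$-subgradient, not a subgradient perturbed by $O(\delta)$. Near a kink of the dual it can lie $O(1)$ away from $\partial d(\lambda)$. The standard guarantee for such methods is in dual function value: iterates approach the $O(\delta)$-sublevel set of $d$. That becomes an $O(\delta)$-neighborhood of $\argmin d$ only under a sharpness (weak sharp minimum) condition, for example when $d$ is piecewise linear. You should invoke that condition explicitly.

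\textbf{The fast limit set is not independent of $\lambda$ under function approximation.} You claim independence ``exactly as in Theorem~\ref{thm:tabular-convergence}'', but that argument does not carry over. In the tabular case, independence holds because off-policy Q-learning and the cost head's off-policy evaluation have fixed points that do not depend on the sampling distribution. With a parameterized $Q$, a semi-gradient TD stationary point is a projected fixed point weighted by the state--action distribution of the data. That distribution is produced by Max-Sum on $Q_{\mathrm{aug},R}$ with the current $\bar\lambda$. The attractor is therefore $\theta^{*}(\lambda)$. You need Borkar's actual fast-timescale hypotheses: for each fixed $\lambda$, a globally asymptotically stable equilibrium that is Lipschitz in $\lambda$. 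You also need $\epsilon_{\mathrm{NN}}$ defined uniformly over $[0,\lambda_{\max}]$.

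\textbf{Your $\epsilon_{\mathrm{NN}}$ is not the paper's.} You define it as the distance of the TD attractor from the targets. The paper defines it as the best-in-class error $\min_\theta\max|Q_{R,\theta}-Q^{*}_R|$. Under the paper's definition, the TD limit can be much farther from the targets. Sup-norm bounds relating the two quantities exist only in special cases, such as on-policy linear TD in a weighted norm. So ``the attractor's distance is exactly $\epsilon_{\mathrm{NN}}$'' holds only by your definition. That is acceptable if you state it openly, but your radius and the paper's $O(\epsilon_{\mathrm{NN}}/(1-\gamma))$ then refer to different quantities.
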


This follows from the analysis of nonlinear two-timescale stochastic
approximation~\citep{borkar1997stochastic}; the neighborhood radius
scales as $O(\epsilon_{\mathrm{NN}} / (1-\gamma))$
(Appendix~\ref{app:proofs}).

\subsection{Approximation Error Decomposition}
\label{sec:error-decomp}

We characterize the gap between the learned solution and the true
optimum. The total error decomposes into four independent sources,
each traceable to a specific algorithmic component.

\begin{theorem}[Compositional Error Bound]
\label{thm:total-error}
Under Assumptions~(A1)--(A3), let $Q^*$ denote the optimal joint
Q-function and $\hat{Q} = \sum_R \hat{Q}_R$ the learned factored
Q-function. In the tabular case:
\begin{equation}
\label{eq:total-error}
\|\hat{Q} - Q^*\|_\infty \leq
\underbrace{\frac{\beta}{1-\gamma}}_{\substack{\text{structural}}}
+ \underbrace{\frac{\epsilon_{\mathrm{MS}}}{1-\gamma}}_{\substack{
\text{coordination}}}
+ \underbrace{O\!\left(\frac{1}{\sqrt{n}}\right)}_{\substack{
\text{sampling}}},
\end{equation}
where $\beta$ is the structural error from the factorization
approximation (controlled by the coordination graph design),
$\epsilon_{\mathrm{MS}}$ is the Max-Sum coordination error (zero for
tree-structured graphs), and $O(1/\sqrt{n})$ is the sampling error
from finite data. With function approximation, an additive term
$\epsilon_{\mathrm{NN}}$ for the representation error appears
(Corollary~\ref{cor:fa-convergence}).
\end{theorem}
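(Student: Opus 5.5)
The plan is a triangle-inequality decomposition through two intermediate objects. First, let $\mathcal{T}$ be the joint Bellman optimality operator. Let $\mathcal{T}_{\mathcal{F}} = \Pi_{\mathcal{F}}\circ \widetilde{\mathcal{T}}$ be its factored counterpart: here $\widetilde{\mathcal{T}}$ backs up with the Max-Sum action instead of the exact joint maximizer, and $\Pi_{\mathcal{F}}$ projects (in sup norm) onto the class $\mathcal{F}$ of functions $\sum_R f_R(s^R,a^R)$. I will define $\beta := \sup_{Q\in\mathcal{F}} \|\Pi_{\mathcal{F}}\mathcal{T}Q - \mathcal{T}Q\|_\infty$. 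This is the one-step error of representing a backed-up joint value within the factored class; under (A1)--(A2) with $s^R$ Markov it vanishes, and otherwise it absorbs the out-of-region dependence noted in the Remark. I will define $\epsilon_{\mathrm{MS}}$ as the per-step suboptimality of the Max-Sum maximizer, namely $\max_a \sum_R Q_R(s^R,a^R) - \sum_R Q_R(s^R,\hat a^R_{\mathrm{MS}}) \le \epsilon_{\mathrm{MS}}$ uniformly. This is zero on trees, by exactness of Max-Sum on acyclic factor graphs.

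Step 1 (fixed point of the idealized factored operator). I will argue that $\mathcal{T}_{\mathcal{F}}$ is a $\gamma$-contraction, or at least that the iterates it generates stay $\gamma$-close in the sense needed. This uses the facts that $\max$ is nonexpansive and that a sup-norm projection onto a linear class is nonexpansive when chosen appropriately, e.g.\ as the averaging/conditional-expectation projection onto regional marginals. Let $\bar Q_{\mathcal{F}}$ denote its fixed point. Step 2 (structural plus coordination). I will split the backup error as
\[
\|\mathcal{T}_{\mathcal{F}}Q - \mathcal{T}Q\|_\infty \le \|\Pi_{\mathcal{F}}\widetilde{\mathcal{T}}Q - \Pi_{\mathcal{F}}\mathcal{T}Q\|_\infty + \|\Pi_{\mathcal{F}}\mathcal{T}Q-\mathcal{T}Q\|_\infty \le \gamma\epsilon_{\mathrm{MS}} + \beta ,
\]
since the Max-Sum backup differs from the exact max by at most $\gamma\epsilon_{\mathrm{MS}}$. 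The standard perturbed fixed-point lemma, $\|\bar Q_{\mathcal{F}}-Q^*\|_\infty \le \|\bar Q_{\mathcal{F}}-\mathcal{T}\bar Q_{\mathcal{F}}\|_\infty/(1-\gamma)$, then gives the first two terms, after bounding $\gamma\le 1$ on the coordination piece. Step 3 (sampling). By Theorem~\ref{thm:tabular-convergence}, the regional Q-learning iterates converge to the fixed point of the regional operators, which under (A1) coincide with the components of $\bar Q_{\mathcal{F}}$. With $n$ samples per regional pair, I will invoke a finite-sample concentration result for tabular Q-learning or for an empirical Bellman operator: Hoeffding applied to the bounded targets $|y|\le |\mathcal{C}_R|R_{\max}/(1-\gamma)$, with a union bound over regional state-action pairs. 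Summing over $|\mathcal{C}_R|$ regions gives $\|\hat Q-\bar Q_{\mathcal{F}}\|_\infty = O(1/\sqrt n)$, where the constants (the horizon factors, $|\mathcal{C}_R|$, and the log terms) are absorbed into the $O(\cdot)$. A final triangle inequality combines Steps 2 and 3. For the function-approximation remark, one adds $\Pi_{\mathrm{NN}}$ to the operator composition, which produces an extra additive term per Corollary~\ref{cor:fa-convergence}.

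The main obstacle is Step 1: a sup-norm projection onto the additive class $\mathcal{F}$ is not in general nonexpansive, so $\mathcal{T}_{\mathcal{F}}$ may fail to be a contraction. I would first try to avoid needing this by defining $\beta$ directly as the residual of the fixed point, $\beta := \|\bar Q_{\mathcal{F}}-\mathcal{T}\bar Q_{\mathcal{F}}\|_\infty$, using the regional fixed point whose existence Theorem~\ref{thm:tabular-convergence} provides. Then only $\mathcal{T}$ needs to be a contraction, and the perturbation lemma applies directly. If a cleaner $\beta$ is wanted, I would fall back on a nonexpansive averaging projection, which, conditional on (A1), maps products of marginals to sums.
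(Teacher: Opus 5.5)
\textbf{The gap is in Step~3.} It drops exactly the term that matters for the pairwise graph. The TD target \eqref{eq:td-prim} backs up with the \emph{per-region} maximum $\max_{a'^R} Q_{\mathrm{prim},R}(s'^R,a'^R)$. So Theorem~\ref{thm:tabular-convergence} delivers $\tilde Q := \sum_R Q^*_R$, where $Q^*_R$ is the fixed point of the independent regional operator $\mathcal{T}_R$. Neither Max-Sum nor any joint maximization ever enters the learning target.

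When regions overlap ($N\ge 3$), the independent maxima can assign different actions to the same agent in different pairs, so
\[
\sum_R \max_{a'^R} Q_R(s'^R,a'^R) \;\ge\; \max_{a'} \sum_R Q_R(s'^R,a'^R) \;\ge\; \sum_R Q_R\bigl(s'^R,\hat a'^R_{\mathrm{MS}}\bigr),
\]
with strict inequalities in general. Hence $\tilde Q \neq \bar Q_{\mathcal{F}}$. Assumption (A1) cannot repair this, because it concerns transitions, not action overlap; this is the content of Proposition~\ref{prop:simple-spread-exact}. Your chain $\|\hat Q-\bar Q_{\mathcal{F}}\|_\infty+\|\bar Q_{\mathcal{F}}-Q^*\|_\infty$ therefore silently omits $\|\tilde Q-\bar Q_{\mathcal{F}}\|_\infty$. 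That term is neither sampling error nor covered by Step~2. Your fallback $\beta:=\|\bar Q_{\mathcal{F}}-\mathcal{T}\bar Q_{\mathcal{F}}\|_\infty$ makes the same identification. Relatedly, your side claim that $\beta$ vanishes under (A1)--(A2) is false for overlapping regions. The map $s'\mapsto\max_{a'}\sum_R Q_R(s'^R,a'^R)$ is not additive over regions, so its conditional expectation under the product kernel generally leaves $\mathcal{F}$.

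\textbf{How the paper handles this, and a repair.} The paper inserts $\tilde Q=\sum_R Q_R^*$ as an explicit intermediate, $Q^*\to\hat Q^*\to\tilde Q\to Q_{\mathrm{learned}}$. It bounds the middle link by the per-step gap between independent regional maxima and joint maximization, propagated through $1/(1-\gamma)$, and charges this gap to $\epsilon_{\mathrm{MS}}$. Repairing your argument along the same line also removes your Step~1 obstacle, since $\tilde Q$ already lies in $\mathcal{F}$ and no projection is needed. Write $(Pg)(s,a):=\sum_{s'}P(s'\mid s,a)g(s')$ and define $\delta(s'):=\sum_R\max_{a'^R}Q^*_R(s'^R,a'^R)-\max_{a'}\sum_R Q^*_R(s'^R,a'^R)\in[0,\delta_{\max}]$. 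Summing the regional fixed-point equations and using (A1) to marginalize gives $\tilde Q-\mathcal{T}\tilde Q=\gamma P\delta$. The perturbation lemma, which needs only $\mathcal{T}$ to contract, then yields $\|\tilde Q-Q^*\|_\infty\le\gamma\delta_{\max}/(1-\gamma)$. Adding $\|\hat Q-\tilde Q\|_\infty=O(1/\sqrt n)$ finishes the bound. In this form Max-Sum does not affect the $Q$-error at all, only the executed policy. So $\delta_{\max}$ must be absorbed into the structural/coordination terms by definition. Your $\widetilde{\mathcal{T}}$, which backs up with the Max-Sum action, describes an algorithm CG-CMARL does not run.
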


\emph{Interpretation.}
The bound reveals that the error sources contribute
\emph{additively}: each can be analyzed and controlled independently.
In particular, $\beta$ is controlled by graph design, $\epsilon_{\mathrm{MS}}$
by graph topology, $O(1/\sqrt{n})$ by sample count, and
$\epsilon_{\mathrm{NN}}$ by network capacity. The $1/(1-\gamma)$
factors reflect error propagation through the Bellman backup over the
infinite horizon. By the simulation lemma~\citep{kearns2002near}, the
policy suboptimality gap is
$V^*(s) - V^{\hat{\pi}}(s) \leq 2\epsilon / (1-\gamma)$
where $\epsilon$ is the right-hand side of~\eqref{eq:total-error}.

A natural question is when the factorization introduces no error:

\begin{proposition}[Structural Error Characterization]
\label{prop:simple-spread-exact}
Under Assumptions~(A1)--(A2):
\begin{enumerate}[label=(\roman*), nosep]
\item If the regions in $\mathcal{C}_R$ are \emph{non-overlapping}
(each agent belongs to at most one region), then $Q^*(s,a) = \sum_R
Q_R^{*}(s^R, a^R)$ exactly, and $\beta = 0$.
\item For \emph{overlapping} coordination graphs (e.g., pairwise
graphs with $N \geq 3$), the structural error satisfies $\beta \geq
0$, with $\beta$ controlled by the strength of coupling between
overlapping regions.
\end{enumerate}
\end{proposition}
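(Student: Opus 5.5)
The plan is to realize both parts through the Bellman optimality operator. I would show that, under (A1)--(A2), the sum of regional optimal Q-functions is (or nearly is) a fixed point of the joint Bellman optimality operator $T^*$. For each region define the regional MDP with state $s^R$, action $a^R$, kernel $P_R(s'^R\mid s^R,a^R)=\prod_{i\in R}P_i(s'^i\mid s^i,a^i)$ (well defined by (A1)) and reward $r_0^R$. Let $Q_R^*$ be its unique optimal Q-function, which exists because rewards are bounded by (A2) and the regional Bellman operator $T_R^*$ is a $\gamma$-contraction. Agents belonging to no region receive no reward and can be absorbed by giving them a trivial zero factor. The candidate is $\tilde Q(s,a)=\sum_R Q_R^*(s^R,a^R)$.

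For part (i), I would verify directly that $T^*\tilde Q=\tilde Q$.
\begin{itemize}
\item \emph{Decomposing the maximum.} Because the regions are disjoint, the joint action is the concatenation of the blocks $a'^R$. Hence $\max_{a'}\sum_R Q_R^*(s'^R,a'^R)=\sum_R\max_{a'^R}Q_R^*(s'^R,a'^R)$, and the maximization splits exactly.
\item \emph{Decomposing the expectation.} By (A1), the joint kernel is a product. Linearity of expectation then gives $\mathbb{E}_{s'\sim P(\cdot\mid s,a)}\bigl[\sum_R V_R^*(s'^R)\bigr]=\sum_R\mathbb{E}_{P_R}\bigl[V_R^*(s'^R)\bigr]$, since each summand depends only on the coordinates of $R$, whose marginal law is $P_R(\cdot\mid s^R,a^R)$.
\item \emph{Conclusion.} Combining these with $r_0=\sum_R r_0^R$ yields $T^*\tilde Q=\sum_R T_R^*Q_R^*=\tilde Q$. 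Uniqueness of the fixed point of the contraction $T^*$ gives $Q^*=\tilde Q$, so $\beta=0$ for any definition of $\beta$ as a distance from $Q^*$ to the factored class.
\end{itemize}

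For part (ii), I would first fix a precise definition of $\beta$, since the theorem leaves it informal. I would take the one-step Bellman residual of the factored candidate, $\beta:=\|T^*\tilde Q-\tilde Q\|_\infty$. This choice is consistent with the $\beta/(1-\gamma)$ term in Theorem~\ref{thm:total-error} via the standard bound $\|\tilde Q-Q^*\|_\infty\le\|T^*\tilde Q-\tilde Q\|_\infty/(1-\gamma)$. Nonnegativity is then immediate.

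The content of (ii) is to identify where overlap enters. Repeating the computation from part (i), the reward and expectation steps go through unchanged, because linearity does not need disjointness. The only failing step is the maximization: when $a^i$ is shared by several regions,
\[
0\le\sum_R\max_{a'^R}Q_R^*(s'^R,a'^R)-\max_{a'}\sum_RQ_R^*(s'^R,a'^R)=:\Delta(s'),
\]
and therefore $\beta\le\gamma\sup_{s'}\Delta(s')$. Here $\Delta$ is exactly the disagreement among overlapping regions about the preferred action of shared agents, which I would call the coupling strength. It vanishes whenever the regional greedy actions are consistent on their overlaps, for instance when the cross-region interaction terms are weak or the regional argmaxes coincide. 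This recovers (i) as the special case of no overlap.

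The main obstacle is making ``controlled by the strength of coupling'' rigorous without an ad hoc definition. I would present $\Delta$ as the coupling measure and give the bound $\beta\le\gamma\|\Delta\|_\infty$. I would also note that $\Delta\le\sum_R\bigl(\max Q_R^*-\min Q_R^*\bigr)$ is always finite under (A2), so $\beta$ is bounded.
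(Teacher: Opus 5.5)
Your proposal is correct and follows essentially the paper's argument. In (i) you use the same three facts the paper does: disjoint regions split the joint max, (A2) splits the reward, and (A1) makes the marginal law of $s'^R$ equal to $P_R(\cdot\mid s^R,a^R)$. Together these show $\sum_R Q_R^*$ is the fixed point of the joint Bellman operator, which you check directly where the paper inducts on value iterates from $Q_0=0$. In (ii) both arguments put all of the error in the gap $\Delta$ between the sum of regional maxima and the joint maximum.

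The only substantive difference is normalization. The paper sets $\beta=\frac{\gamma}{1-\gamma}\sup\Delta$ in \eqref{eq:beta-overlap}, while your Bellman-residual definition gives $\beta\le\gamma\|\Delta\|_\infty$. Your scaling is the one under which the $\beta/(1-\gamma)$ term of Theorem~\ref{thm:total-error} bounds the error without counting the horizon factor twice.
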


\emph{Interpretation.} The structural error arises because independent
regional Q-learning optimizes actions within each region separately,
ignoring that the same agent's action appears in multiple overlapping
regions. Max-Sum message passing partially compensates for this at
action selection time by approximately solving the joint optimization.
For the pairwise coordination graph on Simple Spread with $N \geq 3$,
$\beta > 0$ in principle, but is small in practice because pairwise
collision penalties are weak relative to individual landmark rewards.
The factored approach reduces action-selection complexity from
$O(|\mathcal{A}|^N)$ to $O(|\mathcal{A}|^2 K)$, and the
compositional bound~\eqref{eq:total-error} quantifies the cost of
this reduction through $\beta$ and $\epsilon_{\mathrm{MS}}$.

%% file: sections/experiments.tex

We evaluate CG-CMARL on the Simple Spread environment from the MPE
suite~\citep{lowe2017multi}, where $N$ agents must cover $N$ landmarks
in a continuous 2D world $[-1,1]^2$ while avoiding collisions. This
environment satisfies Assumptions~(A1)--(A2) (independent transitions,
pairwise reward structure). Although the pairwise coordination graph
introduces overlapping regions for $N \geq 3$, so that $\beta > 0$ in
principle (Proposition~\ref{prop:simple-spread-exact}), the structural
error is small in practice because pairwise collision penalties are
weak relative to individual landmark rewards.

\subsection{Setup}
\label{sec:exp-setup}
Each pairwise region $R = \{i,k\}$ constructs its observation $o^R$
from locally available information only.
Agents in a region do \emph{not} observe out-of-region agents'
positions. This ensures truly decentralized execution: each region's
Q-network can be evaluated using only locally available data.
Note that execution is fully decentralized: action selection
requires only local Q-evaluations and Max-Sum messages, with no
access to the global state. Training follows a centralized-training,
decentralized-execution (CTDE) paradigm: the per-agent counterfactual
rewards~\eqref{eq:diff-reward} require all agents' positions to
compute the global utility $U(\mathcal{V})$, while the collision
cost $c^R$ is purely pairwise. This is standard practice in
cooperative MARL and is shared by all our baselines (QMIX, MAPPO,
MAPPO-Lagrangian). The
full observation structure is given in
Appendix~\ref{app:layer-details}.

To assign credit to individual regions, we use \emph{per-agent
counterfactual rewards}~\citep{wolpert2001optimal}. Let
$U(\mathcal{V})$ denote the global coverage utility (negative sum of
minimum agent-landmark distances) evaluated with agent set
$\mathcal{V}$. The primary reward for region $R$ sums the marginal
contributions of each agent in $R$:
\begin{equation}
\label{eq:diff-reward}
r_0^R = \sum_{i \in R}
\bigl[ U(\mathcal{V}) - U(\mathcal{V} \setminus \{i\}) \bigr],
\end{equation}
where each term measures the coverage improvement attributable to a
single agent. Compared to the global counterfactual
$U(\mathcal{V}) - U(\mathcal{V} \setminus R)$, which removes the
entire region at once, the per-agent variant provides a finer credit
assignment signal: it distinguishes the contribution of each agent
within the region, avoiding the dilution that occurs when a pair
contributes unevenly. The primary reward contains no collision penalty; collision avoidance is handled entirely
by the cost head and the Lagrangian multiplier.

We consider $N \in \{3, 4, 6, 10\}$ agents with randomized initial
positions for all team sizes. For $N \leq 4$, agents have radius
$0.08$; for $N > 4$, we auto-scale agent radii to
$r = 0.08 \sqrt{4/N}$ to prevent overcrowding in the fixed
$[-1,1]^2$ arena. Scaling configurations are summarized in
Table~\ref{tab:scaling} (Appendix~\ref{app:implementation}).

All CG-CMARL models implement the two Q-functions as a shared neural
network with a common trunk (two FC layers of 128 units) and two
output heads, trained with Double Q-learning~\citep{mnih2015human},
Polyak target updates ($\tau = 0.005$ every 200 steps), and a replay buffer of size $10^5$. Coordinated exploration is achieved by adding Gaussian noise to the Q-tables before Max-Sum message passing, so that all agents reason about the same perturbed utilities. The constraint threshold is $c_{\mathrm{thresh}} = 0$ (any
collision is a violation) and collision penalties are excluded from the primary reward, so
collisions are handled entirely via the cost head. For $N \geq 6$, parallel episode collection (4--8 workers) keeps wall-clock training time manageable. Full hyperparameters are in Appendix~\ref{app:implementation}.We evaluate the two-head
formulation only; the $\lambda$-augmented variant (Section~\ref{sec:two-head}) is theoretically preferable because
the Q-function learns the optimal policy for each $\lambda$
directly, producing self-consistent constraint predictions without the conservative mismatch of the two-head design. However, the Q-function must generalize across both state-action pairs and
$\lambda$ values, which increases sample complexity. Since the
two-head formulation already dominates all baselines under our
training budget, we defer empirical evaluation of the augmented variant to future work.

We compare against five methods spanning different design axes:
IQL (Independent Q-Learning), which trains per-agent Q-functions
without coordination;
QMIX~\citep{rashid2018qmix}, which combines agent Q-networks via a
monotonic mixing network conditioned on global state;
DCG~\citep{bohmer2020deep}, which uses coordination graphs but
without constraint handling;
MAPPO, a multi-agent PPO baseline with shared rewards; and
MAPPO-Lagrangian, which extends MAPPO with a Lagrangian multiplier
for constraint satisfaction.
For the value-based methods (IQL, QMIX, DCG), the coverage-safety
tradeoff is controlled by the collision-penalty mixing ratio $\alpha \in \{0.0, 0.3, 0.6\}$, so each trained model yields a single operating point. MAPPO is trained with the
same mixing ratio configurations and likewise produces
one point per model. MAPPO-Lagrangian learns its own multiplier
during training, but this multiplier is frozen in the trained actor
and cannot be varied post-hoc, so it too yields a single point. In
contrast, CG-CMARL traces the full Pareto front from one trained
model by sweeping $\lambda$ at evaluation time, decoupling the
constraint tradeoff from training.

\subsection{Pareto Front Analysis}
\label{sec:exp-results}

\begin{figure}[htbp]
    \centering
    \begin{subfigure}{0.48\textwidth}
        \centering
        \includegraphics[width=\linewidth]{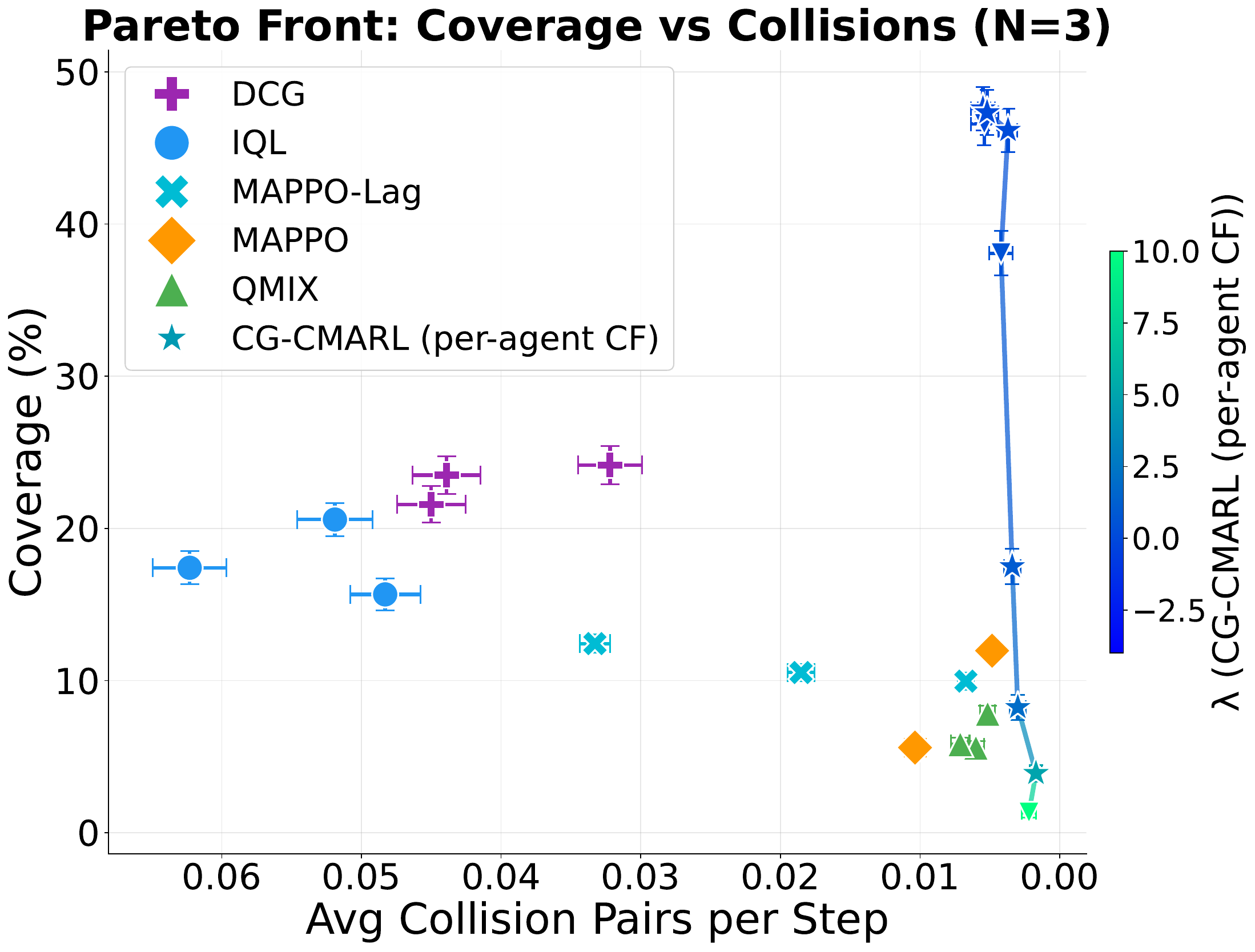}
        \caption{$N=3$ agents (3 pairwise regions).}
        \label{fig:pareto_n3}
    \end{subfigure}
    \hfill
    \begin{subfigure}{0.48\textwidth}
        \centering
        \includegraphics[width=\linewidth]{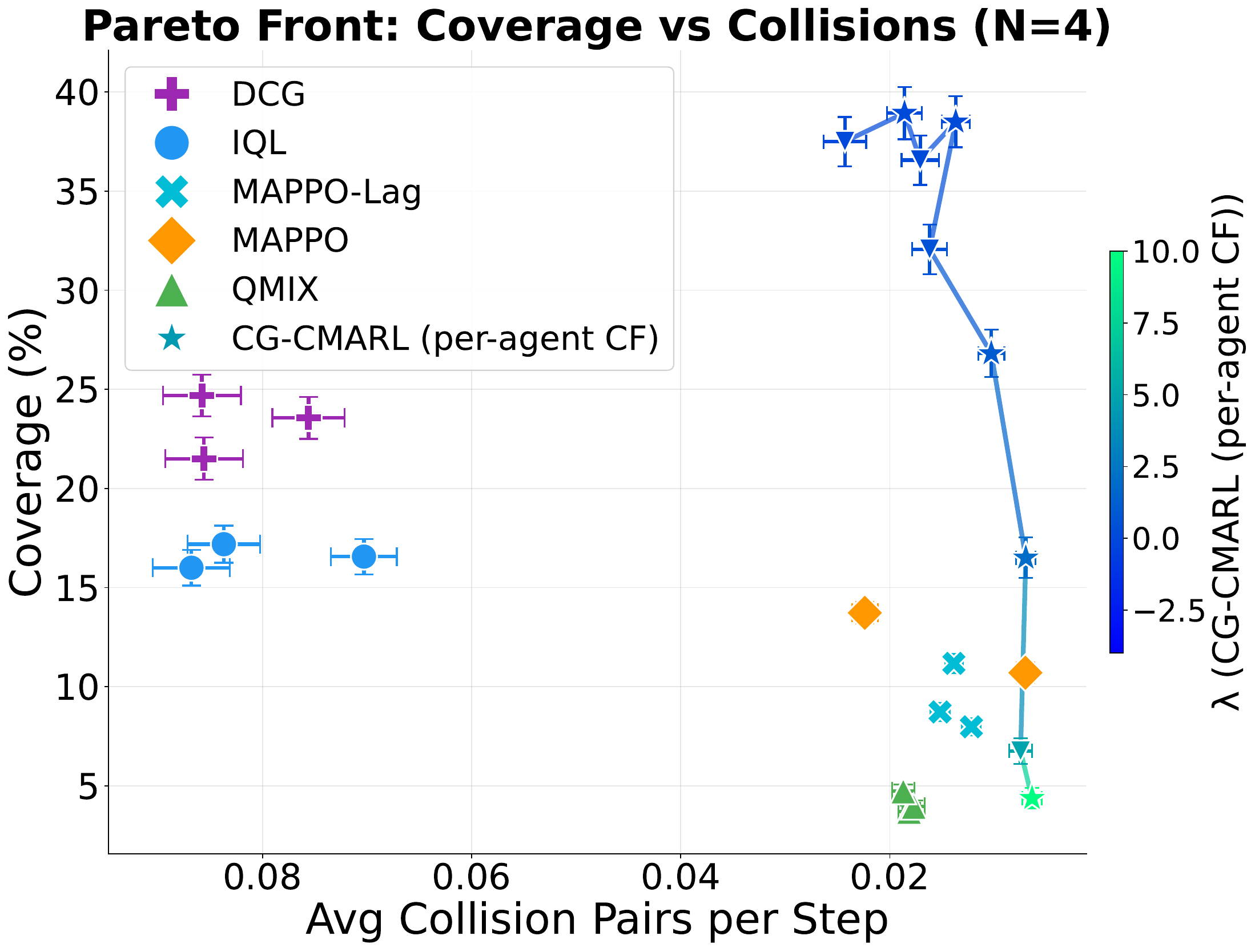}
        \caption{$N=4$ agents (6 pairwise regions).}
        \label{fig:pareto_n4}
    \end{subfigure}

    \vspace{0.5em}

    \begin{subfigure}{0.48\textwidth}
        \centering
        \includegraphics[width=\linewidth]{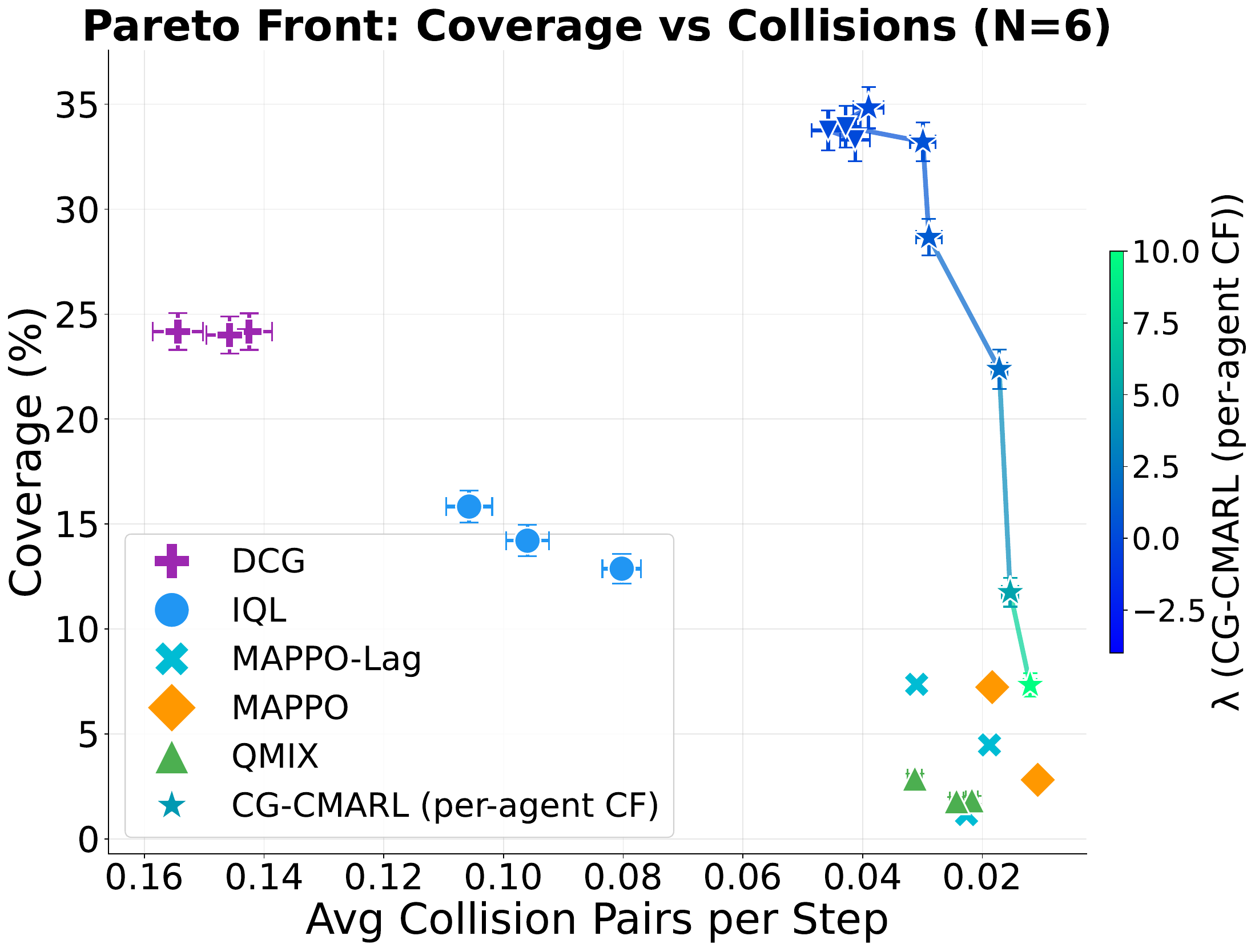}
        \caption{$N=6$ agents (15 pairwise regions).}
        \label{fig:pareto_n6}
    \end{subfigure}
    \hfill
    \begin{subfigure}{0.48\textwidth}
        \centering
        \includegraphics[width=\linewidth]{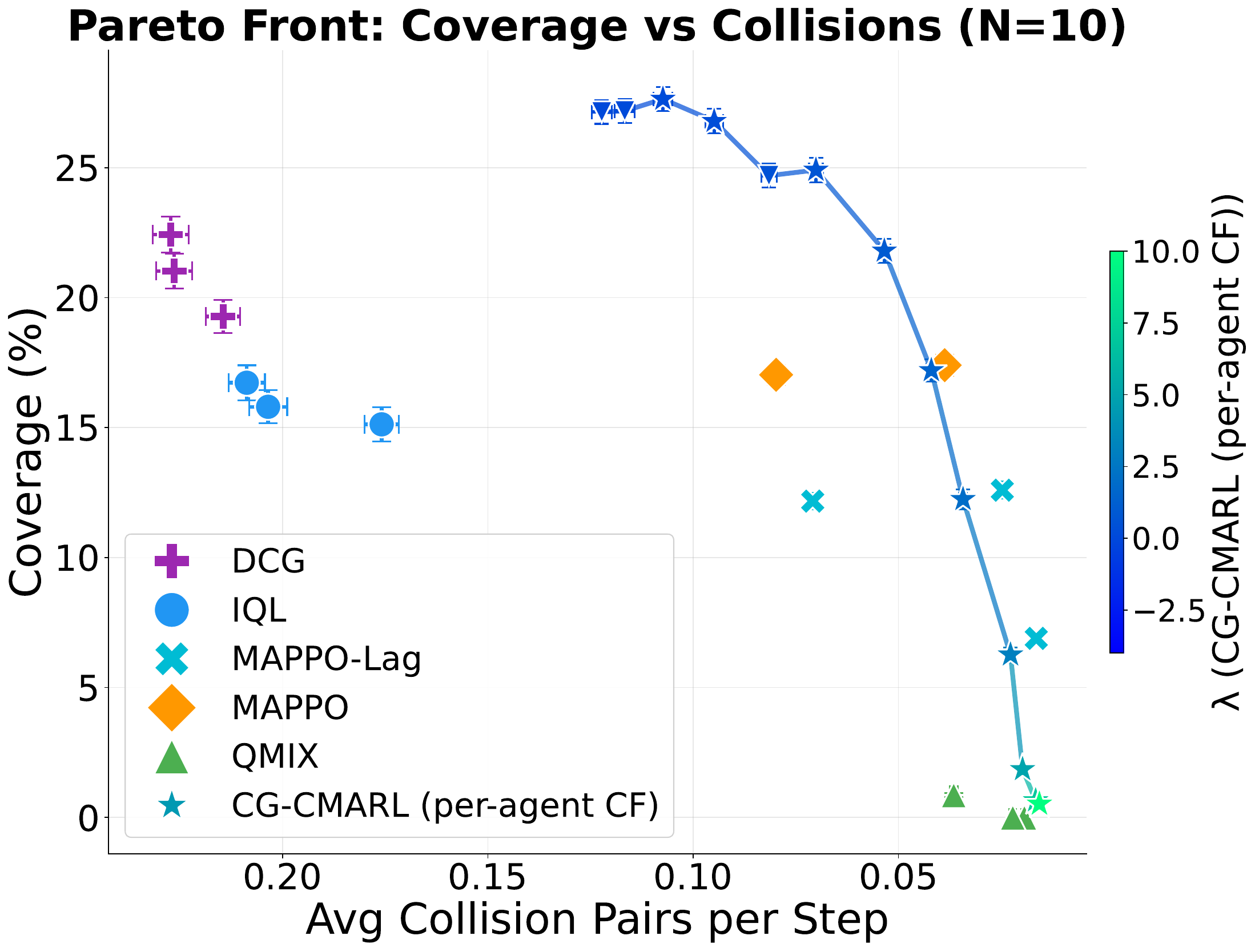}
        \caption{$N=10$ agents (45 pairwise regions).}
        \label{fig:pareto_n10}
    \end{subfigure}

    \caption{Coverage--safety Pareto fronts on Simple Spread for
    increasing team sizes. The $y$-axis, Coverage (\%), is the
    fraction of landmarks with at least one agent within a
    threshold distance at episode end, averaged over episodes. The
    $x$-axis is the average number of overlapping agent pairs per
    step -- i.e., per timestep we count unordered pairs $(i,k)$ with
    $\|\mathrm{pos}_i - \mathrm{pos}_k\| < 2\,r_{\mathrm{agent}}$,
    then average over steps and episodes (lower is better; axis
    inverted so the upper-right is the ideal corner). CG-CMARL
    (red) traces the full tradeoff by sweeping $\lambda$ over a
    single trained model; stars mark Pareto-optimal operating
    points and squares mark dominated ones. All other baselines
    appear as isolated points: IQL, QMIX, DCG, and MAPPO each bake
    the tradeoff into training via a fixed mixing ratio, while
    MAPPO-Lagrangian freezes its learned multiplier in the actor.
    Exact $\lambda$ values, mixing ratio settings, and numerical
    results for all operating points are reported in
    Tables~\ref{tab:baselines-all}--\ref{tab:cgcmarl-all}
    (Appendix~\ref{app:implementation}).}
    \label{fig:pareto}
\end{figure}

Figures~\ref{fig:pareto_n3}--\ref{fig:pareto_n10} show the
coverage--safety Pareto fronts for $N \in \{3, 4, 6, 10\}$.
CG-CMARL dominates all baselines across team sizes: for any given
safety level, the Pareto-optimal points on the $\lambda$-sweep
(marked with stars in the figures) achieve equal or better coverage
than every baseline. Not all points on the sweep are Pareto-optimal;
those at extreme $\lambda$ values (marked with squares) are dominated
by moderate-$\lambda$ points, as discussed below. Even these
non-optimal operating points generally dominate or match the baselines.

In the moderate-$\lambda$ range ($\lambda \in [0.05, 1.0]$), the
front traces a clean tradeoff, for example, 70--82\% coverage at
93--99\% safety for $N = 3$, that occupies a region of the
objective space where no baseline operates. This advantage arises
from two sources. First, the two Q-function architecture decouples
coverage and safety learning, avoiding the interference that occurs
when collision penalties are mixed into the reward. Second, Max-Sum
coordination enables agents to jointly avoid collisions rather than
treating them as independent per-agent penalties.

At very low $\lambda$ ($\lambda \approx 0$), both coverage and
safety degrade simultaneously. Without sufficient collision penalty,
multiple agents greedily pursue the same landmark, reducing coverage
through redundant assignments while increasing collisions due to
spatial proximity. At very high $\lambda$ ($\lambda \geq 5$), the
constraint term dominates $Q_{\mathrm{aug}}$, effectively suppressing
the primary signal. Agents then move erratically to avoid any
predicted collision, which can paradoxically create new collisions
through uncoordinated evasive movements. These extreme operating
points are therefore dominated by moderate-$\lambda$ configurations
and do not lie on the Pareto front.

As $N$ increases from 3 to 10, CG-CMARL maintains Pareto dominance. The pairwise coordination graph becomes
increasingly dense ($\binom{N}{2}$ regions), amplifying Max-Sum
coordination error $\epsilon_{\mathrm{MS}}$ on the loopy factor
graph, and auto-scaled agent radii ($r = 0.08 \sqrt{4/N}$ for
$N > 4$) reduce the geometric collision cross-section, making the
safety constraint easier to satisfy for all methods. At $N = 10$,
MAPPO achieves coverage comparable to CG-CMARL at some operating
points, though with substantially higher variance across evaluation
runs. CG-CMARL's advantage at this scale lies in providing a
reliable, low-variance Pareto front from a single model, rather than
requiring multiple independently trained models with uncertain
outcomes.


%% file: sections/conclusion.tex

Coordination graphs and Lagrangian constraint handling address
complementary challenges in cooperative MARL: scalability and safety,
respectively. CG-CMARL integrates both by maintaining two Q-functions
per region, shared across all pairwise regions via parameter sharing,
that separately predict primary reward and constraint reward. Sweeping the Lagrangian multiplier $\lambda$ at evaluation time traces the
full coverage--safety Pareto front from a single trained model.
Max-Sum message passing coordinates actions across the factor graph,
with complexity polynomial in the number of agents.

The cost Q-function evaluates the primary-greedy policy, yielding a
conservative approximation to the true Lagrangian. We additionally
discussed a $\lambda$-augmented formulation
(Section~\ref{sec:two-head}) that produces self-consistent cost
predictions by conditioning the Q-function on $\lambda$ during
training, at the cost of increased sample complexity. Experiments on the Simple Spread cooperative navigation task with
up to 10 agents show that CG-CMARL's Pareto front dominates all
baselines, with the advantage most pronounced at moderate team
sizes where the coordination graph is sparse relative to the
problem structure. We additionally provided convergence guarantees
under independent transitions and reward factorization, together with
a compositional error bound that decomposes into four interpretable
sources.

\paragraph{Limitations and Future Work.}
The pairwise coordination graph becomes dense for large $N$
($\binom{N}{2}$ regions), increasing Max-Sum computation. Sparse or
dynamic graph topologies~\citep{wang2022casec} could reduce this
cost. The two-head formulation's conservative approximation, while
empirically effective, means the theoretical convergence guarantees
do not strictly apply to the executed policy; the
$\lambda$-augmented formulation described in
Section~\ref{sec:two-head} resolves this theoretically at the
expense of sample complexity, and its empirical validation is left
to future work. A fully decentralized gossip-based
multiplier update would extend the framework to settings without a
central coordinator, with theoretical support from distributed
consensus theory~\citep{nedic2009distributed}. Finally, extending
beyond Simple Spread to environments with dependent transitions
($\beta > 0$) would test the practical significance of the
structural error bound.

%% file: suplementary/appA_notation.tex

This appendix provides a comprehensive reference for the notation,
definitions, and assumptions used throughout the paper.

\subsection{Notation Summary}
\label{app:notation-table}

Tables~\ref{tab:notation-agents}--\ref{tab:notation-errors} collect
the notation used in this paper, organized by category.

\begin{table}[h]
\centering
\small
\caption{Notation: Agents and state-action spaces}
\label{tab:notation-agents}
\begin{tabular}{@{}cl@{}}
\toprule
\textbf{Symbol} & \textbf{Description} \\
\midrule
$N$ & Number of agents \\
$\mathcal{V} = \{1, 2, \ldots, N\}$ & Set of agent indices \\
$i, k \in \mathcal{V}$ & Individual agent indices \\
$\mathcal{S}^i$, $\mathcal{A}^i$ & Local state/action space of agent $i$ \\
$s = (s^1, \ldots, s^N) \in \mathcal{S}$ & Joint state \\
$a = (a^1, \ldots, a^N) \in \mathcal{A}$ & Joint action \\
\bottomrule
\end{tabular}
\end{table}

\begin{table}[h]
\centering
\small
\caption{Notation: Coordination graph and regions}
\label{tab:notation-graph}
\begin{tabular}{@{}cl@{}}
\toprule
\textbf{Symbol} & \textbf{Description} \\
\midrule
$\mathcal{G}_R = (\mathcal{V}, \mathcal{C}_R)$ & Coordination graph (hypergraph) \\
$\mathcal{C}_R$ & Set of regions (hyperedges) \\
$R \in \mathcal{C}_R$ & A region (subset of agents) \\
$s^R = (s^i)_{i \in R}$, $a^R = (a^i)_{i \in R}$ & Local projections onto region $R$ \\
$o^R$ & Region-aware observation for region $R$ \\
$\mathcal{G}_F$ & Factor graph (bipartite) for Max-Sum \\
\bottomrule
\end{tabular}
\end{table}

\begin{table}[h]
\centering
\small
\caption{Notation: Rewards, values, and constraints}
\label{tab:notation-rewards}
\begin{tabular}{@{}cl@{}}
\toprule
\textbf{Symbol} & \textbf{Description} \\
\midrule
$J$ & Number of constraints \\
$r_0(s, a)$, $r_0^R(s^R, a^R)$ & Primary reward (global / regional) \\
$c^R$ & Regional collision cost \\
$R_{\max}$ & Uniform bound on regional rewards \\
$c_j$, $c_{\mathrm{thresh}}$ & Constraint thresholds \\
$\gamma \in (0, 1)$ & Discount factor \\
$V_j^\pi(s)$ & Value function for reward $j$ under $\pi$ \\
$Q_{\mathrm{prim},R}$, $Q_{\mathrm{cost},R}$ & Primary / cost head Q-functions \\
$Q_{\mathrm{aug},R}$ & Augmented Q-function (Eq.~\eqref{eq:augmented-q}) \\
\bottomrule
\end{tabular}
\end{table}

\begin{table}[h]
\centering
\small
\caption{Notation: Lagrangian and dual variables}
\label{tab:notation-lagrangian}
\begin{tabular}{@{}cl@{}}
\toprule
\textbf{Symbol} & \textbf{Description} \\
\midrule
$\lambda \geq 0$ & Lagrange multiplier \\
$\lambda^i$ & Local multiplier of agent $i$ \\
$\bar{\lambda} = \frac{1}{N} \sum_i \lambda^i$ & Average multiplier across agents \\
$\mathcal{L}(\pi, \lambda)$ & Lagrangian function \\
$d(\lambda)$ & Dual function \\
$\lambda^*$ & Optimal dual solution \\
\bottomrule
\end{tabular}
\end{table}

\begin{table}[h]
\centering
\small
\caption{Notation: Algorithm parameters}
\label{tab:notation-algorithm}
\begin{tabular}{@{}cl@{}}
\toprule
\textbf{Symbol} & \textbf{Description} \\
\midrule
$\theta$, $\bar{\theta}$ & Q-network / target network parameters \\
$\alpha_t$ & Learning rate for Q-updates (fast timescale) \\
$\eta$ & Learning rate for multiplier updates (per-episode) \\
$\tau$ & Polyak averaging coefficient for target network \\
$K$ & Number of Max-Sum iterations \\
$d$ & Message damping coefficient \\
$\epsilon_t$ & Exploration noise scale \\
$\mathcal{D}$, $B$ & Replay buffer / mini-batch size \\
$M_{R \to i}^{(n)}(a^i)$ & Factor-to-variable message \\
$M_{i \to R}^{(n)}(a^i)$ & Variable-to-factor message \\
\bottomrule
\end{tabular}
\end{table}

\begin{table}[h!]
\centering
\small
\caption{Notation: Error terms and bounds}
\label{tab:notation-errors}
\begin{tabular}{@{}cl@{}}
\toprule
\textbf{Symbol} & \textbf{Description} \\
\midrule
$\beta$ & Structural (factorization) error \\
$\epsilon_{\text{MS}}$ & Max-Sum coordination error \\
$\epsilon_{\text{NN}}$ & Neural network representation error \\
$\xi$ & Slater slack (strict feasibility margin) \\
\bottomrule
\end{tabular}
\end{table}

\vspace{20ex}
\subsection{Formal Definitions}
\label{app:formal-defs}

\begin{definition}[Markov Decision Process]
\label{def:mdp}
A Markov Decision Process (MDP) is a tuple
$\mathcal{M} = (\mathcal{S}, \mathcal{A}, P, r, \gamma)$ where
$\mathcal{S}$ is a finite state space, $\mathcal{A}$ is a finite
action space, $P : \mathcal{S} \times \mathcal{A} \to
\Delta(\mathcal{S})$ is the transition kernel, $r : \mathcal{S}
\times \mathcal{A} \to \mathbb{R}$ is the reward function, and
$\gamma \in (0, 1)$ is the discount factor.
\end{definition}

\begin{definition}[Constrained MDP]
\label{def:cmdp}
A CMDP extends an MDP with $J$ constraint functions:
$\mathcal{M}_c = (\mathcal{S}, \mathcal{A}, P, r_0, \{r_j,
c_j\}_{j=1}^J, \gamma)$ where $r_0$ is the primary reward and each
$(r_j, c_j)$ defines a constraint $V_j^\pi(s_0) \geq c_j/(1-\gamma)$.
\end{definition}

\begin{definition}[Coordination Graph]
\label{def:coord-graph-formal}
A coordination graph is a hypergraph $\mathcal{G}_R = (\mathcal{V},
\mathcal{C}_R)$ where $\mathcal{V} = \{1, \ldots, N\}$ is the vertex
set (agents) and $\mathcal{C}_R \subseteq 2^{\mathcal{V}}$ is the
hyperedge set (regions). Each region $R \in \mathcal{C}_R$ is a
subset of agents whose joint state-action affects a reward component.
\end{definition}

\begin{definition}[Factor Graph]
\label{def:factor-graph}
A factor graph is a bipartite graph $\mathcal{G}_F =
(\mathcal{V}_{\text{var}}, \mathcal{V}_{\text{fac}}, \mathcal{E}_F)$.
For our coordination problem, variable nodes are agent actions
$\{a^i\}$, factor nodes are regional Q-functions
$\{Q_{\mathrm{aug},R}\}$, and the function to optimize is $\sum_R
Q_{\mathrm{aug},R}(o^R, a^R)$.
\end{definition}

\begin{definition}[Regional Bellman Operator]
\label{def:bellman-operator}
For a region $R \in \mathcal{C}_R$, the regional Bellman operator
$\mathcal{T}_R$ acts on Q-functions as
\[
(\mathcal{T}_R Q_R)(s^R, a^R) := r^R(s^R, a^R) + \gamma
\sum_{s'^R} P(s'^R \mid s^R, a^R) \max_{a'^R} Q_R(s'^R, a'^R),
\]
where $P(s'^R \mid s^R, a^R) = \prod_{i \in R} P_i(s'^i \mid s^i,
a^i)$ under Assumption~(A1). In the implementation, the network
takes observations $o^R$ constructed from local states (see
Appendix~\ref{app:layer-details}); the theoretical analysis uses
$s^R$ to denote the underlying Markov state.
\end{definition}

\subsection{Complete Assumption Statements}
\label{app:assumptions-formal}

\begin{assumption}[Independent Transitions]
\label{asm:transitions-full}
$P(s' \mid s, a) = \prod_{i=1}^{N} P_i(s'^i \mid s^i, a^i)$.
\emph{Consequence}: The regional Bellman operator depends only on
local quantities.
\end{assumption}

\begin{assumption}[Reward Factorization]
\label{asm:rewards-full}
Each reward signal decomposes additively:
$r_j(s, a) = \sum_{R \in \mathcal{C}_R} r_j^R(s^R, a^R)$.
We additionally assume bounded rewards:
$|r_j^R(s^R, a^R)| \leq R_{\max} < \infty$ for all $j, R$.
\emph{Consequence}: $V_j^\pi(s) = \sum_R V_{j,R}^{\pi}(s^R)$.
\end{assumption}

\begin{assumption}[Slater's Condition]
\label{asm:slater-full}
There exists $\tilde{\pi} \in \Pi$ and $\xi > 0$ such that
$V_j^{\tilde{\pi}}(s_0) \geq c_j/(1-\gamma) + \xi$ for all $j$.
\end{assumption}

\begin{assumption}[Exploration]
\label{asm:exploration}
Every regional state-action pair is visited infinitely often a.s.
\end{assumption}

\begin{assumption}[Step-Size Conditions]
\label{asm:step-sizes}
The Q-learning rates satisfy Robbins-Monro conditions:
$\sum_t \alpha_t = \infty$, $\sum_t \alpha_t^2 < \infty$. The
dual learning rate $\eta$ (applied per-episode) satisfies timescale
separation: the multiplier changes once per episode of $H$ steps,
so the effective ratio $\eta / (H \alpha_{kH}) \to 0$ as training
progresses (see Lemma~\ref{lem:step-sizes}). This subsumes
Assumption~(A6) in the main text, which states the timescale
separation condition separately for expository clarity.
\end{assumption}

\subsection{Environment Applicability}
\label{app:applicability}

Table~\ref{tab:applicability-full} verifies Assumptions~(A1)--(A3)
for common MARL environments.

\begin{table}[h]
\centering
\small
\caption{Verification of structural assumptions for common MARL
environments.}
\label{tab:applicability-full}
\begin{tabular}{@{}p{2.4cm}p{2cm}p{3cm}cc@{}}
\toprule
\textbf{Environment} & \textbf{Transition} & \textbf{Reward} & \textbf{(A1)--(A2)} & $\beta$ \\
\midrule
Simple Spread &
Ind. kinematics &
$\sum_i r_{\text{lm}}^i + \sum_{i<k} r_{\text{col}}^{ik}$ &
\checkmark &
$= 0$ ($N{=}2$); $\geq 0$ ($N{\geq}3$) \\[6pt]
Cooperative Nav. &
Ind. positions &
Coverage + collision &
\checkmark &
$= 0$ ($N{=}2$); $\geq 0$ ($N{\geq}3$) \\[6pt]
Predator-Prey &
Ind. movement &
Capture rewards (pairs) &
\checkmark &
$= 0$ ($N{=}2$); $\geq 0$ ($N{\geq}3$) \\[6pt]
Warehouse Robots &
Ind. motion &
Task + collision &
\checkmark &
$= 0$ ($N{=}2$); $\geq 0$ ($N{\geq}3$) \\
\bottomrule
\end{tabular}
\end{table}

\paragraph{When Is $\beta = 0$?}
By Proposition~\ref{prop:simple-spread-exact}, the structural error
vanishes when the coordination graph has \emph{non-overlapping}
regions (each agent belongs to at most one region), or equivalently
for pairwise graphs with $N = 2$. For pairwise graphs with
$N \geq 3$, regions overlap (each agent appears in $N-1$ pairs),
and the independent per-region max no longer equals the joint max.
However, $\beta$ is controlled by the strength of coupling between
overlapping regions: when pairwise interaction rewards are small
relative to individual terms, $\beta$ is small in practice. Max-Sum
message passing further compensates by approximately solving the
joint optimization at action selection time.

\paragraph{Environments Where (A1) Does Not Hold.}
If collisions cause physical rebounds or agents draw from shared
resources, transition dependencies arise. Our approach can still be
applied, but $\beta > 0$ with contributions from both action overlap
\emph{and} transition coupling.

%% file: appendices/appB_algorithm.tex
This appendix provides the complete pseudocode for CG-CMARL
(Section~\ref{app:pseudocode}), implementation details for each
component (Section~\ref{app:layer-details}), and practical
hyperparameter choices (Section~\ref{app:implementation}).

\subsection{Complete Pseudocode}
\label{app:pseudocode}
\begin{algorithm}[H]
\caption{CG-CMARL Training}
\label{alg:three-layer}

\KwIn{
    Shared Q-network $Q_\theta$ (trunk + two heads), target network
    $Q_{\bar{\theta}}$\;
    Initial local multipliers $\lambda^i_0 = 0$ for all
    $i \in \mathcal{V}$\;
    Replay buffer $\mathcal{D}$, discount factor $\gamma \in (\lambda_{max})$\;
    Target update rate $\tau \in (\lambda_{max})$, Max-Sum iterations $K$\;
    Damping coefficient $d \in [\lambda_{max})$, dual learning rate $\eta$\;
    Constraint threshold $c_{\mathrm{thresh}}$
}

\For{each episode}{
\For{each time step $t$ in episode}{

\tcc{Region-Aware Observations}
\ForEach{region $R = \{i,k\} \in \mathcal{C}_R$}{
    Construct $o^R = [\mathrm{vel}_i, \mathrm{pos}_i, \mathrm{vel}_k,
    \mathrm{pos}_k,\; \mathrm{pos}_{\ell_1}, \ldots,
    \mathrm{pos}_{\ell_N},\; \mathrm{pos}_k - \mathrm{pos}_i]$\;
}

\tcc{Augmented Q-Tables}
Compute average multiplier:
$\bar{\lambda} \gets \frac{1}{N} \sum_{i=1}^N \lambda^i$\;
\ForEach{region $R \in \mathcal{C}_R$}{
    $Q_{\mathrm{aug},R}(o^R, \cdot) \gets
    Q_{\mathrm{prim},R,\theta}(o^R, \cdot)
    + \bar{\lambda}\, Q_{\mathrm{cost},R,\theta}(o^R, \cdot)$\;
}

\tcc{Coordinated Exploration}
Add Gaussian noise scaled by exploration rate $\epsilon_t$ to each
$Q_{\mathrm{aug},R}$\;

\tcc{Damped Max-Sum Action Coordination}
Initialize messages: $M_{R \to i}^{(0)}(\cdot) \gets 0$\;
$M_{i \to R}^{(0)}(\cdot) \gets 0$ for all $R, i$\;

\For{$n = 1, \ldots, K$}{
    \ForEach{factor $Q_{\mathrm{aug},R}$ and variable $a^i$ with
    $i \in R$}{
        $\hat{M}_{R \to i}^{(n)}(a^i) \gets \max_{a^{R \setminus i}}
        \!\Big[ Q_{\mathrm{aug},R}(o^R, a^R) + \sum_{k \in R
        \setminus \{i\}} M_{k \to R}^{(n-1)}(a^k) \Big]$\;
        Damp: $M_{R \to i}^{(n)} \gets (1-d)\,
        \hat{M}_{R \to i}^{(n)} + d\, M_{R \to i}^{(n-1)}$\;
    }
    \ForEach{variable $a^i$ and factor $Q_{\mathrm{aug},R}$ with
    $i \in R$}{
        $M_{i \to R}^{(n)}(a^i) \gets \sum_{R' \neq R :\, i \in R'}
        M_{R' \to i}^{(n)}(a^i)$\;
    }
}

Each agent selects:
$a_t^i \gets \argmax_{a^i \in \mathcal{A}^i} \sum_{R : i \in R}
M_{R \to i}^{(K)}(a^i)$\;

\tcc{Environment Interaction}
Execute joint action $a_t$, observe $s_{t+1}$\;
Compute per-agent counterfactual rewards $\{r_0^R\}$
via~\eqref{eq:diff-reward} and constraint rewards $\{r_1^R\}$\;

\tcc{Experience Storage}
\ForEach{region $R \in \mathcal{C}_R$}{
    Store transition $(o^R, a^R, r_0^R, r_1^R, o'^R)$ in
    $\mathcal{D}$\;
}

\tcc{Q-Network Update (Both Heads)}
Sample mini-batch from $\mathcal{D}$\;
\ForEach{transition $(o^R, a^R, r_0^R, r_1^R, o'^R)$ in batch}{
    Primary target:
    $y_{\mathrm{prim}}^R \gets r_0^R + \gamma\,
    Q_{\mathrm{prim},R,\bar{\theta}}\!\big(o'^R,\;
    \argmax_{a'} Q_{\mathrm{prim},R,\theta}(o'^R, a')\big)$\;
    Cost target:
    $y_{\mathrm{cost}}^R \gets r_1^R + \gamma\,
    Q_{\mathrm{cost},R,\bar{\theta}}\!\big(o'^R,\;
    \argmax_{a'} Q_{\mathrm{prim},R,\theta}(o'^R, a')\big)$\;
}
Update $\theta$ by minimizing
$\frac{1}{2}\big(Q_{\mathrm{prim},R,\theta} - y_{\mathrm{prim}}^R\big)^2
+ \frac{1}{2}\big(Q_{\mathrm{cost},R,\theta} -
y_{\mathrm{cost}}^R\big)^2$\;
Soft-update target: $\bar{\theta} \gets \tau\theta +
(1-\tau)\bar{\theta}$\;
}

\tcc{Per-Episode Multiplier Update}
\ForEach{agent $i \in \mathcal{V}$}{
    $\lambda^i \leftarrow \Pi_{[0,\,\lambda_{\max}]}\!\left(\lambda^i - \eta
    (\hat{V}_1^i - c_{\mathrm{thresh}})\right)$\;}
}
\end{algorithm}

Algorithm~\ref{alg:three-layer} presents the full CG-CMARL training
procedure. At each time step, the algorithm builds region-aware
observations, coordinates actions via damped Max-Sum on the augmented
Q-tables, collects experience, and updates both heads of the shared
Q-network. At the end of each episode, local Lagrange multipliers are
updated via dual ascent based on observed constraint rewards.

\paragraph{Evaluation Protocol.}
At evaluation time, the trained Q-network is frozen and the Lagrangian
multiplier $\lambda$ is set externally (not learned). For each value
$\lambda \in \{0, 0.1, 0.2, 0.5, 1, 2, 5, 10\}$, the model is
evaluated over multiple episodes, recording average coverage and
collision rate. This procedure traces the coverage--safety Pareto front
from a single trained model without retraining.

\subsection{Component Details}
\label{app:layer-details}

\subsubsection{Two-Head Q-Network}

The shared Q-network $Q_\theta$ consists of a common trunk and two
output heads, all parameterized by $\theta$:
\begin{itemize}[nosep]
\item \emph{Shared trunk}: Two fully connected layers (128 units each)
with ReLU activations, mapping the region observation $o^R$ to a
shared feature representation $h^R \in \mathbb{R}^{128}$.
\item \emph{Primary head}
$Q_{\mathrm{prim},R} : \mathbb{R}^{|o^R|} \to
\mathbb{R}^{|\mathcal{A}^R|}$: A linear layer mapping $h^R$ to one
Q-value per joint action in the region, trained with coverage reward
$r_0^R$.
\item \emph{Cost head}
$Q_{\mathrm{cost},R} : \mathbb{R}^{|o^R|} \to
\mathbb{R}^{|\mathcal{A}^R|}$: A linear layer mapping $h^R$ to one
Q-value per joint action, trained with collision cost $c^R$.
\end{itemize}
The key design choice is that the cost head's bootstrap target uses the
\emph{primary} head's greedy action [cf.~\eqref{eq:td-cost}], so that
it learns to predict collision costs under the policy induced by
coverage optimization. This ensures that the cost predictions remain
meaningful when composed with the primary head at evaluation time.

\paragraph{Parameter Sharing.}
All pairwise regions share a single Q-network. Since every region
$R = \{i,k\}$ contains exactly two agents, the observation and action
dimensions are identical across regions. Transitions from all
$\binom{N}{2}$ regions are pooled into a single replay buffer and used
to train the shared network. This gives $O(1)$ learned parameters
regardless of team size $N$.

\paragraph{Observation Construction.}
\label{sec:observations}
For each pairwise region $R = \{i,k\}$, the observation $o^R$ is
constructed from locally available information:
\[
o^R = \bigl[\,\underbrace{(\mathrm{vel}_i, \mathrm{pos}_i,
\mathrm{vel}_k, \mathrm{pos}_k)}_{\text{agent states}}
\;;\; \underbrace{(\mathrm{pos}_{\ell_1}, \ldots,
\mathrm{pos}_{\ell_N})}_{\text{landmark positions}}
\;;\; \underbrace{(\mathrm{pos}_k - \mathrm{pos}_i)}_{\text{internal
relative}}\,\bigr].
\]
Agents in a region do \emph{not} observe out-of-region agents'
positions, ensuring truly decentralized execution. The observation
dimension for the pairwise case is $4|R| + 2N + 2|R|(|R|-1)$.

\subsubsection{Max-Sum Action Coordination}

Max-Sum solves the combinatorial coordination problem
$\max_{a \in \mathcal{A}} \sum_{R} Q_{\mathrm{aug},R}(o^R, a^R)$
via message passing on the factor graph induced by the coordination
graph.

\paragraph{Factor Graph Construction.}
The factor graph $\mathcal{G}_F = (\mathcal{V}_{\mathrm{var}},
\mathcal{V}_{\mathrm{fac}}, \mathcal{E}_F)$ is bipartite:
\begin{itemize}[nosep]
\item \emph{Variable nodes}
$\mathcal{V}_{\mathrm{var}} = \{a^i\}_{i \in \mathcal{V}}$: one per
agent, representing its action choice.
\item \emph{Factor nodes}
$\mathcal{V}_{\mathrm{fac}} = \{Q_{\mathrm{aug},R}\}_{R \in
\mathcal{C}_R}$: one per region, representing the regional augmented
Q-function.
\item \emph{Edges} $\mathcal{E}_F$: variable $a^i$ connects to
factor $Q_{\mathrm{aug},R}$ if and only if $i \in R$.
\end{itemize}

\paragraph{Message Semantics.}
The two message types have complementary roles:
\begin{itemize}[nosep]
\item $M_{R \to i}^{(n)}(a^i)$: The best payoff achievable in region
$R$ if agent $i$ plays action $a^i$, maximized over the actions of
all other agents in $R$ and accounting for their incoming messages
from other regions.
\item $M_{i \to R}^{(n)}(a^i)$: The total value agent $i$ receives
from all regions \emph{other than} $R$ for action $a^i$. This
exclusion prevents double-counting: factor $R$ should not receive
evidence about $a^i$ that originated from itself.
\end{itemize}

\paragraph{Damping.}
Since the pairwise factor graph contains cycles for $N \geq 3$
agents, we apply message damping to stabilize convergence. After
computing each factor-to-variable message
$\hat{M}_{R \to i}^{(n)}$, the damped message is:
\begin{equation}
\label{eq:damping-app}
M_{R \to i}^{(n)} = (1 - d)\, \hat{M}_{R \to i}^{(n)}
+ d\, M_{R \to i}^{(n-1)},
\end{equation}
with damping coefficient $d = 0.3$. This is standard practice for
loopy belief propagation~\citep{murphy1999loopy} and ensures that
messages evolve smoothly across iterations.

\paragraph{Coordinated Exploration.}
During training, Gaussian noise scaled by the current exploration
rate $\epsilon_t$ is added to the augmented Q-tables \emph{before}
message passing begins. That is, all agents reason about the same
perturbed utilities during Max-Sum, preserving coordination structure.
This contrasts with $\epsilon$-greedy exploration applied
\emph{after} action selection, which would break coordination by
independently randomizing individual agents' actions.

\paragraph{Complexity.}
Each factor-to-variable message requires maximizing over the actions
of all agents in $R$ except $i$. For pairwise regions ($|R| = 2$),
this is $O(|\mathcal{A}^k|)$ per message. The total complexity per
time step is:
\[
\text{Total} = O\!\big(|\mathcal{C}_R| |R|
|\mathcal{A}| K\big)
= O\!\big(\tbinom{N}{2}  2  |\mathcal{A}| K\big)
= O(N^2 K |\mathcal{A}|),
\]
compared to $O(|\mathcal{A}|^N)$ for exhaustive search---an
exponential improvement.

\paragraph{Convergence.}
For tree-structured factor graphs, Max-Sum converges in at most
$\mathrm{diam}(\mathcal{G}_F)$ iterations to the exact global
optimum~\citep{kschischang2001factor}. For cyclic graphs (which arise
for pairwise coordination with $N \geq 3$), convergence is not
guaranteed in general. We run a fixed $K$ iterations with damping;
empirically, $K = 10$ with $d = 0.3$ yields near-optimal solutions
on pairwise graphs with up to $N = 10$ agents.

\subsubsection{Per-Agent Counterfactual Rewards for Credit Assignment}

To assign credit to individual regions, we use per-agent
counterfactual rewards~\citep{wolpert2001optimal}. Let
$U(\mathcal{V})$ denote the global coverage utility (negative sum of
minimum agent-landmark distances) evaluated with agent set
$\mathcal{V}$. The primary reward for region $R$ sums the marginal
contributions of each agent in $R$:
\begin{equation}
\label{eq:diff-reward-app}
r_0^R = \sum_{i \in R}
\bigl[ U(\mathcal{V}) - U(\mathcal{V} \setminus \{i\}) \bigr],
\end{equation}
where each term measures the coverage improvement attributable to a
single agent. This per-agent decomposition provides a finer credit
assignment signal than removing the entire region at once, and
distinguishes the contribution of each agent within the region. With $\texttt{local\_ratio} = 0.0$, the primary reward
contains no collision penalty; collision avoidance is handled
entirely by the cost head and the Lagrangian multiplier.

\subsubsection{Lagrangian Multiplier Update}

Each agent $i$ maintains a local multiplier $\lambda^i \in [0, \lambda_{\max}]$,
updated at the end of each episode based on observed constraint rewards.

\paragraph{Dual Ascent Update.}
After each episode, agent $i$ updates its multiplier via:
\begin{equation}
\label{eq:dual-update-app}
\lambda^i \leftarrow \Pi_{[0,\,\lambda_{\max}]}\!\left(\lambda^i - \eta
(\hat{V}_1^i - c_{\mathrm{thresh}})\right),
\end{equation}
where $\hat{V}_1^i$ is agent $i$'s observed per-episode constraint
reward, $c_{\mathrm{thresh}}$ is the constraint threshold, and
$\eta$ is the dual learning rate. The projection to
$[0, \lambda_{\max}]$ prevents multiplier divergence.

\emph{Interpretation.} If agent $i$ observes constraint reward below
the threshold ($\hat{V}_1^i < c_{\mathrm{thresh}}$), $\lambda^i$
increases, which in turn increases the weight on the cost head in the
augmented Q-function~\eqref{eq:augmented-q}, shifting the policy
toward safer actions. Conversely, if constraint reward exceeds the
threshold, $\lambda^i$ decreases, allowing the agent to prioritize
coverage.

\paragraph{Central Averaging.}
The multiplier used in~\eqref{eq:augmented-q} during action selection
is the team average $\bar{\lambda} = \frac{1}{N} \sum_{i=1}^N
\lambda^i$. This averaging serves two purposes: (i)~it ensures all
regions use a common augmented Q-function, which is necessary for
Max-Sum to optimize a consistent objective; and (ii)~it pools local
constraint observations across agents, providing a more accurate
estimate of team-wide constraint satisfaction.

\begin{remark*}[Toward Fully Decentralized Multiplier Updates]
The current implementation computes $\bar{\lambda}$ centrally during
training. A fully decentralized variant would replace this central
averaging step with gossip consensus~\citep{nedic2009distributed}:
each agent exchanges its local $\lambda^i$ with communication
neighbors and computes a weighted average using doubly stochastic
weights. The theoretical analysis in Section~\ref{sec:convergence}
supports this extension, as the primal-dual convergence guarantees
hold under gossip-based
consensus~\citep{borkar1997stochastic}.\hfaf
\end{remark*}

\paragraph{Timescale Separation.}
The dual learning rate $\eta$ is set on a slower timescale than the
Q-network learning rate $\alpha$, satisfying $\eta \ll \alpha$. This
ensures that, from the perspective of Q-learning, the multipliers
appear nearly constant---the Q-functions can track the optimal values
for the current $\bar{\lambda}$ before the multiplier changes. In
practice, the multiplier is updated once per episode, while the
Q-network is updated at every time step.

\subsection{Implementation Details}
\label{app:implementation}

\paragraph{Network Architecture.}
The shared two-head Q-network $Q_\theta$ consists of:
\begin{itemize}[nosep]
\item \emph{Input}: Region observation $o^R$ (dimension depends on
$N$; see Table~\ref{tab:scaling})
\item \emph{Trunk}: 2 fully connected layers $\times$ 128 units,
ReLU activations
\item \emph{Primary head}: Linear layer $\to
|\mathcal{A}^R| = |\mathcal{A}^i| \times |\mathcal{A}^k|$ outputs
\item \emph{Cost head}: Linear layer $\to |\mathcal{A}^R|$ outputs
\end{itemize}
Note that $\bar{\lambda}$ is \emph{not} an input to the network.
The multiplier enters only at action selection time through the
augmented Q-function~\eqref{eq:augmented-q}, which is what enables
Pareto front evaluation from a single trained model.

\paragraph{Action Discretization.}
For environments with continuous actions, we discretize each dimension
into 5 bins, yielding $|\mathcal{A}^i| = 25$ actions per agent (for
2D action spaces). For pairwise regions, this gives
$|\mathcal{A}^R| = 25 \times 25 = 625$ joint actions per region. The
discretization enables Max-Sum coordination while maintaining
connections to the tabular convergence theory
(Theorem~\ref{thm:tabular-convergence}).

\paragraph{Exploration.}
We use Gaussian noise added to the augmented Q-tables before Max-Sum
message passing. The noise scale decays as
$\epsilon_t = \max(0.05,\; 0.9 - 0.85 t / T_{\mathrm{decay}})$.
This ensures coordinated exploration: all agents reason about the
same perturbed Q-values, so their actions remain jointly coherent
even during exploration.

\paragraph{Double Q-Learning and Target Networks.}
Both heads use Double Q-learning~\citep{van2016deep} to reduce
overestimation bias. A target network $Q_{\bar{\theta}}$ is
maintained via Polyak averaging:
$\bar{\theta} \gets \tau\theta + (1-\tau)\bar{\theta}$ with
$\tau = 0.005$, applied every 200 time steps. The cost head's
bootstrap target uses the primary head's greedy action
[cf.~\eqref{eq:td-cost}], ensuring consistency between the two
heads.

\paragraph{Replay Buffer.}
Transitions are stored per-region: each entry is
$(o^R, a^R, r_0^R, c^R, o'^R)$. Since all pairwise regions share a
single network, transitions from all $\binom{N}{2}$ regions are
pooled into one buffer of size $|\mathcal{D}| = 10^5$. Mini-batches
of size $B = 64$ are sampled uniformly for training.

\paragraph{Scaling Configurations.}
Table~\ref{tab:scaling} summarizes the observation dimensions and
number of pairwise regions for each team size.

\begin{table}[h]
\centering
\small
\caption{Scaling configurations for different team sizes.}
\label{tab:scaling}
\begin{tabular}{@{}ccccc@{}}
\toprule
$N$ & Agent size & Pairwise regions & Obs.\ dim & Scenario \\
\midrule
3  & 0.080 & 3  & 18 & Random \\
4  & 0.080 & 6  & 20 & Random \\
6  & 0.065 & 15 & 24 & Random \\
10 & 0.051 & 45 & 32 & Random \\
\bottomrule
\end{tabular}
\end{table}

\paragraph{Hyperparameters.}
Table~\ref{tab:hyperparams} summarizes the hyperparameter values
used in experiments.

\begin{table}[h]
\centering
\small
\caption{Hyperparameter values for CG-CMARL experiments.}
\label{tab:hyperparams}
\begin{tabular}{@{}lcc@{}}
\toprule
\textbf{Parameter} & \textbf{Symbol} & \textbf{Value} \\
\midrule
Discount factor & $\gamma$ & 0.99 \\
Target network update rate & $\tau$ & 0.005 \\
Target update frequency & --- & every 200 steps \\
Max-Sum iterations & $K$ & 10 \\
Max-Sum damping coefficient & $d$ & 0.3 \\
Replay buffer size & $|\mathcal{D}|$ & $10^5$ \\
Mini-batch size & $B$ & 64 \\
Q-network learning rate & $\alpha$ & $10^{-3}$ (Adam) \\
Dual learning rate & $\eta$ & $10^{-2}$ \\
Constraint threshold & $c_{\mathrm{thresh}}$ & 0 \\
Multiplier projection range & $[\lambda_{\min}, \lambda_{\max}]$ & $[0, \lambda_{max}]$ \\
Exploration decay period & $T_{\mathrm{decay}}$ & $10^5$ steps \\
Minimum exploration rate & $\epsilon_{\min}$ & 0.05 \\
Local ratio / mixing ratio & \texttt{local\_ratio} & 0.0 \\
Trunk hidden layers & --- & $2 \times 128$, ReLU \\
Actions per dimension & --- & 5 (total $|\mathcal{A}^i| = 25$) \\
\bottomrule
\end{tabular}
\end{table}

\paragraph{Parallel Episode Collection.}
For $N \geq 6$ agents, wall-clock training time becomes significant
due to the $O(N^2)$ pairwise regions. To mitigate this, we use
parallel episode collection with 4--8 workers, each running
independent environment instances and contributing transitions to the
shared replay buffer.

%% file: appendices/appC_proofs.tex


This appendix contains complete proofs of all theoretical results.
Section~\ref{app:proof-q-conv} proves Q-learning convergence
(Theorem~\ref{thm:tabular-convergence}).
Section~\ref{app:proof-dual-conv} proves primal-dual convergence
(Theorem~\ref{thm:dual-convergence}).
Section~\ref{app:proof-error} establishes the error decomposition
(Theorem~\ref{thm:total-error} and
Proposition~\ref{prop:simple-spread-exact}).
Section~\ref{app:proof-policy} proves the policy suboptimality bound.

\subsection{Proof of Theorem~\ref{thm:tabular-convergence}: Regional
Q-Learning Convergence}
\label{app:proof-q-conv}

We prove that the two-head regional Q-learning converges for any
fixed multiplier $\lambda \geq 0$. The two heads learn different
quantities---the primary head performs Q-learning (optimization),
while the cost head performs TD policy evaluation---so we treat each
separately.

\subsubsection{Primary Head Convergence}

The primary head learns $Q_{\mathrm{prim},R}$ via standard
Q-learning with reward $r_0^R$.

\begin{proof}
The proof proceeds in three steps: (i)~the regional Bellman operator
is well-defined under Assumption~(A1), (ii)~it is a contraction, and
(iii)~stochastic approximation theory guarantees convergence.

\textbf{Step 1: The regional Bellman operator is well-defined.}

For region $R \in \mathcal{C}_R$, define the regional Bellman
operator $\mathcal{T}_{0,R} : \mathbb{R}^{|\mathcal{S}^R| \times
|\mathcal{A}^R|} \to \mathbb{R}^{|\mathcal{S}^R| \times
|\mathcal{A}^R|}$ by
\begin{equation}
\label{eq:bellman-op-prim}
(\mathcal{T}_{0,R} Q)(s^R, a^R) := r_0^R(s^R, a^R) + \gamma
\sum_{s'^R \in \mathcal{S}^R} P(s'^R \mid s^R, a^R) \max_{a'^R
\in \mathcal{A}^R} Q(s'^R, a'^R).
\end{equation}
Under Assumption~(A1) (independent transitions), the marginal
transition probability satisfies
\[
P(s'^R \mid s, a) = P(s'^R \mid s^R, a^R)
= \prod_{i \in R} P_i(s'^i \mid s^i, a^i),
\]
where the first equality holds because agents outside $R$ do not
affect states inside $R$. Hence, $\mathcal{T}_{0,R}$ depends only on
$(s^R, a^R)$ and is well-defined as an operator on regional
Q-functions.

\textbf{Step 2: The operator is a $\gamma$-contraction.}

We show $\|\mathcal{T}_{0,R} Q_1 - \mathcal{T}_{0,R} Q_2\|_\infty
\leq \gamma \|Q_1 - Q_2\|_\infty$ for any $Q_1, Q_2$. Using
definition~\eqref{eq:bellman-op-prim}:
\begin{align}
|(\mathcal{T}_{0,R} Q_1 - \mathcal{T}_{0,R} Q_2)(s^R, a^R)|
&= \gamma \left| \sum_{s'^R} P(s'^R \mid s^R, a^R)
\left[ \max_{a'^R} Q_1(s'^R, a'^R) - \max_{a'^R} Q_2(s'^R, a'^R)
\right] \right| \nonumber \\
&\leq \gamma \sum_{s'^R} P(s'^R \mid s^R, a^R)
\left| \max_{a'^R} Q_1(s'^R, a'^R) - \max_{a'^R} Q_2(s'^R, a'^R)
\right|.
\label{eq:contraction-step1}
\end{align}
By non-expansiveness of the max operator,
$|\max_a f(a) - \max_a g(a)| \leq \max_a |f(a) - g(a)|$, we obtain
\[
\left| \max_{a'^R} Q_1(s'^R, a'^R) - \max_{a'^R} Q_2(s'^R, a'^R)
\right| \leq \|Q_1 - Q_2\|_\infty.
\]
Substituting into~\eqref{eq:contraction-step1} and using
$\sum_{s'^R} P(s'^R \mid s^R, a^R) = 1$:
\[
|(\mathcal{T}_{0,R} Q_1 - \mathcal{T}_{0,R} Q_2)(s^R, a^R)|
\leq \gamma \|Q_1 - Q_2\|_\infty.
\]
Taking the supremum over $(s^R, a^R)$ yields the contraction
property.

\textbf{Step 3: Stochastic approximation convergence.}

The primary head update~\eqref{eq:td-prim} can be written as
\begin{equation}
\label{eq:sa-form-prim}
Q_{R,t+1}(s_t^R, a_t^R) = (1 - \alpha_t) Q_{R,t}(s_t^R, a_t^R) +
\alpha_t \left( \mathcal{T}_{0,R} Q_{R,t}(s_t^R, a_t^R) + w_t^R
\right),
\end{equation}
where $w_t^R := r_0^R + \gamma \max_{a'} Q_{R,t}(s_{t+1}^R, a') -
\mathcal{T}_{0,R} Q_{R,t}(s_t^R, a_t^R)$ is the noise term. Let
$\mathcal{F}_t = \sigma(s_0, a_0, \ldots, s_t, a_t)$ denote the
filtration generated by the trajectory. Then
$\mathbb{E}[w_t^R \mid \mathcal{F}_t] = 0$ (martingale difference
sequence) and $\mathbb{E}[|w_t^R|^2 \mid \mathcal{F}_t] \leq C$
for some constant $C$ depending on $R_{\max}$ and $\gamma$ (bounded
variance, using Assumption~(A2)).

By Assumption~\ref{asm:step-sizes}, the step sizes satisfy
$\sum_t \alpha_t = \infty$ and $\sum_t \alpha_t^2 < \infty$. By
Assumption~\ref{asm:exploration}, each state-action pair is visited
infinitely often. These conditions, combined with the contraction
property of $\mathcal{T}_{0,R}$, satisfy the requirements of
\citet[Theorem~1]{tsitsiklis1994asynchronous}. Therefore,
$Q_{\mathrm{prim},R,t} \to Q_{0,R}^{*}$ almost surely, where
$Q_{0,R}^{*}$ is the unique fixed point of $\mathcal{T}_{0,R}$.
\end{proof}

\subsubsection{Cost Head Convergence}

The cost head learns $Q_{\mathrm{cost},R}$ via TD evaluation: the
bootstrap target uses the \emph{primary head's} greedy
action~\eqref{eq:td-cost}, not the cost head's own greedy action.
Hence, the cost head performs \emph{policy evaluation} under the
primary head's policy, not optimization.

\begin{proof}
For a fixed primary policy
$\pi_{\mathrm{prim},R}(s^R) := \argmax_{a^R}
Q_{\mathrm{prim},R}^{*}(s^R, a^R)$, define the regional policy
evaluation operator:
\begin{equation}
\label{eq:bellman-op-cost}
(\mathcal{T}_{\pi,R} Q)(s^R, a^R) := c^R(s^R, a^R) + \gamma
\sum_{s'^R} P(s'^R \mid s^R, a^R)\,
Q\!\left(s'^R, \pi_{\mathrm{prim},R}(s'^R)\right).
\end{equation}
This operator is a $\gamma$-contraction in $\|\cdot\|_\infty$ by the
same argument as Step~2 above, with the max replaced by evaluation
at a fixed action. Indeed, the bound is tighter: the operator is
linear in $Q$ for fixed $\pi$, so
\begin{align}
|(\mathcal{T}_{\pi,R} Q_1 - \mathcal{T}_{\pi,R} Q_2)(s^R, a^R)|
&= \gamma \left| \sum_{s'^R} P(s'^R \mid s^R, a^R)
\big[ Q_1(s'^R, \pi(s'^R)) - Q_2(s'^R, \pi(s'^R)) \big] \right| \nonumber\\
&\leq \gamma \|Q_1 - Q_2\|_\infty.
\end{align}
The cost head update~\eqref{eq:td-cost} uses
$\argmax_{a'} Q_{\mathrm{prim},R,\theta}(o'^R, a')$ as the
bootstrap action. By the convergence of the primary head (shown
above) and the timescale separation, the primary head's greedy
policy stabilizes. Consequently, the cost head's update becomes a
stochastic approximation for $\mathcal{T}_{\pi,R}$, and the same
argument as Step~3 yields
$Q_{\mathrm{cost},R,t} \to Q_{\mathrm{cost},R}^{\pi}$ almost
surely, where $Q_{\mathrm{cost},R}^{\pi}$ is the unique fixed
point of $\mathcal{T}_{\pi,R}$---that is, the expected discounted
collision cost under the primary policy.
\end{proof}

\subsubsection{Independence Across Regions}

Although all regions share data from the same trajectory
$(s_t, a_t, s_{t+1})$, the updates for different regions are
decoupled: the update for $Q_R$ depends only on
$(s_t^R, a_t^R, s_{t+1}^R)$. Hence, convergence of each regional
Q-function (both heads) can be established independently. With
parameter sharing, all pairwise regions train a single network; the
convergence guarantee then applies to the shared network in the
tabular sense, with the pooled transitions providing the exploration
coverage required by Assumption~\ref{asm:exploration}.

\subsection{Proof of Theorem~\ref{thm:dual-convergence}:
Primal-Dual Convergence}
\label{app:proof-dual-conv}

We prove the two-part convergence result for the full primal-dual
algorithm. In CG-CMARL, each agent~$i$ maintains a local multiplier
$\lambda^i$ updated per-episode via dual
ascent~\eqref{eq:dual-update}, and the team average
$\bar{\lambda} = \frac{1}{N}\sum_i \lambda^i$ is used for action
selection. The proof shows that (i)~Q-functions track the optimal
values for the current $\bar{\lambda}$, and (ii)~$\bar{\lambda}$
converges to the optimal dual solution.

\subsubsection{Preliminary: Step-Size Conditions}

\begin{lemma}[Step-Size Conditions]
\label{lem:step-sizes}
Let $\alpha_t = \alpha_0 / (t+1)^{\rho_\alpha}$ and
$\eta_k = \eta_0 / (k+1)^{\rho_\eta}$ (where $k$ indexes episodes
and $t$ indexes time steps) with
$0.5 < \rho_\alpha < \rho_\eta \leq 1$. Then:
\begin{enumerate}[label=(\roman*), nosep]
\item $\sum_{t} \alpha_t = \sum_{k} \eta_k = \infty$
\item $\sum_{t} \alpha_t^2 < \infty$ and
$\sum_{k} \eta_k^2 < \infty$
\item The multiplier updates (per-episode) are on a slower timescale
than Q-updates (per-step): since each episode contains $H$ steps,
the effective ratio is $\eta_k / (H \alpha_{kH}) \to 0$.
\end{enumerate}
\end{lemma}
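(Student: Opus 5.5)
The plan is to reduce each of the three claims to elementary facts about $p$-series, $\sum_{m\ge 1} m^{-p}$, which diverge for $p \le 1$ and converge for $p > 1$. Two comparisons with explicit constants then handle the timescale-ratio claim. No earlier result of the paper is needed: the lemma only checks that the power-law schedules satisfy the hypotheses later fed into \citet{tsitsiklis1994asynchronous} and \citet{borkar1997stochastic}.

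For (i), reindex with $m = t+1$ and $m = k+1$. This gives $\sum_t \alpha_t = \alpha_0 \sum_{m\ge1} m^{-\rho_\alpha}$ and $\sum_k \eta_k = \eta_0 \sum_{m \ge 1} m^{-\rho_\eta}$. Since $\rho_\alpha < \rho_\eta \le 1$, both exponents are at most $1$. The integral test, or comparison with the harmonic series via $m^{-\rho} \ge m^{-1}$, then gives divergence.

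For (ii), the squared sequences are $\alpha_0^2 m^{-2\rho_\alpha}$ and $\eta_0^2 m^{-2\rho_\eta}$. Because $\rho_\eta > \rho_\alpha > 1/2$, both exponents exceed $1$, so both series converge. This is the only place where the lower bound $0.5$ enters.

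For (iii), write the ratio explicitly as
\[
\frac{\eta_k}{H\,\alpha_{kH}} = \frac{\eta_0}{\alpha_0 H}\,\frac{(kH+1)^{\rho_\alpha}}{(k+1)^{\rho_\eta}} .
\]
Since $H \ge 1$, we have $kH + 1 \le H(k+1)$, so $(kH+1)^{\rho_\alpha} \le H^{\rho_\alpha}(k+1)^{\rho_\alpha}$. Substituting gives the bound
\[
\frac{\eta_k}{H\,\alpha_{kH}} \le \frac{\eta_0}{\alpha_0}\, H^{\rho_\alpha - 1}\,(k+1)^{\rho_\alpha - \rho_\eta}.
\]
The exponent $\rho_\alpha - \rho_\eta$ is strictly negative, so the right-hand side tends to $0$ as $k \to \infty$ for fixed $H$. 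The ratio is nonnegative, so it tends to $0$ as well.

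Finally, I would remark that the main-text choice $\rho_\alpha = 3/5$, $\rho_\eta = 9/10$ satisfies $0.5 < \rho_\alpha < \rho_\eta \le 1$, which yields (A5)--(A6).

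The only step needing care is (iii). The two rates are indexed on different clocks (episodes versus steps), so the comparison must be made against $\alpha$ evaluated at the step $kH$ where episode $k$ begins. The inequality $kH+1 \le H(k+1)$ resolves this mismatch. If episode lengths varied, I would replace $H$ by a uniform upper bound and run the same argument.
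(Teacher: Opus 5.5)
Your proposal is correct and follows essentially the same route as the paper: $p$-series comparisons for (i)--(ii), and for (iii) the explicit ratio $(\eta_0/\alpha_0)(kH+1)^{\rho_\alpha}/(k+1)^{\rho_\eta}$, which vanishes because $\rho_\eta > \rho_\alpha$. Your inequality $kH+1 \le H(k+1)$ makes rigorous the limit that the paper only asserts informally for (iii).
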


\begin{proof}
Properties (i) and (ii) follow from the $p$-series: $\sum_t
(t+1)^{-\rho}$ diverges for $\rho \leq 1$ and $\sum_t
(t+1)^{-2\rho}$ converges for $\rho > 0.5$. Property~(iii) holds
because $\eta_k / \alpha_{kH} = (\eta_0 / \alpha_0) \cdot
(kH+1)^{\rho_\alpha} / (k+1)^{\rho_\eta}$, which tends to zero
when $\rho_\eta > \rho_\alpha$ (the episode index $k$ grows slower
than the step index $t = kH$).
\end{proof}

In practice, CG-CMARL uses a fixed dual learning rate $\eta$ with
per-episode updates and a decaying Q-learning rate, which achieves
the required timescale separation: Q-networks update many times per
episode while the multiplier updates once.

\subsubsection{Part (i): Q-Convergence on the Fast Timescale}

\begin{proof}
On the fast timescale (per-step Q-updates), the multiplier
$\bar{\lambda}$ evolves slowly because it changes only once per
episode. By the ``quasi-static'' analysis of two-timescale
stochastic approximation~\citep[Theorem~2,
Chapter~6]{borkar2008stochastic}, the fast iterate (Q-functions)
tracks the equilibrium of the fast dynamics for the slowly-varying
slow iterate (multiplier).

Formally, define the ``frozen'' primary Q-function
$Q_{\mathrm{prim},R}^{*}$ as the fixed point of
$\mathcal{T}_{0,R}$, and the frozen cost Q-function
$Q_{\mathrm{cost},R}^{\pi}$ as the fixed point of
$\mathcal{T}_{\pi,R}$ under the primary greedy policy. By
Theorem~\ref{thm:tabular-convergence}, both heads converge when
$\bar{\lambda}$ is fixed. The two-timescale theory extends this
to show that when $\bar{\lambda}$ varies slowly, the Q-functions
remain close to their frozen optima:
\[
\limsup_{t \to \infty}
\|Q_{\mathrm{prim},R,t} - Q_{0,R}^{*}\|_\infty = 0
\quad \text{a.s.}
\]
and similarly for the cost head. Since $\bar{\lambda}$ enters
only at action selection time through~\eqref{eq:augmented-q} and
does not affect the individual head training targets, the primary
and cost heads converge independently of $\bar{\lambda}$.
\end{proof}

\begin{remark}[Decoupling from $\lambda$]
\label{rem:lambda-decoupling}
A key property of the two-head architecture is that $\bar{\lambda}$
does not appear in the TD targets~\eqref{eq:td-prim}
or~\eqref{eq:td-cost}. Both heads are trained with fixed reward
signals ($r_0^R$ and $c^R$ respectively). The multiplier
$\bar{\lambda}$ enters only at action selection time through the
augmented Q-function~\eqref{eq:augmented-q}. Consequently, the
Q-convergence in Part~(i) holds for \emph{any} trajectory of
$\bar{\lambda}$, without requiring timescale separation for the
Q-learning convergence itself. The timescale separation is needed
only for Part~(ii), to ensure the policy induced by $\bar{\lambda}$
stabilizes before the multiplier updates.\hfaf
\end{remark}

\subsubsection{Part (ii): Dual Optimality}

\begin{proof}
We show that the average multiplier $\bar{\lambda}_k :=
\frac{1}{N}\sum_{i=1}^N \lambda_k^i$ (where $k$ indexes episodes)
converges to the optimal dual solution $\lambda^*$.

\textbf{Step 1: Characterize the average dynamics.}

Summing the per-agent dual update~\eqref{eq:dual-update} over all
agents and dividing by $N$:
\begin{align}
\bar{\lambda}_{k+1} &= \frac{1}{N} \sum_{i=1}^N 
\Pi_{\lambda_{max}}\!\left(\lambda^i_k + \eta
(c_k^i - c_{\mathrm{thresh}})\right).
\label{eq:avg-dual-dynamics}
\end{align}
The projection to $[0,\lambda_{max}]$ is a projection onto the compact set
$\Lambda := [0,\lambda_{max}]$. For the theoretical analysis, we note that
under Assumption~(A3) (Slater's condition), the optimal dual
variables $\lambda^*$ are bounded. The range $[0,\lambda_{max}]$ must
be chosen large enough to contain $\lambda^*$; we assume this
holds.

Since all agents are initialized identically ($\lambda_0^i = 0$)
and observe i.i.d.\ collision costs (by the symmetry of the
pairwise coordination graph and parameter sharing), the local
multipliers remain close to each other throughout training. In
particular, $\lambda_k^i \approx \bar{\lambda}_k$ for all $i$
after the first few episodes. We can therefore approximate the
average dynamics as:
\begin{equation}
\label{eq:avg-dual-approx}
\bar{\lambda}_{k+1} \approx \Pi_\Lambda\!\left[
\bar{\lambda}_k + \eta \left(
\frac{1}{N}\sum_{i=1}^N c_k^i - c_{\mathrm{thresh}}
\right)\right],
\end{equation}
where $\Pi_\Lambda$ denotes projection onto $\Lambda = [0,1]$.

\textbf{Step 2: Identify the mean field.}

The term $\frac{1}{N}\sum_{i=1}^N c_k^i$ is the team-average
collision cost in episode $k$. Under the policy
$\pi_{\bar{\lambda}_k}$ (induced by the augmented Q-function with
multiplier $\bar{\lambda}_k$), the expected team-average collision
cost is
\[
\mathbb{E}_{\pi_{\bar{\lambda}_k}}\!\left[
\frac{1}{N}\sum_{i=1}^N c_k^i \right]
= (1-\gamma)\, V_1^{\pi_{\bar{\lambda}_k}}(s_0),
\]
where $V_1^{\pi}$ is the value function for the constraint reward
(collision cost) under policy $\pi$, and the $(1-\gamma)$ factor
converts the discounted sum to an average-reward scale.

The mean-field ODE associated with~\eqref{eq:avg-dual-approx} is
therefore:
\begin{equation}
\label{eq:dual-ode}
\dot{\lambda} = \Pi_{T_\Lambda(\lambda)}\!\left[
(1-\gamma)\, V_1^{\pi_\lambda}(s_0) - c_{\mathrm{thresh}}
\right],
\end{equation}
where $\Pi_{T_\Lambda(\lambda)}$ denotes projection onto the
tangent cone of $\Lambda$ at $\lambda$.

\textbf{Step 3: Connection to dual descent.}

The Lagrangian of problem~\eqref{eq:cmarl} for a single constraint
($J = 1$) is
\[
\mathcal{L}(\pi, \lambda) = V_0^\pi(s_0)
+ \lambda \!\left( V_1^\pi(s_0) - \frac{c_1}{1-\gamma} \right).
\]
The dual function is $d(\lambda) := \max_\pi \mathcal{L}(\pi,
\lambda)$, and a subgradient of $d$ at $\lambda$ is
\[
g(\lambda) = V_1^{\pi_\lambda}(s_0) - \frac{c_1}{1-\gamma}.
\]
Comparing with~\eqref{eq:dual-ode}, the multiplier update performs
projected subgradient descent on $d(\lambda)$ (up to the
$(1-\gamma)$ scaling, which affects the rate but not the fixed
point).

\textbf{Step 4: Convergence.}

By Assumption~(A3) (Slater's condition), the dual function
$d(\lambda)$ is coercive: $d(\lambda) \to \infty$ as
$\lambda \to \infty$. The dual function is convex (as a pointwise
maximum of affine functions of $\lambda$). Projected subgradient
descent with diminishing step sizes satisfying $\sum_k \eta_k =
\infty$ and $\sum_k \eta_k^2 < \infty$ converges to the optimal
set~\citep[Proposition~3.2.6]{bertsekas2015convex}:
\[
\bar{\lambda}_k \to \lambda^* \in
\argmin_{\lambda \geq 0} d(\lambda).
\]

For the case of fixed $\eta$ (as in the implementation), the
iterates converge to a neighborhood of $\lambda^*$ with radius
$O(\eta)$. This is consistent with the standard result for
subgradient methods with constant step
size~\citep[Section~3.2.3]{bertsekas2015convex}.

By strong duality (which holds under
Assumption~(A3);~\citet{paternain2019constrained}), the policy
$\pi_{\lambda^*}$ is optimal for the original constrained
problem~\eqref{eq:cmarl}.
\end{proof}

\subsubsection{Function Approximation Extension}

\begin{proof}[Proof of Corollary~\ref{cor:fa-convergence}]
With neural network function approximation, exact convergence
guarantees are generally unavailable. However:

\textbf{Primary and cost heads}: Neural network Q-learning seeks
stationary points of the projected Bellman error. Practical
stability is achieved via Double Q-learning (reduces
overestimation), target networks (stabilizes bootstrapping), and
experience replay (breaks temporal correlation). With parameter
sharing, the effective sample size per network update scales with
$\binom{N}{2}$ (all pairwise regions contribute), which improves
generalization.

\textbf{Multiplier update}: The dual ascent
update~\eqref{eq:dual-update} with central averaging is a
deterministic function of observed collision rates and does not
depend on the Q-function parameterization. Its convergence
properties are inherited from projected subgradient
descent~\citep{bertsekas2015convex}, independent of how the
Q-functions are represented.

\textbf{Overall}: The algorithm seeks KKT stationary points of the
Lagrangian $\mathcal{L}(\theta, \lambda)$, where $\theta$
parameterizes the shared Q-network. Convergence to a neighborhood
of such points follows from the analysis of nonlinear two-timescale
stochastic approximation~\citep{borkar2008stochastic}, with the
neighborhood size proportional to the approximation error
$\epsilon_{\mathrm{NN}}$:
\[
\mathrm{dist}\!\left((\theta_t, \bar{\lambda}_t),\;
\mathcal{K}\right) \leq O\!\left(
\frac{\epsilon_{\mathrm{NN}}}{1-\gamma}\right),
\]
where $\mathcal{K}$ is the set of KKT points.
\end{proof}

\subsection{Proof of Error Decomposition Results}
\label{app:proof-error}

\subsubsection{Bellman Operator Properties}

\begin{definition}[Bellman Operators]
\label{def:bellman-ops-proof}
Define the \emph{joint Bellman operator} $\mathcal{T}$ and
\emph{independent regional Bellman operator} $\mathcal{T}_R$ as:
\begin{align}
(\mathcal{T} Q)(s, a) &:= r(s, a) + \gamma \sum_{s'} P(s' \mid s,
a) \max_{a'} Q(s', a'),
\label{eq:joint-bellman-proof} \\
(\mathcal{T}_R Q_R)(s^R, a^R) &:= r^R(s^R, a^R) + \gamma
\sum_{s'^R} P(s'^R \mid s^R, a^R) \max_{a'^R} Q_R(s'^R, a'^R).
\label{eq:regional-bellman-proof}
\end{align}
Additionally, define the \emph{factored Bellman operator} that
preserves the additive structure but performs a joint max:
\begin{equation}
\label{eq:factored-bellman-proof}
(\hat{\mathcal{T}} \hat{Q})(s, a) := \sum_R r^R(s^R, a^R) +
\gamma \sum_{s'} P(s' \mid s, a) \max_{a'} \sum_R \hat{Q}_R(s'^R,
a'^R),
\end{equation}
where $\hat{Q} = \sum_R \hat{Q}_R$ denotes a factored Q-function.
\end{definition}

The distinction between $\hat{\mathcal{T}}$ and the collection
$\{\mathcal{T}_R\}$ is essential:
\begin{itemize}[nosep]
\item $\hat{\mathcal{T}}$ performs a \emph{joint} max over all
agents' actions, accounting for action coupling across overlapping
regions. Its fixed point $\hat{Q}^*$ is the best factored
Q-function that correctly handles inter-region coordination.
\item Each $\mathcal{T}_R$ performs an \emph{independent} max over
actions within region $R$ only, ignoring that these actions also
appear in other regions. The collection of fixed points
$\{Q_R^{*}\}$ is what our regional Q-learning algorithm computes.
\end{itemize}

\begin{lemma}[Contraction]
\label{lem:contraction-proof}
The operators $\mathcal{T}$, $\hat{\mathcal{T}}$, and each
$\mathcal{T}_R$ are $\gamma$-contractions in $\|\cdot\|_\infty$.
\end{lemma}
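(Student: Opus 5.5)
The plan is to verify the sup-norm contraction inequality $\|\mathcal{O}Q_1-\mathcal{O}Q_2\|_\infty\le\gamma\|Q_1-Q_2\|_\infty$ for each of the three operators in Definition~\ref{def:bellman-ops-proof} separately. Each time we use the same two facts already exploited in Step~2 of the proof of Theorem~\ref{thm:tabular-convergence}:
\begin{itemize}
\item the max operator is non-expansive, $|\max_a f(a)-\max_a g(a)|\le\max_a|f(a)-g(a)|$;
\item transition kernels are probability distributions, so averaging against them does not increase the sup norm.
\end{itemize}
The reward terms are identical for the two arguments and cancel in every case. So the whole argument reduces to bounding the difference of the bootstrapped terms.

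For the regional operator $\mathcal{T}_R$ in~\eqref{eq:regional-bellman-proof}, nothing new is needed. It coincides with $\mathcal{T}_{0,R}$ in~\eqref{eq:bellman-op-prim}, up to the choice of reward signal, which cancels. We therefore invoke Step~2 of the proof of Theorem~\ref{thm:tabular-convergence} verbatim. The only prerequisite is that $P(s'^R\mid s^R,a^R)=\prod_{i\in R}P_i(s'^i\mid s^i,a^i)$ is a genuine distribution on $\mathcal{S}^R$, which holds under (A1).

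For the joint operator $\mathcal{T}$ in~\eqref{eq:joint-bellman-proof}, we repeat the same computation on the joint space $\mathcal{S}\times\mathcal{A}$. The difference at $(s,a)$ equals $\gamma\sum_{s'}P(s'\mid s,a)\bigl[\max_{a'}Q_1(s',a')-\max_{a'}Q_2(s',a')\bigr]$. Max non-expansiveness bounds the bracket by $\|Q_1-Q_2\|_\infty$, and $\sum_{s'}P(s'\mid s,a)=1$ gives the factor $\gamma$. Taking the supremum over $(s,a)$ finishes this case; (A1) is not even needed here.

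The factored operator $\hat{\mathcal{T}}$ in~\eqref{eq:factored-bellman-proof} is the case I expect to require care, because it must be stated what space it acts on and in which norm. I will regard $\hat{\mathcal{T}}$ as acting on the joint function $\hat Q=\sum_R\hat Q_R$, viewed as an element of $\mathbb{R}^{|\mathcal{S}|\times|\mathcal{A}|}$ with the sup norm over joint pairs. Then $\hat{\mathcal{T}}$ has exactly the form of $\mathcal{T}$ applied to $\hat Q$, and the same two-line bound gives $\|\hat{\mathcal{T}}\hat Q_1-\hat{\mathcal{T}}\hat Q_2\|_\infty\le\gamma\|\hat Q_1-\hat Q_2\|_\infty$.

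The subtlety is that contraction in the sup norm of the sum does not control the individual components $\hat Q_R$. The decomposition into regional pieces is not unique, since constants can be shifted between regions. I will therefore state explicitly that the contraction, and hence the uniqueness of the fixed point $\hat Q^*$, refers to the summed joint function. For $\hat Q^*$ to be a fixed point within the factored class, one needs the image of a factored function to stay factored. I will note that, under (A1)--(A2) with a joint max, the image need not be exactly additive. In that case the lemma is claimed only as a contraction on the ambient joint space, and the fixed point $\hat Q^*$ is understood in that space. This is also the sense in which it is used later in the error decomposition of Theorem~\ref{thm:total-error}.
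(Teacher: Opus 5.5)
Your proof is correct and follows the paper's route: the paper just invokes discounting plus non-expansiveness of $\max$ (Step~2 of the proof of Theorem~\ref{thm:tabular-convergence}) for all three operators. Your explicit treatment of $\hat{\mathcal{T}}$ as a map on the ambient joint space adds a domain clarification that the paper leaves implicit. One caution on your closing remark, which does not affect the lemma: by (A2), $\hat{\mathcal{T}}\hat Q=\mathcal{T}\hat Q$ for every factored $\hat Q$, so on the ambient space $\hat{\mathcal{T}}$ is just $\mathcal{T}$ and its fixed point is $Q^*$ itself. That is not the factored ``best sum-of-regions'' $\hat Q^*$ that the proof of Theorem~\ref{thm:total-error} intends, so the remark does not describe how the paper uses the lemma.
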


\begin{proof}
For $\mathcal{T}$ and $\hat{\mathcal{T}}$: the discount factor
$\gamma$ provides contraction, and the max operator is
non-expansive (cf.\ Step~2 of the proof of
Theorem~\ref{thm:tabular-convergence}). For $\mathcal{T}_R$: the
same argument applied to the regional state-action space.
\end{proof}

\subsubsection{Proof of Proposition~\ref{prop:simple-spread-exact}:
Structural Error Characterization}

\begin{proof}
We characterize when the structural error $\beta$ vanishes.

\textbf{Part (a): Non-overlapping regions.}

Suppose the coordination graph $\mathcal{C}_R$ is such that each
agent $i$ belongs to \emph{at most one} region (i.e., the regions
are disjoint). Then actions in different regions are completely
independent: action $a^R$ in region $R$ does not appear in any other
region $R'$. In this case, the joint max decomposes:
\[
\max_{a'} \sum_R Q_R(s'^R, a'^R) = \sum_R \max_{a'^R}
Q_R(s'^R, a'^R),
\]
because each $a'^R$ can be optimized independently. It follows that
$\hat{\mathcal{T}} \hat{Q} = \sum_R \mathcal{T}_R Q_R$ for any
factored $\hat{Q} = \sum_R Q_R$. Hence the factored Bellman
operator coincides with independent regional operators, and the
fixed points agree: $\hat{Q}^* = \sum_R Q_R^{*}$.

Moreover, by an inductive argument on the value iteration
$Q_0 = 0$, $Q_{t+1} = \mathcal{T} Q_t$: the reward factorizes
(Assumption~(A2)), the transitions factorize (Assumption~(A1)), and
the max decomposes (non-overlapping regions), so each iterate $Q_t$
factorizes exactly. Taking $t \to \infty$, $Q^* = \sum_R Q_R^{*}$
and $\beta = 0$.

\textbf{Part (b): Overlapping regions.}

When regions overlap---that is, some agent $i$ belongs to multiple
regions---the joint max does \emph{not} decompose in general. To
see why, consider agent $i$ belonging to regions $R$ and $R'$. The
action $a'^i$ appears in both $Q_R(s'^R, a'^R)$ and
$Q_{R'}(s'^{R'}, a'^{R'})$. Optimizing $a'^i$ for $Q_R$ alone may
differ from optimizing it for $Q_R + Q_{R'}$ jointly. Hence:
\[
\max_{a'} \sum_R Q_R(s'^R, a'^R) \;\leq\;
\sum_R \max_{a'^R} Q_R(s'^R, a'^R),
\]
with equality only in special cases (e.g., when the optimal $a'^i$
is the same for all regions containing agent $i$).

For overlapping coordination graphs, the structural error $\beta$
captures the gap between the joint and independent Bellman
operators:
\begin{equation}
\label{eq:beta-overlap}
\beta = \sup_{s,a} \left| \sum_R \max_{a'^R} Q_R^{*}(s'^R, a'^R)
- \max_{a'} \sum_R Q_R^{*}(s'^R, a'^R) \right|
\frac{\gamma}{1-\gamma}.
\end{equation}
This is precisely the per-step ``coordination gap'' between
independent and joint action selection, propagated over the
infinite horizon. For the pairwise coordination graph with $N \geq
3$ agents, regions overlap (each agent appears in $N-1$ pairs), so
$\beta \geq 0$ in general.

\textbf{Part (c): When is $\beta$ small?}

Even for overlapping graphs, $\beta$ may be small when:
\begin{enumerate}[label=(\roman*), nosep]
\item \emph{Weak inter-region coupling}: If the Q-values are
dominated by individual agent terms rather than pairwise
interactions, then the independent max is a good approximation to
the joint max.
\item \emph{Max-Sum compensates}: Although $\beta > 0$ for
independent regional Q-learning, the Max-Sum action selection in
Layer~2 performs the \emph{joint} max (approximately). This means
the coordination error $\epsilon_{\mathrm{MS}}$ and the structural
error $\beta$ partially compensate: Max-Sum corrects the action
coupling that independent Q-learning misses.
\end{enumerate}
In the Simple Spread environment, the pairwise collision penalties
are relatively small compared to the individual landmark rewards.
Hence, the coupling between overlapping regions is weak, and the
structural error $\beta$ is small in practice---though not exactly
zero as previously claimed.
\end{proof}

\begin{remark}[Relation to Guestrin et al.\ (2002)]
\label{rem:guestrin}
\citet{guestrin2002contextspecific} showed that the optimal
Q-function for a factored MDP can be \emph{represented} as a sum
of regional functions when the coordination graph captures all
reward and transition dependencies. However, their result defines
the regional components through a \emph{coupled} fixed-point
system (the factored Bellman operator $\hat{\mathcal{T}}$), not
through \emph{independent} regional Bellman operators
$\{\mathcal{T}_R\}$. The independent regional Q-learning used in
CG-CMARL is an approximation that avoids the coupling at the cost of
structural error $\beta$. Max-Sum message passing recovers part of
this lost coordination at action selection time.\hfaf
\end{remark}

\subsubsection{Proof of Theorem~\ref{thm:total-error}:
Compositional Error Bound}

\begin{proof}
We decompose the total error using the triangle inequality.

\textbf{Step 1: Define intermediate Q-functions.}

Let:
\begin{itemize}[nosep]
\item $Q^* =$ optimal joint Q-function (fixed point of
$\mathcal{T}$)
\item $\hat{Q}^* = \sum_R \hat{Q}_R^{*} =$ optimal factored
Q-function (fixed point of $\hat{\mathcal{T}}$, i.e., the best
sum-of-regions approximation with joint action selection)
\item $\tilde{Q} = \sum_R Q_R^{*} =$ sum of independently learned
regional Q-functions (fixed points of $\{\mathcal{T}_R\}$)
\item $Q_{\mathrm{learned}} = \sum_R Q_{\mathrm{learned},R} =$
learned Q-function (with finite samples and function approximation)
\end{itemize}

\textbf{Step 2: Apply triangle inequality.}
\begin{equation}
\label{eq:triangle-decomp}
\|Q_{\mathrm{learned}} - Q^*\|_\infty \leq
\underbrace{\|Q_{\mathrm{learned}} - \tilde{Q}\|_\infty}_{(a)} +
\underbrace{\|\tilde{Q} - \hat{Q}^*\|_\infty}_{(b)} +
\underbrace{\|\hat{Q}^* - Q^*\|_\infty}_{(c)}.
\end{equation}

\textbf{Step 3: Bound term (c) --- Structural error.}

Term (c) measures the approximation error from restricting to
factored Q-functions. By the approximate dynamic programming
framework~\citep[Proposition~6.1]{bertsekas2019reinforcement}, if
the factored Bellman operator $\hat{\mathcal{T}}$ approximates the
joint operator $\mathcal{T}$ with per-step error
$\beta := \sup_{Q} \|\hat{\mathcal{T}} Q - \mathcal{T} Q\|_\infty$,
then the fixed-point error satisfies:
\[
\|\hat{Q}^* - Q^*\|_\infty \leq \frac{\beta}{1 - \gamma}.
\]
By Proposition~\ref{prop:simple-spread-exact}, $\beta = 0$ when
regions are non-overlapping, and $\beta \geq 0$ for overlapping
graphs such as the pairwise coordination graph with $N \geq 3$.

\textbf{Step 4: Bound term (b) --- Independent learning error.}

Term (b) measures the gap between independent regional Q-learning
and the coupled factored optimum. This arises because each
$\mathcal{T}_R$ performs an independent max over $a'^R$, while
$\hat{\mathcal{T}}$ performs a joint max over $a'$. The per-step
gap is bounded by the ``coordination deficit'' from independent
action selection:
\[
\|\tilde{Q} - \hat{Q}^*\|_\infty
\leq \frac{\epsilon_{\mathrm{MS}}}{1 - \gamma},
\]
where $\epsilon_{\mathrm{MS}}$ is the Max-Sum coordination error,
defined as the worst-case gap between the joint optimum
$\max_{a'} \sum_R Q_R(s'^R, a'^R)$ and the action selected by
Max-Sum (or, in the case of independent learning, the per-region
independent maxima). For tree-structured graphs,
$\epsilon_{\mathrm{MS}} = 0$ because Max-Sum recovers the exact
joint optimum.

\emph{Justification}: The factored Bellman operator
$\hat{\mathcal{T}}$ and the sum of independent operators differ
only in the action selection step. The contraction property ensures
that a per-step error of $\epsilon_{\mathrm{MS}}$ in action
selection propagates to at most $\epsilon_{\mathrm{MS}}/(1-\gamma)$
in the fixed-point error, by the standard error propagation bound
for approximate value
iteration~\citep[Proposition~6.1]{bertsekas2019reinforcement}.

\textbf{Step 5: Bound term (a) --- Sampling and representation
error.}

This combines the finite-sample error and function approximation
error:
\[
\|Q_{\mathrm{learned}} - \tilde{Q}\|_\infty \leq
\epsilon_{\mathrm{sample}} + \epsilon_{\mathrm{NN}}.
\]
The sampling error satisfies
$\epsilon_{\mathrm{sample}} = O(1/\sqrt{n})$ by
Proposition~\ref{prop:td-sampling} (see
Section~\ref{app:sampling-proof}). The representation error
$\epsilon_{\mathrm{NN}} := \min_\theta \max_{R,s^R,a^R}
|Q_{R,\theta}(s^R, a^R) - Q_R^{*}(s^R, a^R)|$ is the best
approximation error achievable by the neural network class.

\textbf{Step 6: Combine.}
\[
\|Q_{\mathrm{learned}} - Q^*\|_\infty \leq
\frac{\beta}{1-\gamma}
+ \frac{\epsilon_{\mathrm{MS}}}{1-\gamma}
+ O\!\left(\frac{1}{\sqrt{n}}\right)
+ \epsilon_{\mathrm{NN}}.
\]
\end{proof}

\subsection{Proof of Policy Suboptimality Bound}
\label{app:proof-policy}

\begin{lemma}[Simulation Lemma~{\citep[Lemma~2]{kearns2002near}}]
\label{lem:simulation}
Let $\hat{Q}$ satisfy
$\|\hat{Q} - Q^*\|_\infty \leq \epsilon$, and let
$\hat{\pi}(s) = \argmax_a \hat{Q}(s, a)$. Then:
\[
V^*(s) - V^{\hat{\pi}}(s) \leq \frac{2\epsilon}{1 - \gamma}.
\]
\end{lemma}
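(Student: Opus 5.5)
The plan is the standard performance-difference argument, run through the greedy property of $\hat{\pi}$. Throughout, $V^*(s)=\max_a Q^*(s,a)$ and $Q^{\hat{\pi}}$ is the action-value of $\hat{\pi}$. The goal is a per-state, one-step loss bound, which is then propagated through the Bellman equation of $\hat{\pi}$. Following \citet[Lemma~2]{kearns2002near}, the argument uses only that $\hat{Q}$ is an $\epsilon$-approximation of $Q^*$ in sup-norm, the Bellman equations for $Q^*$ and $Q^{\hat{\pi}}$, and boundedness of rewards from (A2), which guarantees all quantities are finite. No factored structure is needed, so the lemma applies verbatim to the factored $\hat{Q}$ of Theorem~\ref{thm:total-error}.

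\textbf{Step 1 (one-step loss).} Fix $s$, let $a^*\in\argmax_a Q^*(s,a)$ and $\hat a=\hat{\pi}(s)$. Since $\hat{Q}(s,\hat a)\ge \hat{Q}(s,a^*)$,
\[
Q^*(s,a^*)-Q^*(s,\hat a)\le \bigl[Q^*(s,a^*)-\hat{Q}(s,a^*)\bigr]+\bigl[\hat{Q}(s,\hat a)-Q^*(s,\hat a)\bigr]\le 2\epsilon .
\]
Hence $V^*(s)-Q^*(s,\hat{\pi}(s))\le 2\epsilon$ for every $s$.

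\textbf{Step 2 (propagation).} Write
\[
V^*(s)-V^{\hat{\pi}}(s)=\bigl[V^*(s)-Q^*(s,\hat{\pi}(s))\bigr]+\bigl[Q^*(s,\hat{\pi}(s))-Q^{\hat{\pi}}(s,\hat{\pi}(s))\bigr].
\]
The second bracket equals $\gamma\,\mathbb{E}_{s'\sim P(\cdot\mid s,\hat{\pi}(s))}\bigl[V^*(s')-V^{\hat{\pi}}(s')\bigr]$, because both action-values share the reward $r(s,\hat{\pi}(s))$. Set $\Delta:=\sup_s\bigl(V^*(s)-V^{\hat{\pi}}(s)\bigr)$, which is finite by (A2). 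Then $\Delta\le 2\epsilon+\gamma\Delta$, so $\Delta\le 2\epsilon/(1-\gamma)$. Equivalently, one can unroll the recursion along the trajectory of $\hat{\pi}$ as a geometric sum.

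\textbf{Main obstacle.} The only delicate point is interpretive: for the executed CG-CMARL policy, $\hat{\pi}$ is the Max-Sum output rather than an exact $\argmax$ of $\hat{Q}$. If it is only $\epsilon_{\mathrm{MS}}$-greedy, Step~1 gains an additive $\epsilon_{\mathrm{MS}}$, giving a per-state loss of $2\epsilon+\epsilon_{\mathrm{MS}}$ and the bound $(2\epsilon+\epsilon_{\mathrm{MS}})/(1-\gamma)$. The plan is to prove the lemma as stated, for the exact greedy policy, and to note this extension in a remark.
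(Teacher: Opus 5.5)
Your proof is correct. The greedy inequality $\hat{Q}(s,\hat{\pi}(s)) \ge \hat{Q}(s,a^*)$ gives the per-state loss $V^*(s)-Q^*(s,\hat{\pi}(s)) \le 2\epsilon$. Because $Q^*$ and $Q^{\hat{\pi}}$ share the reward $r(s,\hat{\pi}(s))$, their Bellman equations yield $\Delta \le 2\epsilon + \gamma\Delta$, and $\Delta$ is finite by bounded rewards.

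The paper takes a different route only in the sense that it gives no argument. It states the lemma with a citation to Kearns and Singh and applies it directly in the policy-suboptimality proof. What your version adds is self-containment, and it also pins down the provenance. Kearns and Singh's simulation lemma is a model-perturbation result: it compares the returns of a fixed policy in two nearby MDPs. The inequality stated here is instead the greedy-policy loss bound usually credited to Singh and Yee (1994), which is exactly what your two steps establish.

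Your closing remark is worth keeping. The executed CG-CMARL policy is the Max-Sum output, not an exact $\argmax$ of $\hat{Q}$. Charging that gap to Step~1 gives $(2\epsilon+\epsilon_{\mathrm{MS}})/(1-\gamma)$ and keeps execution-time coordination error separate from $Q$-estimation error. The paper's suboptimality proof does not make this separation: there $\epsilon_{\mathrm{MS}}$ enters only through $\epsilon$ and is therefore amplified by $2/(1-\gamma)^2$.
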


\begin{proof}[Proof of Policy Suboptimality]
By Theorem~\ref{thm:total-error}, the learned Q-function satisfies
\[
\|Q_{\mathrm{learned}} - Q^*\|_\infty \leq \epsilon :=
\frac{\beta + \epsilon_{\mathrm{MS}}}{1-\gamma}
+ O\!\left(\frac{1}{\sqrt{n}}\right) + \epsilon_{\mathrm{NN}}.
\]
Applying Lemma~\ref{lem:simulation}:
\[
V^*(s) - V^{\hat{\pi}}(s) \leq \frac{2\epsilon}{1-\gamma}
= \frac{2}{1-\gamma} \left[
\frac{\beta + \epsilon_{\mathrm{MS}}}{1-\gamma}
+ O\!\left(\frac{1}{\sqrt{n}}\right)
+ \epsilon_{\mathrm{NN}} \right].
\]
\end{proof}

\subsection{Sampling Error Bound}
\label{app:sampling-proof}

\begin{proposition}[Q-Learning Sample
Complexity~{\citep{evendar2003learning}}]
\label{prop:td-sampling}
Consider tabular Q-learning with learning rate
$\alpha_t = 1/(1 + \mathrm{visits}(s,a))$, run for $T$ steps.
With probability at least $1 - \delta$:
\[
\|Q_T - Q^*\|_\infty \leq O\!\left(
\frac{R_{\max}}{(1-\gamma)^2}
\sqrt{\frac{\log(|\mathcal{S}||\mathcal{A}|T/\delta)}{n_{\min}}}
\right),
\]
where $n_{\min} = \min_{s,a} N_T(s,a)$ is the minimum visitation
count.
\end{proposition}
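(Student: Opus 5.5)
The plan is to write tabular Q-learning as the asynchronous stochastic approximation already used in Step~3 of the proof of Theorem~\ref{thm:tabular-convergence}. Then separate the error $\Delta_t := Q_t - Q^*$ into a deterministic part, which the $\gamma$-contraction of Lemma~\ref{lem:contraction-proof} drives down, and a martingale noise part, controlled by a concentration inequality together with a union bound. The $\log(|\mathcal{S}||\mathcal{A}|T/\delta)$ factor comes from that union bound. The $\sqrt{1/n_{\min}}$ rate comes from the fact that each entry $Q_T(s,a)$ has absorbed at least $n_{\min}$ noisy Bellman backups.

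\textbf{Step 1: per-entry recursion.} Fix $(s,a)$ and index its updates by its visit count $m = 1,\dots,N_T(s,a)$. With $\alpha_m = 1/(1+m)$ the recursion is
\[
\Delta_{m+1}(s,a) = (1-\alpha_m)\Delta_m(s,a) + \alpha_m\bigl[(\mathcal{T}Q_{t_m} - \mathcal{T}Q^*)(s,a) + w_{t_m}\bigr].
\]
Here $w_{t_m}$ is a martingale difference with $|w_{t_m}| \le 2R_{\max}/(1-\gamma)$, because all iterates stay in the ball of radius $R_{\max}/(1-\gamma)$ by induction. I would split $\Delta = Y + W$. The term $W$ is the pure weighted-noise sum $W_m = \sum_{k\le m}\beta_{k,m} w_{t_k}$ with weights $\beta_{k,m} = \alpha_k \prod_{j=k+1}^{m}(1-\alpha_j)$. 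The term $Y$ satisfies the noiseless recursion driven by $\gamma\|\Delta\|_\infty$.

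\textbf{Step 2: noise term.} For linear step sizes the weights are uniform, $\beta_{k,m} = 1/(m+1)$. Azuma--Hoeffding then gives $|W_m| \le \frac{2R_{\max}}{1-\gamma}\sqrt{2\log(2/\delta')/m}$. I would take a union bound over all $(s,a)$ and all $T$ possible counts with $\delta' = \delta/(|\mathcal{S}||\mathcal{A}|T)$. This makes the noise bound hold simultaneously for every entry, and it is of order $\frac{R_{\max}}{1-\gamma}\sqrt{\log(|\mathcal{S}||\mathcal{A}|T/\delta)/n_{\min}}$ once every count is at least $n_{\min}$.

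\textbf{Step 3: epoch argument for the contraction part.} Following Even-Dar and Mansour, I would partition time into epochs, each ending once every pair has been updated enough that the influence of the previous epoch's error has shrunk by a factor. The aim is the recursion
\[
D_{k+1} \le \gamma' D_k + \epsilon_{\mathrm{noise}}, \qquad \gamma' := \tfrac{1+\gamma}{2}.
\]
Unrolling it gives $\limsup D_k \le \epsilon_{\mathrm{noise}}/(1-\gamma') = O(\epsilon_{\mathrm{noise}}/(1-\gamma))$. This supplies the second factor of $1/(1-\gamma)$ and yields the stated $R_{\max}/(1-\gamma)^2$ prefactor.

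\textbf{Main obstacle.} I expect the main obstacle to be Step~3 under the exact step size $\alpha = 1/(1+\mathrm{visits})$. Under this rule the contraction of $Y$ proceeds only like $m^{-(1-\gamma)}$, and Even-Dar and Mansour show the resulting bound has exponential dependence on $1/(1-\gamma)$. My first attempt would be to run the epoch argument with the rescaled rule $\alpha_m = 1/(1+(1-\gamma)m)$, or with polynomial rates $\alpha_m = m^{-\omega}$ for $\omega\in(1/2,1)$, for which their analysis gives polynomial horizon dependence. I would then absorb the change of step size into the $O(\cdot)$ constant. Failing that, I would state the bound only in the regime where $n_{\min}$ is large enough for the deterministic term to be dominated by the noise term. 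In both cases, I would record explicitly that the clean $(1-\gamma)^{-2}\sqrt{\log(\cdot)/n_{\min}}$ form depends on this step-size caveat.
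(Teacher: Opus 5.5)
With the step size the statement fixes, $\alpha = 1/(1+\mathrm{visits}(s,a))$, the claimed bound is false for every $\gamma > 1/2$, including the paper's $\gamma = 0.99$. So the obstacle in your last paragraph cannot be worked around, and no completion of Steps~1--3 can prove the statement. Take a single state with one self-looping action, deterministic reward $R_{\max}$, and $Q_0 = 0$. There is no noise, $n_{\min} = T$, $Q^* = R_{\max}/(1-\gamma)$, and each update gives $Q^* - Q_{n+1} = \bigl(1 - \tfrac{1-\gamma}{n+1}\bigr)(Q^* - Q_n)$. Hence
\[
\|Q_T - Q^*\|_\infty = \frac{R_{\max}}{1-\gamma}\prod_{m=1}^{T}\Bigl(1 - \frac{1-\gamma}{m}\Bigr) \geq \frac{2}{5}\cdot\frac{R_{\max}}{1-\gamma}\, T^{-(1-\gamma)} \qquad (\gamma \geq 1/2),
\]
using $\log(1-x) \geq -x - x^2$ on $[0,1/2]$. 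For $\gamma > 1/2$ this decays strictly more slowly than $\sqrt{\log(T/\delta)/T}$. So no constant, even one depending on $\gamma$, makes the stated inequality hold for all $T$. This is exactly what Even-Dar and Mansour prove for linear learning rates: exponential dependence on $1/(1-\gamma)$, plus an example showing it is unavoidable. The paper gives no derivation beyond attributing the bound to them, so it does not establish the statement either.

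Neither of your fallbacks repairs this.
\begin{itemize}
\item \emph{Changing the step size.} Replacing it with $1/(1+(1-\gamma)m)$ or $m^{-\omega}$ changes the iterates $Q_T$, so you would be proving a bound for a different algorithm. That cannot be absorbed into an $O(\cdot)$ constant.
\item \emph{The new rules still miss the stated form.} With $m^{-\omega}$, $\omega<1$, the iterate effectively averages only about $m^{\omega}$ recent samples, so the noise term is of order $m^{-\omega/2}$, not $m^{-1/2}$. With the rescaled linear rule, the weights $\beta_{k,m}$ are no longer uniform and the effective window is about $(1-\gamma)m$ samples. Azuma with range $2R_{\max}/(1-\gamma)$ then gives a noise term of order $\frac{R_{\max}}{(1-\gamma)^{3/2}}\sqrt{\log(\cdot)/m}$, and your contraction step multiplies this by $1/(1-\gamma)$, giving $(1-\gamma)^{-5/2}$. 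Reaching $(1-\gamma)^{-2}$ would need a variance-aware (Freedman-type) concentration argument rather than the worst-case range.
\item \emph{The large-$n_{\min}$ regime.} This fallback has the regimes reversed. For $\gamma > 1/2$, the deterministic term, of order $n_{\min}^{-(1-\gamma)}$, is the one that dominates as $n_{\min}$ grows. There is no large-$n_{\min}$ regime in which the noise term controls the error.
\end{itemize}
The right conclusion is that the proposition must be restated, with a different step size and its own rate and horizon dependence, before it can be proved. The $O(1/\sqrt{n})$ sampling term in Theorem~\ref{thm:total-error} rests on this proposition and inherits the same problem.
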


\begin{remark}[Regional Application]
\label{rem:regional-sampling}
For factored Q-learning, each region $R$ runs independent
Q-learning on a state-action space of size
$|\mathcal{S}^R| \times |\mathcal{A}^R|$. Applying
Proposition~\ref{prop:td-sampling} to each region and taking a
union bound over $|\mathcal{C}_R|$ regions:
\[
\max_R \|Q_{R,T} - Q_R^{*}\|_\infty \leq O\!\left(
\frac{R_{\max}}{(1-\gamma)^2}
\sqrt{\frac{\log(|\mathcal{C}_R| \cdot \max_R
|\mathcal{S}^R||\mathcal{A}^R| \cdot T/\delta)}{n_{\min}}}
\right)
\]
with probability at least $1 - \delta$. With parameter sharing,
transitions from all $\binom{N}{2}$ pairwise regions contribute to
training a single network, effectively increasing $n_{\min}$ by a
factor of $\binom{N}{2}$ relative to training separate
networks.\hfaf
\end{remark}

\subsection{Constraint Violation Bound}
\label{app:constraint-violations}

\begin{proposition}[Finite-Time Constraint Violation]
\label{prop:constraint-violation}
Under the assumptions of Theorem~\ref{thm:dual-convergence}, the
cumulative constraint violation over $K$ episodes satisfies
\[
\sum_{k=0}^{K-1} \left[ c_{\mathrm{thresh}}
- \bar{c}_k \right]_+
\leq O\!\left( \frac{(\lambda_0 - \lambda^*)^2}{\eta}
+ \eta K \right),
\]
where $\bar{c}_k := \frac{1}{N}\sum_i c_k^i$ is the team-average
collision cost in episode $k$. Optimizing over $\eta$ yields a
cumulative violation of $O(\sqrt{K})$.
\end{proposition}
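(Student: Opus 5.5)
The plan is to treat the per-episode multiplier recursion as projected online subgradient descent on the dual and run the standard quadratic-potential argument with a comparator slightly above $\lambda^*$. By Step~1 of the proof of Theorem~\ref{thm:dual-convergence}, the local multipliers stay close to their average, so I work with $\bar{\lambda}_k$ and
\[
\bar{\lambda}_{k+1} = \Pi_\Lambda\!\bigl[\bar{\lambda}_k + \eta\,(c_{\mathrm{thresh}} - \bar{c}_k)\bigr],
\]
where $\Lambda = [0,\lambda_{\max}]$ contains $\lambda^*$. Write $g_k := c_{\mathrm{thresh}} - \bar{c}_k$. Assumption~(A2) gives $|g_k| \le G$ for a constant $G$ depending on $R_{\max}$ and the per-step scaling.

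\textbf{Step 1 (potential recursion).} For any fixed comparator $\lambda \in \Lambda$, non-expansiveness of the projection gives
\[
(\bar{\lambda}_{k+1} - \lambda)^2 \le (\bar{\lambda}_k - \lambda)^2 + 2\eta\,(\bar{\lambda}_k - \lambda)\,g_k + \eta^2 G^2 .
\]
Summing over $k = 0,\dots,K-1$ and telescoping yields
\[
\sum_k (\lambda - \bar{\lambda}_k)\,g_k \le \frac{(\lambda_0 - \lambda)^2}{2\eta} + \frac{\eta G^2 K}{2}.
\]

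\textbf{Step 2 (primal side).} I use the fast-timescale result, Theorem~\ref{thm:dual-convergence}(i): the episode-$k$ policy $\pi_k$ is (asymptotically) a maximizer of $\mathcal{L}(\cdot,\bar{\lambda}_k)$. Therefore
\[
\mathcal{L}(\pi_k,\bar{\lambda}_k) \ge \mathcal{L}(\pi^*,\bar{\lambda}_k) \ge V_0^{\pi^*} .
\]
Rearranged in per-step units, this reads $-\bar{\lambda}_k g_k \ge V_0^* - V_0^{\pi_k}$ up to the $(1-\gamma)$ scaling and a martingale noise term. Strong duality under (A3) gives $V_0^* - V_0^{\pi_k} \ge -\lambda^* g_k$ via the saddle-point inequality. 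Combining the two, $(\lambda^* - \bar{\lambda}_k)g_k \ge 0$ in expectation.

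\textbf{Step 3 (comparator choice).} Take $\lambda = \lambda^* + 1$, assuming $\lambda_{\max}$ is large enough. The left side of Step~1 then becomes $\sum_k g_k + \sum_k(\lambda^* - \bar{\lambda}_k)g_k \ge \sum_k g_k$. The noise terms $\bar{c}_k - \mathbb{E}[\bar{c}_k \mid \mathcal{F}_k]$ form a bounded martingale difference sequence; they can be absorbed in expectation, or via Azuma, into an $O(\sqrt{K})$ term that is dominated. This gives
\[
\sum_k g_k = O\!\left(\frac{(\lambda_0 - \lambda^*)^2}{\eta} + \eta K\right),
\]
since the extra $1$ in the comparator is absorbed into the constant. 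Balancing the two terms with $\eta \asymp |\lambda_0 - \lambda^*|/\sqrt{K}$ gives $O(\sqrt{K})$.

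\textbf{Main obstacle.} The statement bounds $\sum_k [g_k]_+$, while the argument above controls only the signed sum $\sum_k g_k$; negative $g_k$ from over-satisfied episodes could mask violations. I would first try the per-episode inequality $\eta\,g_k \le \bar{\lambda}_{k+1} - \bar{\lambda}_k$, valid whenever the upper projection is inactive. Combined with the monotone behaviour of $\bar{\lambda}_k$ along violated stretches, this bounds each run of consecutive violations by the total variation of $\bar{\lambda}_k$, which Step~1 already controls by $O((\lambda_0 - \lambda^*)^2/\eta + \eta K)$. The result would be a bound on the violations in the form the paper states.

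Failing that, I would state the result for the positive part of the cumulative sum, $\bigl[\sum_k g_k\bigr]_+$, which is the quantity the standard primal–dual analysis actually delivers. A secondary gap remains: Step~2 uses $\pi_k$ as an exact Lagrangian maximizer, but the executed two-head policy is only conservative. I would absorb that mismatch into an additive error term, as in Theorem~\ref{thm:total-error}.
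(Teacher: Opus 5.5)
Your Steps 1--3 are the standard comparator argument for projected subgradient descent. That is all the paper's one-line sketch invokes (the constant-step subgradient bound it cites, then $\eta\propto 1/\sqrt{K}$), and your steps correctly bound the \emph{signed} sum $\sum_k g_k$. Two small corrections there:
\begin{itemize}
\item The comparator $\lambda^*+1$ leaves $(\lambda_0-\lambda^*-1)^2/(2\eta)$. This is an extra $O(1/\eta)$ that is not absorbed by $(\lambda_0-\lambda^*)^2/\eta$ when $\lambda_0\approx\lambda^*$, though it is harmless after balancing.
\item Step~2 does not need strong duality. Convexity of $d$, with subgradient $V_1^{\pi_k}(s_0)-c_1/(1-\gamma)$ at $\bar\lambda_k$, together with $d(\lambda^*)\le d(\bar\lambda_k)$, already gives $(\lambda^*-\bar\lambda_k)\,\mathbb{E}[g_k\mid\mathcal{F}_k]\ge 0$.
\end{itemize}

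The genuine gap is the one you flagged, and your proposed bridge does not close it. From $\eta[g_k]_+\le[\bar\lambda_{k+1}-\bar\lambda_k]_+$ you only get $\sum_k[g_k]_+\le V^+/\eta$, where $V^+$ is the upward variation of $\bar\lambda_k$. But $V^+=\bar\lambda_K-\lambda_0+V^-$, and the downward variation $V^-$ is essentially $\eta\sum_k[g_k]_-$. Step~1 does not control $V^-$, since it bounds only a signed, $\lambda$-weighted sum. Even granting $V^+=O\bigl((\lambda_0-\lambda^*)^2/\eta+\eta K\bigr)$, dividing by $\eta$ gives $O\bigl((\lambda_0-\lambda^*)^2/\eta^2+K\bigr)$, which is linear in $K$.

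In fact no argument can give the stated bound with the positive part inside the sum: under your reading (the only one in which the summand measures violation), it is false under the assumptions of Theorem~\ref{thm:dual-convergence}. Take one absorbing state and two actions: $A$ with $r_0=1$, $r_1=-1$, and $B$ with $r_0=r_1=0$, with per-step threshold $-\tfrac12$.
\begin{itemize}
\item Slater holds via $B$, and $\lambda^*=1$.
\item The Lagrangian maximizer is $A$ for $\lambda<1$ (so $g_k=\tfrac12$) and $B$ for $\lambda>1$ (so $g_k=-\tfrac12$). The two-head greedy policy switches at the same point, since $Q_{\mathrm{aug}}(A)-Q_{\mathrm{aug}}(B)=1-\lambda$.
\item With $\lambda_{\max}\ge 2$ and $\eta\le 1$, $\bar\lambda_k$ climbs to $1$ and then alternates across it in steps of $\eta/2$, whatever the tie-breaking.
\end{itemize}
So at least half of the episodes violate by $\tfrac12$: $\sum_{k<K}[g_k]_+\ge\tfrac12\lfloor K/2\rfloor$ for every $\eta$, while $\sum_k g_k=O(1/\eta)$. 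The iterates oscillate around a constrained optimum that requires randomization, and only their average is feasible.

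Your fallback is therefore the correct form of the proposition: a bound on $\bigl[\sum_k g_k\bigr]_+$, i.e.\ on the violation of the episode-averaged constraint reward. The paper's sketch silently identifies $\sum_k[g_k]_+$ with the quantity the subgradient regret bound actually controls.
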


\begin{proof}[Proof Sketch]
This follows from the regret analysis of projected subgradient
descent on the dual function. The dual update moves $\bar{\lambda}$
toward $\lambda^*$ at rate $\eta$, with the per-episode constraint
violation serving as the subgradient. The standard bound for
projected subgradient descent with constant step
size~\citep[Section~3.2.3]{bertsekas2015convex} yields the result.
The $O(\sqrt{K})$ rate is achieved by choosing
$\eta \propto 1/\sqrt{K}$, which balances the bias-variance
tradeoff.
\end{proof}

%% file: suplementary/appD_implementation.tex

This appendix describes the Simple Spread environment used in our
experiments (Section~\ref{app:simple-spread}), the reward and
constraint decompositions
(Section~\ref{app:reward-constraint-decomp}), computational
complexity (Section~\ref{app:complexity}), and a discussion of
when the factorization introduces error
(Section~\ref{app:coverage-violation}). For hyperparameter values
and network architecture, see Appendix~\ref{app:implementation}.

\subsection{Simple Spread Environment}
\label{app:simple-spread}

The \texttt{simple\_spread} environment from
PettingZoo/MPE~\citep{lowe2017multi} consists of $N$ agents and $N$
landmarks in a 2D continuous space. Agents must cover all landmarks
(each landmark should have at least one nearby agent) while avoiding
inter-agent collisions.

\paragraph{State Space.}
Each agent $i$ observes its own velocity and position
$(\mathrm{vel}_i, \mathrm{pos}_i) \in \mathbb{R}^4$, the positions
of all landmarks $(\mathrm{pos}_{\ell_1}, \ldots,
\mathrm{pos}_{\ell_N}) \in \mathbb{R}^{2N}$, and the positions of
all other agents $(\mathrm{pos}_{k})_{k \neq i} \in
\mathbb{R}^{2(N-1)}$. The global state is the concatenation of all
agent states.

\paragraph{Action Space.}
Each agent applies a continuous 2D force. We discretize each
dimension into 5 bins (including no-op), yielding
$|\mathcal{A}^i| = 25$ actions per agent. For a pairwise region
$R = \{i,k\}$, the joint action space has
$|\mathcal{A}^R| = 625$ elements.

\paragraph{Verification of Assumptions.}
Simple Spread satisfies both structural assumptions:
\begin{description}[nosep]
\item[(A1)] \emph{Independent transitions.} Each agent's position
and velocity update depends only on its own state and action
(applied force). There are no physical rebounds on collision---agents
pass through each other, incurring only a reward penalty.
\item[(A2)] \emph{Additive reward.} The global reward decomposes
into individual landmark-distance terms and pairwise collision
penalties (see Section~\ref{app:reward-constraint-decomp}).
\end{description}

\paragraph{Scaling Experiments.}
We evaluate CG-CMARL with $N \in \{3, 4, 6, 10\}$ agents. The
observation dimension, number of pairwise regions, and joint action
space grow as shown in Table~\ref{tab:scaling}.

\begin{table}[h]
\centering
\small
\caption{Scaling of problem dimensions with $N$ agents.}
\label{tab:scaling}
\begin{tabular}{@{}rrrrl@{}}
\toprule
$N$ & $|\mathcal{C}_R|$ & $|o^R|$ &
$|\mathcal{A}|^N$ & Status \\
\midrule
3  & 3   & 16 & $1.6 \times 10^4$  & Tractable (brute force) \\
4  & 6   & 18 & $3.9 \times 10^5$  & Tractable (brute force) \\
6  & 15  & 22 & $2.4 \times 10^8$  & Intractable centrally \\
10 & 45  & 30 & $9.5 \times 10^{13}$ & Intractable centrally \\
\bottomrule
\end{tabular}
\end{table}

\subsection{Reward and Constraint Decomposition}
\label{app:reward-constraint-decomp}

\subsubsection{Primary Reward: Per-Agent Counterfactual Rewards}

The global coverage utility is the negative sum of minimum
agent-to-landmark distances:
\[
U(\mathcal{V}) = -\sum_{\ell=1}^{N} \min_{i \in \mathcal{V}}
\|\mathrm{pos}_i - \mathrm{pos}_\ell\|.
\]

For each pairwise region $R = \{i,k\}$, the primary reward is the
\emph{per-agent counterfactual
reward}~\citep{wolpert2001optimal}:
\begin{equation}
\label{eq:diff-reward-app-D}
r_0^R = \sum_{i \in R}
\bigl[ U(\mathcal{V}) - U(\mathcal{V} \setminus \{i\}) \bigr],
\end{equation}
which sums the marginal contributions of each agent in $R$ to
global coverage. Compared to the global counterfactual
$U(\mathcal{V}) - U(\mathcal{V} \setminus R)$, which removes
the entire region at once, the per-agent variant distinguishes
the contribution of each agent within the region.

\paragraph{Relation to Assumption~(A2).}
For non-overlapping coordination graphs (e.g., the pairwise graph
with $N = 2$), the per-agent counterfactual rewards satisfy the
additive factorization~\eqref{eq:reward-decomp} exactly: each agent
belongs to a single region, so marginal contributions do not overlap
(Proposition~\ref{prop:simple-spread-exact}(i)). For overlapping
pairwise graphs ($N \geq 3$), the marginal contributions overlap
and $\sum_R r_0^R \neq r_0$ in general. We use per-agent
counterfactual rewards in this regime following standard practice in
cooperative MARL~\citep{wolpert2001optimal}, given their strong
credit assignment properties: each agent is rewarded only for the
coverage improvement it individually produces. The resulting discrepancy is absorbed into
the structural error $\beta$, which the compositional bound
(Theorem~\ref{thm:total-error}) already characterizes. The
\texttt{local\_ratio} parameter (set to $0.0$) controls the mixture
between per-agent counterfactual rewards and local rewards; at $0.0$,
the primary reward contains no collision penalty.

\subsubsection{Collision Cost}

In the implementation, collision costs $c^R \geq 0$ are computed
directly; the constraint reward used in the theoretical formulation
(Sections~\ref{sec:two-head}--\ref{sec:lagrangian}) is
$r_1^R = -c^R$, so that the augmented
Q-function~\eqref{eq:augmented-q} with $+\lambda\,Q_{\mathrm{cost}}$
penalizes collisions.

For the cost head, each pairwise region $R = \{i,k\}$ observes a
binary collision indicator:
\begin{equation}
\label{eq:collision-cost-app}
c^R(s^R) = \mathbf{1}\!\left[\|\mathrm{pos}_i - \mathrm{pos}_k\|
< d_{\mathrm{thresh}}\right],
\end{equation}
where $d_{\mathrm{thresh}} = 0.2$ is the collision distance. The
cost head learns $Q_{\mathrm{cost},R}$, the expected discounted
collision frequency for the pair $(i,k)$.

\subsubsection{Constraint Definition}

The Lagrangian multiplier penalizes excessive collisions. Agent
$i$'s episode-level collision cost is
\[
c^i_{\mathrm{episode}} = \frac{1}{H} \sum_{t=1}^{H}
\sum_{k \neq i} \mathbf{1}\!\left[\|\mathrm{pos}_i(t) -
\mathrm{pos}_k(t)\| < d_{\mathrm{thresh}}\right],
\]
where $H$ is the episode length. In terms of the constraint reward
convention, $\hat{V}_1^i = -c^i_{\mathrm{episode}}$, so the dual
update~\eqref{eq:dual-update} increases $\lambda^i$ when
$c^i_{\mathrm{episode}} > c_{\mathrm{thresh}}$ (equivalently, when
$\hat{V}_1^i < -c_{\mathrm{thresh}}$) and decreases it otherwise.
With $c_{\mathrm{thresh}} = 0$, the constraint aims for zero
collisions; in practice, the clamp to $[0, \lambda_{\max}]$ limits
the multiplier's influence, allowing the Pareto sweep to explore
the full coverage--safety tradeoff.

\subsection{Computational Complexity}
\label{app:complexity}

Table~\ref{tab:complexity} compares the per-step computational cost
of centralized, factored (CG-CMARL), and fully independent
approaches.

\begin{table}[h]
\centering
\small
\caption{Computational complexity comparison. Here $|A|$ denotes
the per-agent action size, $|R|_{\max}$ the largest region, and $K$
the number of Max-Sum iterations.}
\label{tab:complexity}
\begin{tabular}{@{}lcc@{}}
\toprule
\textbf{Method} & \textbf{Action Selection} &
\textbf{Q-Update} \\
\midrule
Centralized &
$O(|A|^N)$ &
$O(|A|^N)$ per sample \\
Factored (CG-CMARL) &
$O(|\mathcal{C}_R| |A|^{|R|_{\max}} K)$ &
$O(|\mathcal{C}_R| |A|^{|R|_{\max}})$ per sample \\
Independent (IQL) &
$O(N |A|)$ &
$O(N |A|)$ per sample \\
\bottomrule
\end{tabular}
\end{table}

For the pairwise graph ($|R|_{\max} = 2$,
$|\mathcal{C}_R| = \binom{N}{2}$) with $|A| = 25$ and $K = 10$:

\begin{table}[h]
\centering
\small
\caption{Concrete action-selection cost for Simple Spread.}
\label{tab:concrete-complexity}
\begin{tabular}{@{}rrrr@{}}
\toprule
$N$ & Centralized & CG-CMARL & IQL \\
\midrule
3  & $1.6 \times 10^4$ & $1.9 \times 10^4$ & $75$ \\
4  & $3.9 \times 10^5$ & $3.8 \times 10^4$ & $100$ \\
6  & $2.4 \times 10^8$ & $9.4 \times 10^4$ & $150$ \\
10 & $9.5 \times 10^{13}$ & $2.8 \times 10^5$ & $250$ \\
\bottomrule
\end{tabular}
\end{table}

\emph{Interpretation.} For $N = 3$, CG-CMARL is comparable in cost to
centralized search. The advantage becomes dramatic for $N \geq 6$,
where centralized action selection is intractable while CG-CMARL
remains polynomial. The cost relative to IQL is the price of
coordination: CG-CMARL's $O(N^2)$ scaling reflects the pairwise
interaction structure, while IQL's $O(N)$ ignores it entirely.
With parameter sharing, the Q-update cost is independent of
$|\mathcal{C}_R|$ since all regions share a single network---the
$|\mathcal{C}_R|$ factor applies only to transition collection and
Max-Sum message passing.

\subsection{When Factorization Introduces Error}
\label{app:coverage-violation}

The structural error $\beta$ (Proposition~\ref{prop:simple-spread-exact})
arises from two sources, which we discuss in turn.

\paragraph{Source 1: Action Overlap in Pairwise Graphs.}
As shown in Proposition~\ref{prop:simple-spread-exact}(ii), for
pairwise coordination graphs with $N \geq 3$ agents, the
independent per-region max does not equal the joint max over all
agents' actions. This contributes $\beta > 0$ even when
Assumptions~(A1)--(A2) hold perfectly. The magnitude of this
contribution depends on the strength of coupling between overlapping
regions---specifically, how much the optimal action for agent $i$ in
region $R_{ij}$ differs from its optimal action considering all
regions $\{R_{ik}\}_{k \neq i}$ simultaneously. In Simple Spread,
this coupling is driven by collision penalties, which are typically
small relative to landmark rewards.

\paragraph{Source 2: Transition Dependencies.}
If the environment violates Assumption~(A1)---for instance, if
collisions cause physical rebounds or agents draw from shared
resources---then the regional Bellman operator
$\mathcal{T}_R$~\eqref{eq:regional-bellman-proof} is no longer
exact, because $P(s'^R \mid s, a) \neq P(s'^R \mid s^R, a^R)$.
This adds a transition-coupling component to $\beta$.

\paragraph{Source 3: Reward Dependencies Beyond the Graph.}
If the reward contains interactions not captured by the coordination
graph (e.g., three-body interactions with only pairwise regions),
then Assumption~(A2) is violated and $\beta$ includes a
reward-coverage component.

\begin{example}[Higher-Order Reward Interaction]
In a cooperative defense scenario where a target's state depends on
simultaneous attacks from three agents:
\[
r_{\mathrm{capture}}(s, a^1, a^2, a^3) \neq
\sum_{i < j} r^{ij}(s^{\{i,j\}}, a^{\{i,j\}}),
\]
a pairwise graph cannot represent this interaction. Extending
$\mathcal{C}_R$ to include triplet regions captures the dependency
at the cost of increasing the per-region action space from
$|A|^2$ to $|A|^3$.
\end{example}

\paragraph{Practical Guidance.}
The choice of coordination graph involves a
tractability--accuracy tradeoff:
\begin{description}[nosep]
\item[Richer graphs] (larger regions, more hyperedges): smaller
$\beta$, but larger per-region action spaces and denser factor
graphs (slower Max-Sum, more parameters).
\item[Sparser graphs] (pairwise or tree-structured): larger $\beta$
but faster action selection and simpler message passing ($K$ can be
smaller; $\epsilon_{\mathrm{MS}} = 0$ for trees).
\end{description}
The compositional error bound
(Theorem~\ref{thm:total-error}) provides a principled way to reason
about this tradeoff: increasing graph richness reduces $\beta$ but
may increase $\epsilon_{\mathrm{MS}}$ (denser cycles) and
$\epsilon_{\mathrm{NN}}$ (larger input/output dimensions).

\subsection{Pareto Evaluation on Fixed Landmark Scenarios}
\label{app:scenario-pareto}

The Pareto fronts in Section~\ref{sec:exp-results} are evaluated on
environments with randomized initial landmark positions. To further
probe how each method behaves under controlled layouts, we
additionally evaluate every trained model (CG-CMARL and all
baselines) on three fixed landmark scenarios from our scenario
registry (Appendix~\ref{app:simple-spread}):
\begin{itemize}[nosep]
\item \texttt{spread\_uniform}: $N$ landmarks placed evenly on a
circle of radius $0.7$ (regular polygon vertices). This is a
balanced layout with no geometric bias.
\item \texttt{line}: $N$ landmarks evenly spaced along a horizontal
corridor at $y = 0$. Tests whether agents can split along a linear
structure rather than a $2$D cluster.
\item \texttt{clustered\_pair}: two clusters of roughly $N/2$
landmarks, one centred at $(-0.5, -0.5)$ and one at $(0.5, 0.5)$.
Tests whether agents split into groups and commit to distant
subsets.
\end{itemize}
Training is unchanged (agents train on random scenarios); only
evaluation is run on these fixed layouts. For each scenario we
evaluate the same trained models used in Section~\ref{sec:exp-results}
at the same $\lambda$ sweep for CG-CMARL and at the same
reward-shaping settings for the baselines. Each operating point is
averaged over multiple evaluation episodes with fixed landmarks but
randomized initial agent positions.

Figures~\ref{fig:scenario-pareto-N3}--\ref{fig:scenario-pareto-N10}
show the resulting Pareto fronts, with rows indexed by team size
$N \in \{3, 4, 6, 10\}$ and columns by scenario
(\texttt{spread\_uniform}, \texttt{line}, \texttt{clustered\_pair}).
Each panel plots the average number of colliding agent pairs per
step (x-axis, inverted; lower is better) against coverage
(y-axis), with CG-CMARL's $\lambda$-sweep tracing a curve and
each baseline
appearing as a single operating point. The qualitative pattern
observed in the random-scenario evaluation persists across fixed
layouts: CG-CMARL's front traces or dominates the baselines,
particularly in the moderate-$\lambda$ regime. Performance on
\texttt{clustered\_pair} is the most sensitive to team size,
reflecting the difficulty of splitting agents between distant
subsets as $N$ grows.

\begin{figure}[htbp]
    \centering
    \begin{subfigure}{0.32\textwidth}
        \centering
        \includegraphics[width=\linewidth]{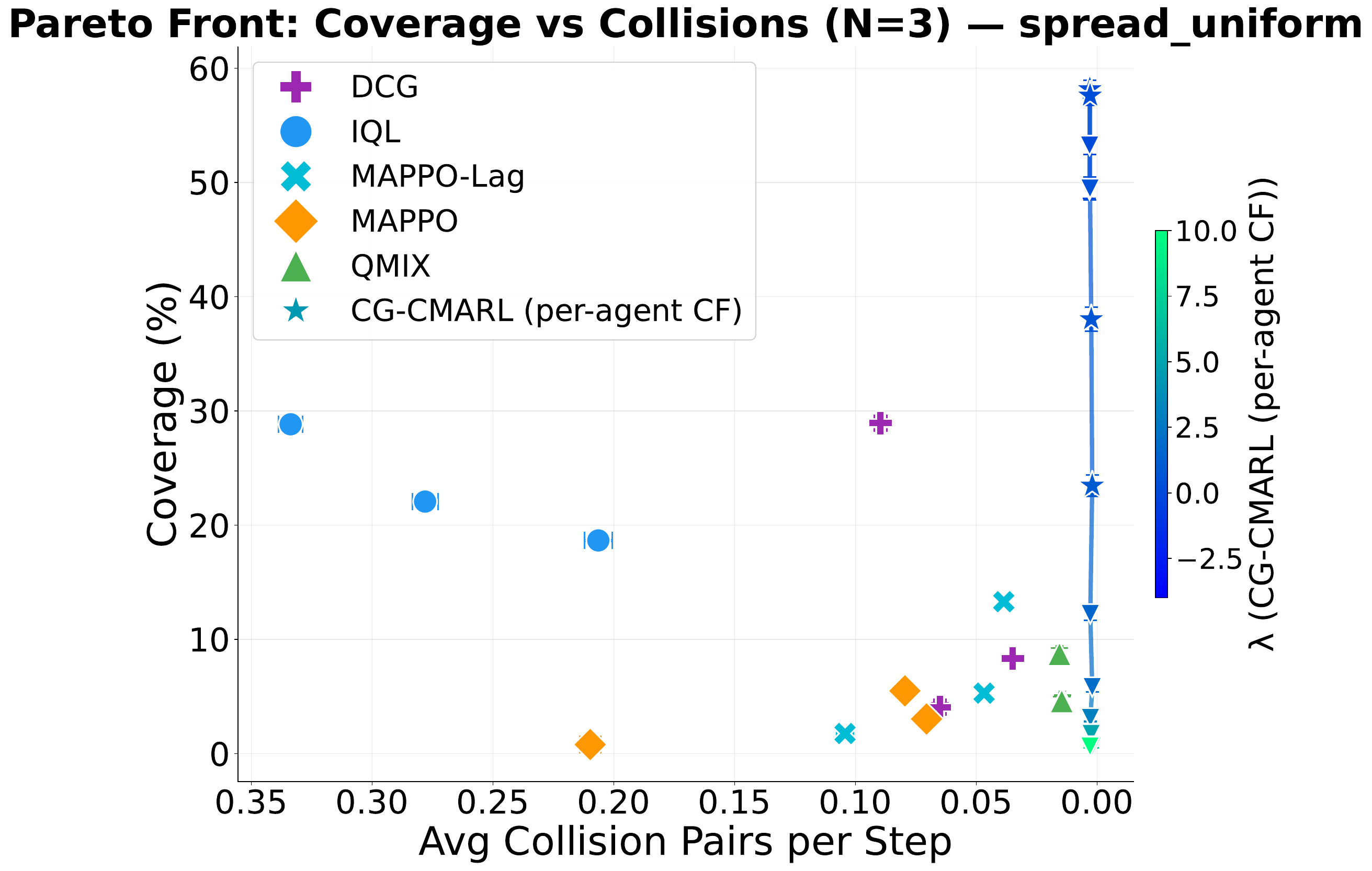}
        \caption{\texttt{spread\_uniform}}
    \end{subfigure}
    \hfill
    \begin{subfigure}{0.32\textwidth}
        \centering
        \includegraphics[width=\linewidth]{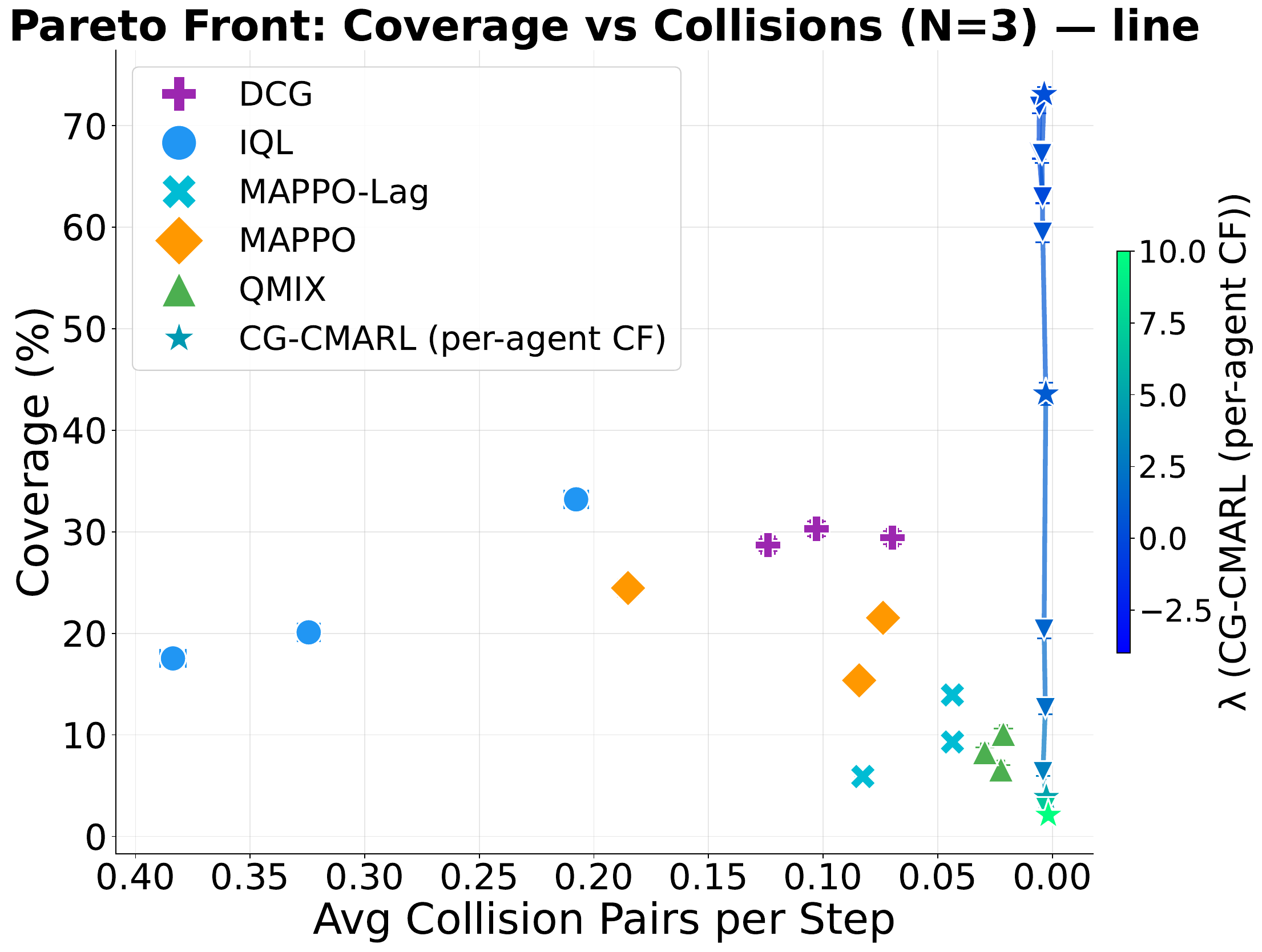}
        \caption{\texttt{line}}
    \end{subfigure}
    \hfill
    \begin{subfigure}{0.32\textwidth}
        \centering
        \includegraphics[width=\linewidth]{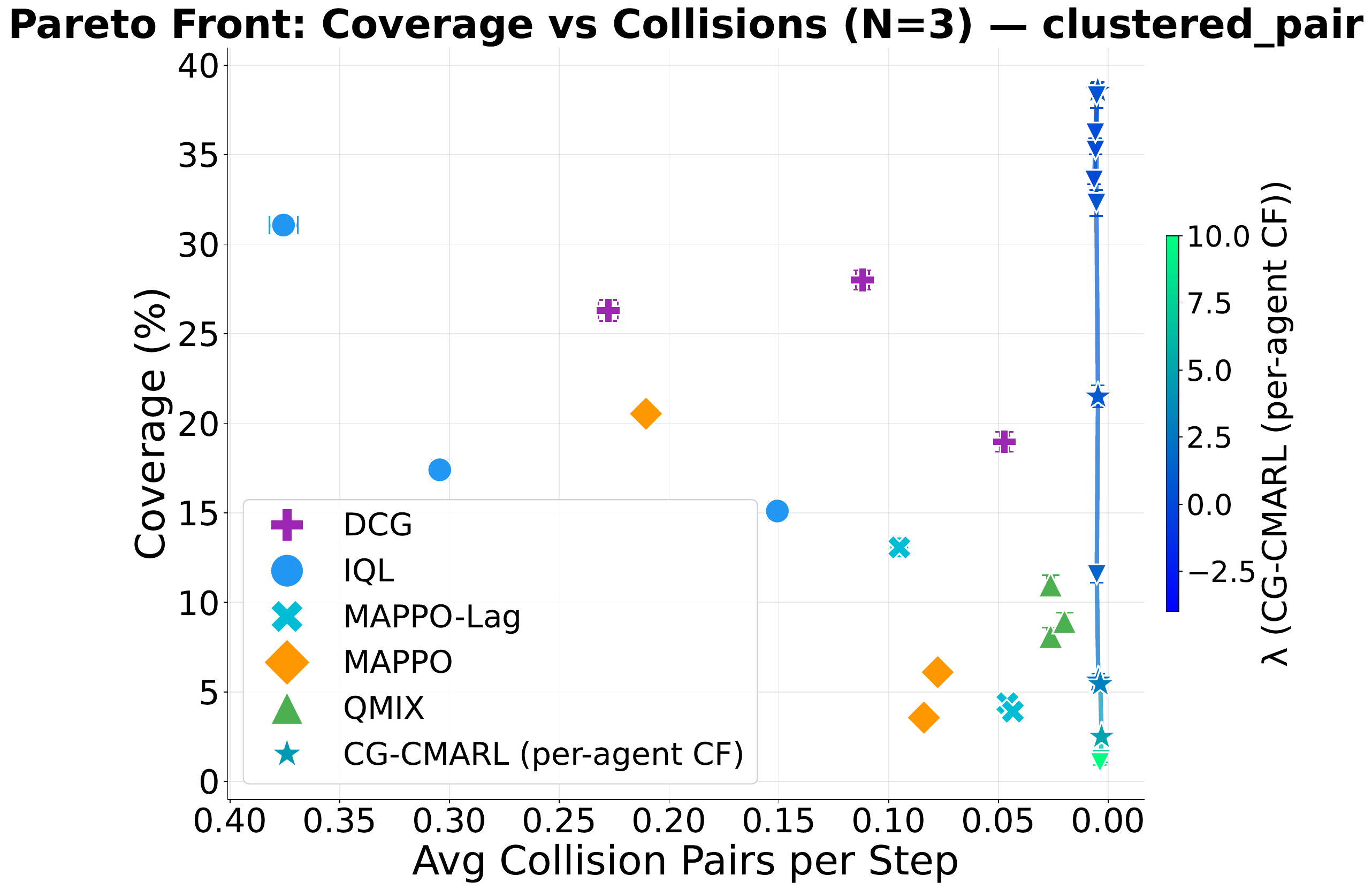}
        \caption{\texttt{clustered\_pair}}
    \end{subfigure}
    \caption{Pareto fronts on fixed landmark scenarios for $N=3$.}
    \label{fig:scenario-pareto-N3}
\end{figure}

\begin{figure}[htbp]
    \centering
    \begin{subfigure}{0.32\textwidth}
        \centering
        \includegraphics[width=\linewidth]{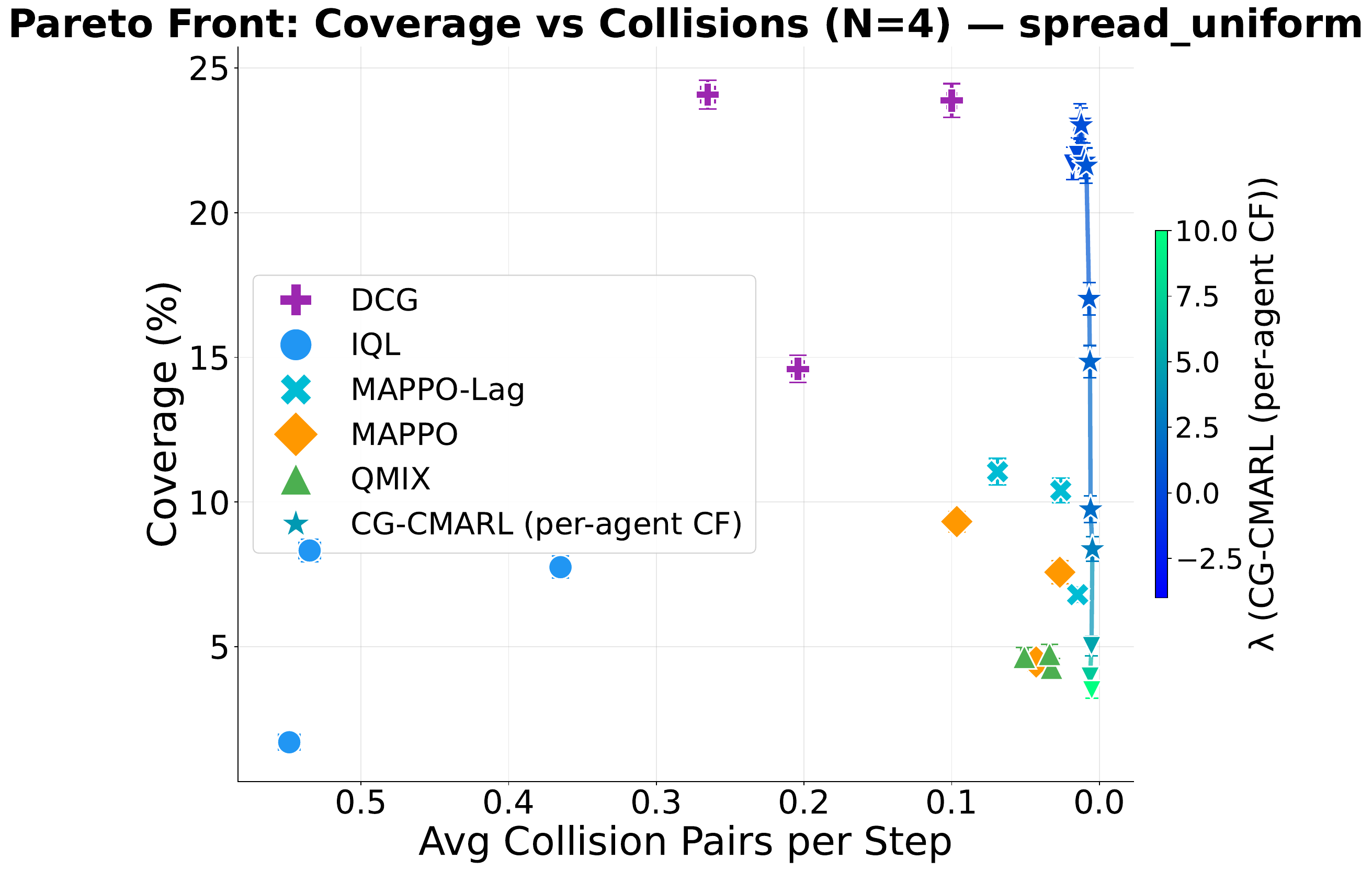}
        \caption{\texttt{spread\_uniform}}
    \end{subfigure}
    \hfill
    \begin{subfigure}{0.32\textwidth}
        \centering
        \includegraphics[width=\linewidth]{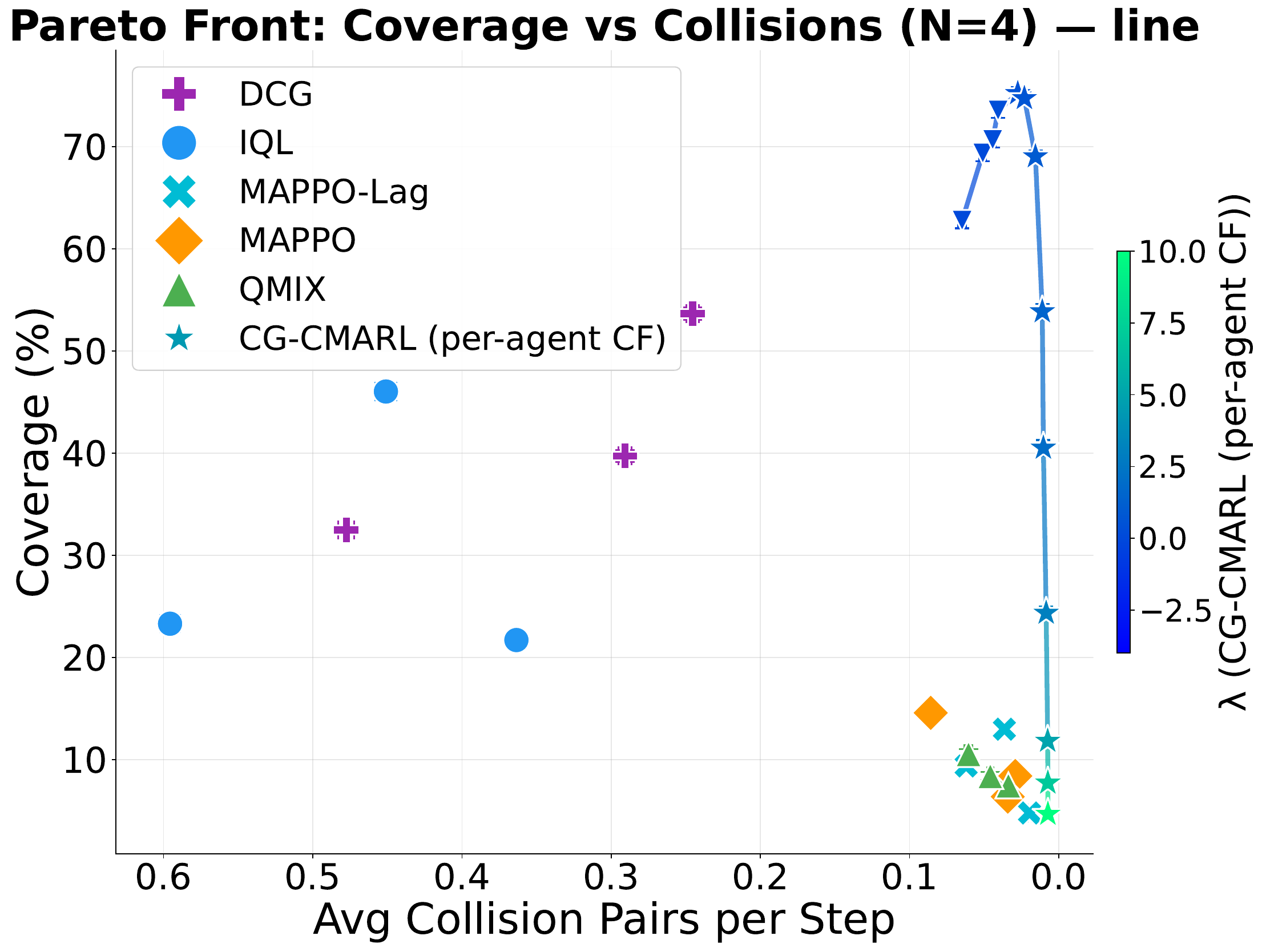}
        \caption{\texttt{line}}
    \end{subfigure}
    \hfill
    \begin{subfigure}{0.32\textwidth}
        \centering
        \includegraphics[width=\linewidth]{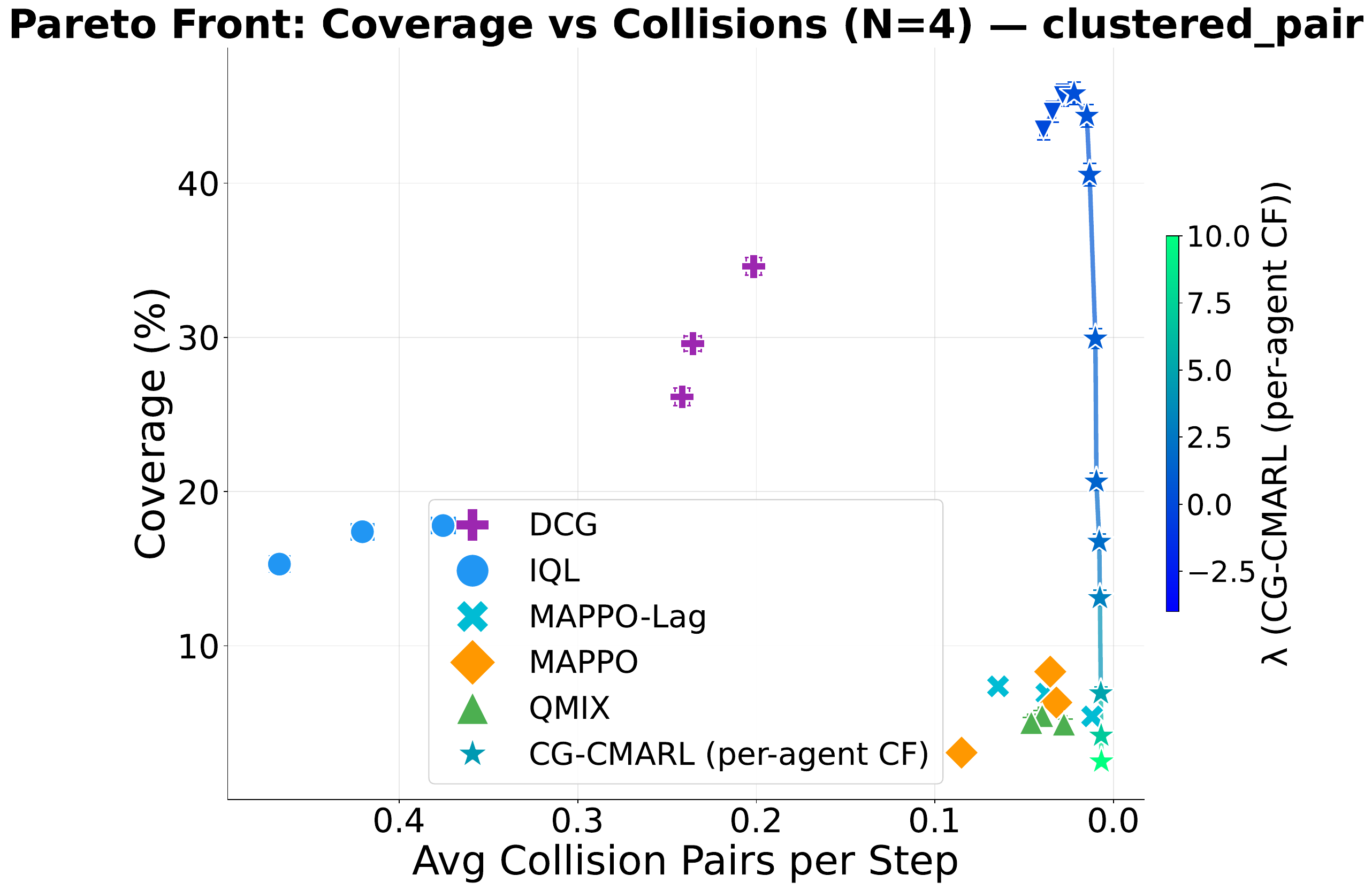}
        \caption{\texttt{clustered\_pair}}
    \end{subfigure}
    \caption{Pareto fronts on fixed landmark scenarios for $N=4$.}
    \label{fig:scenario-pareto-N4}
\end{figure}

\begin{figure}[htbp]
    \centering
    \begin{subfigure}{0.32\textwidth}
        \centering
        \includegraphics[width=\linewidth]{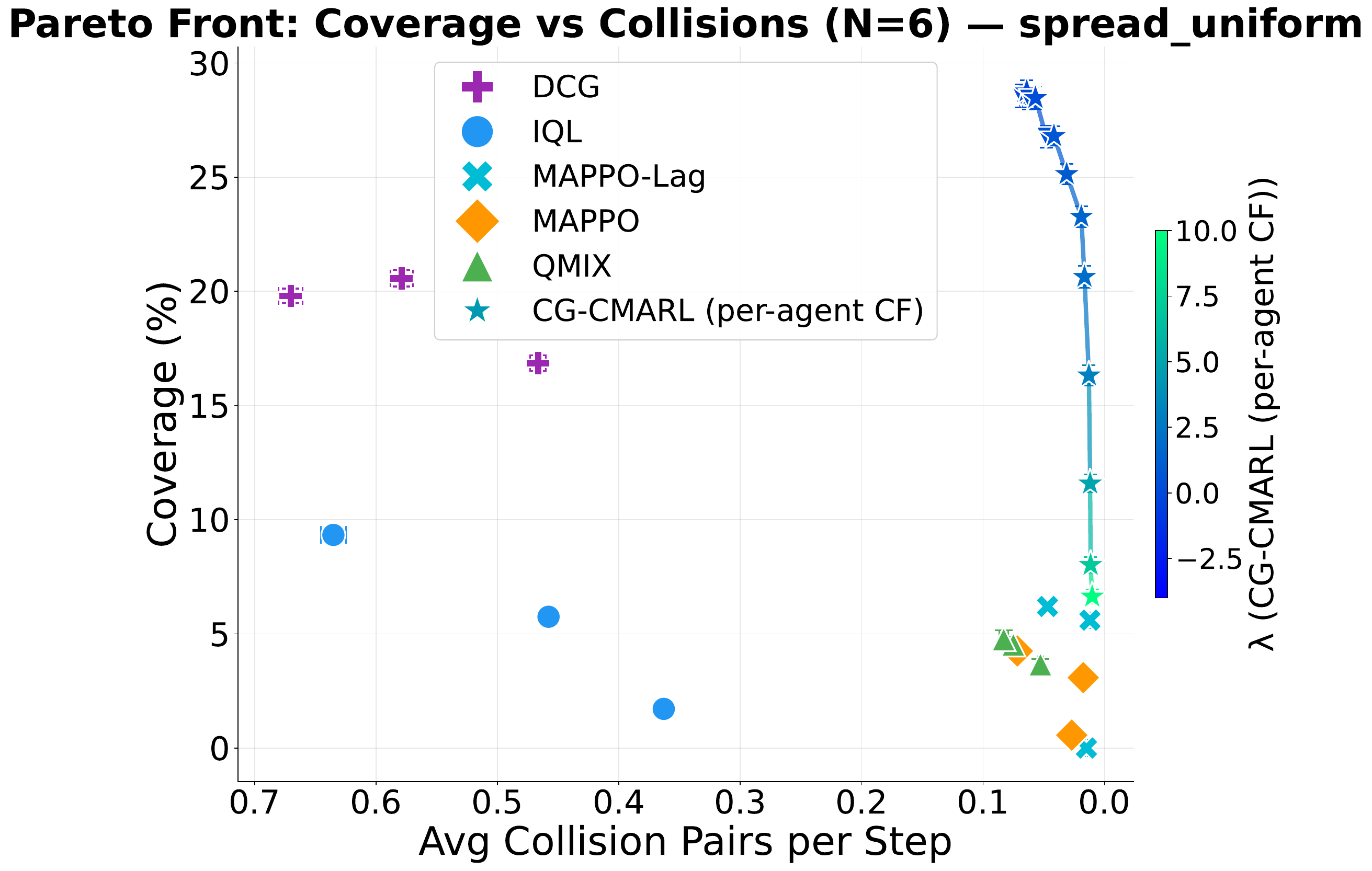}
        \caption{\texttt{spread\_uniform}}
    \end{subfigure}
    \hfill
    \begin{subfigure}{0.32\textwidth}
        \centering
        \includegraphics[width=\linewidth]{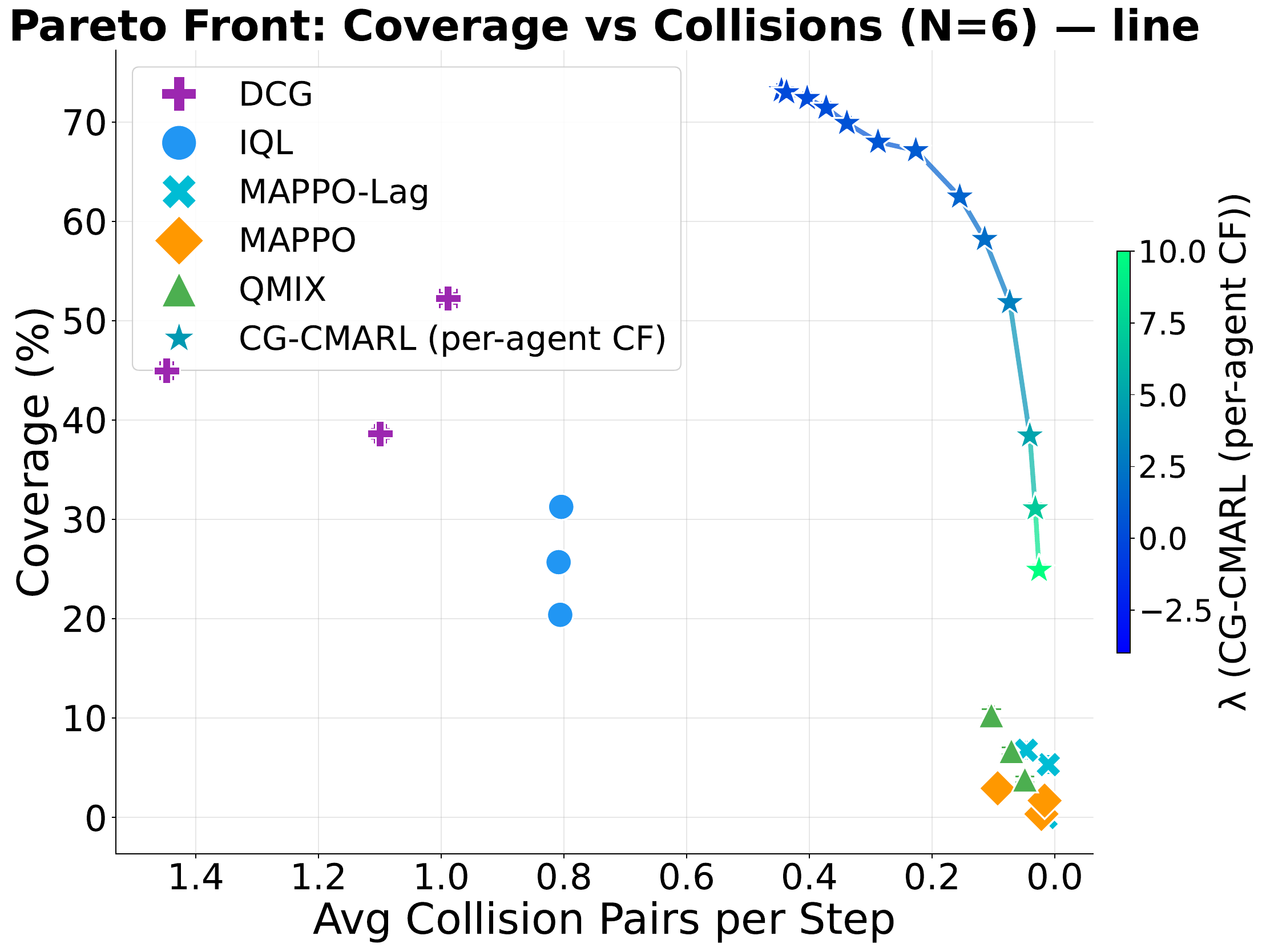}
        \caption{\texttt{line}}
    \end{subfigure}
    \hfill
    \begin{subfigure}{0.32\textwidth}
        \centering
        \includegraphics[width=\linewidth]{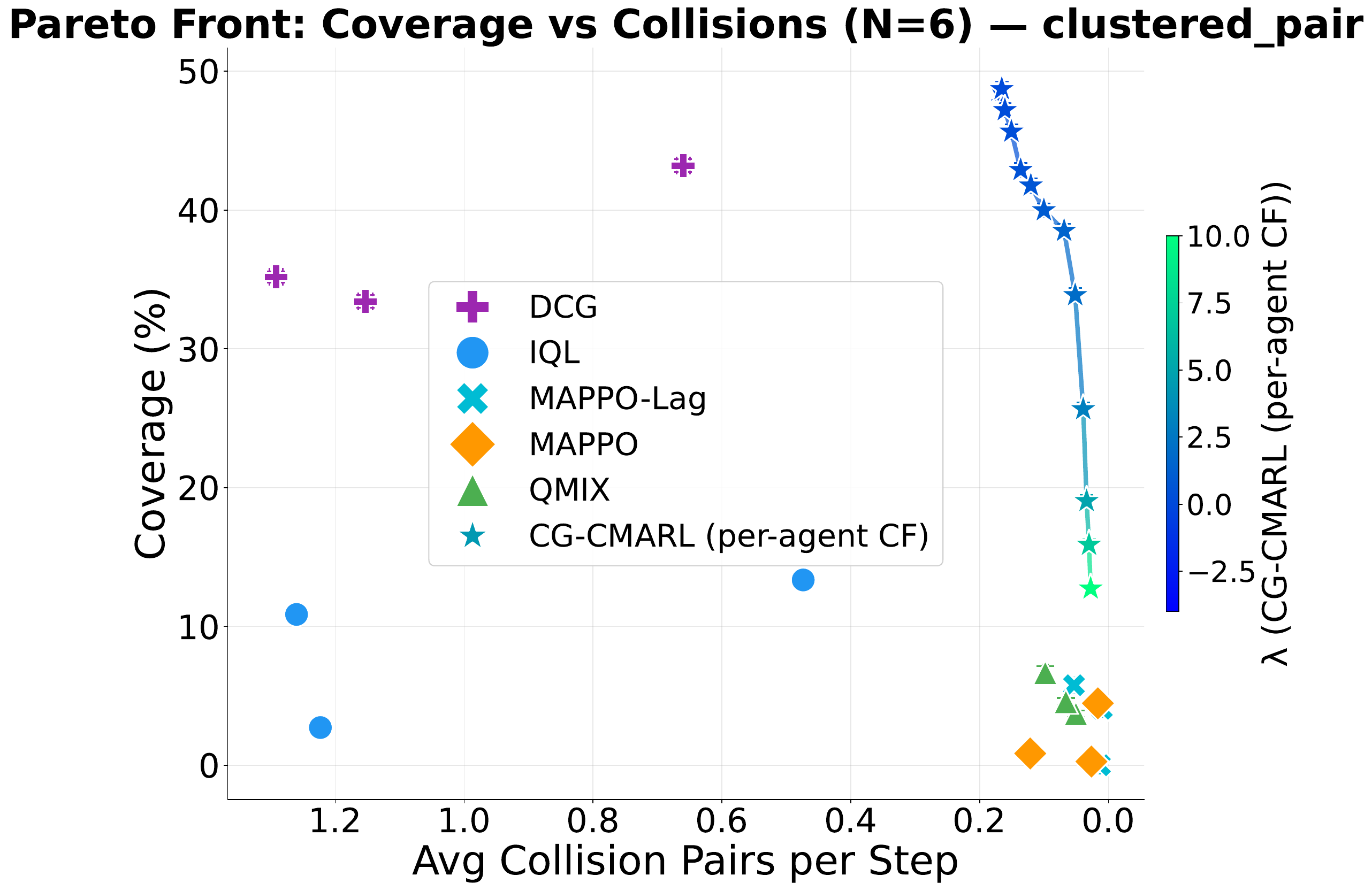}
        \caption{\texttt{clustered\_pair}}
    \end{subfigure}
    \caption{Pareto fronts on fixed landmark scenarios for $N=6$.}
    \label{fig:scenario-pareto-N6}
\end{figure}

\begin{figure}[htbp]
    \centering
    \begin{subfigure}{0.32\textwidth}
        \centering
        \includegraphics[width=\linewidth]{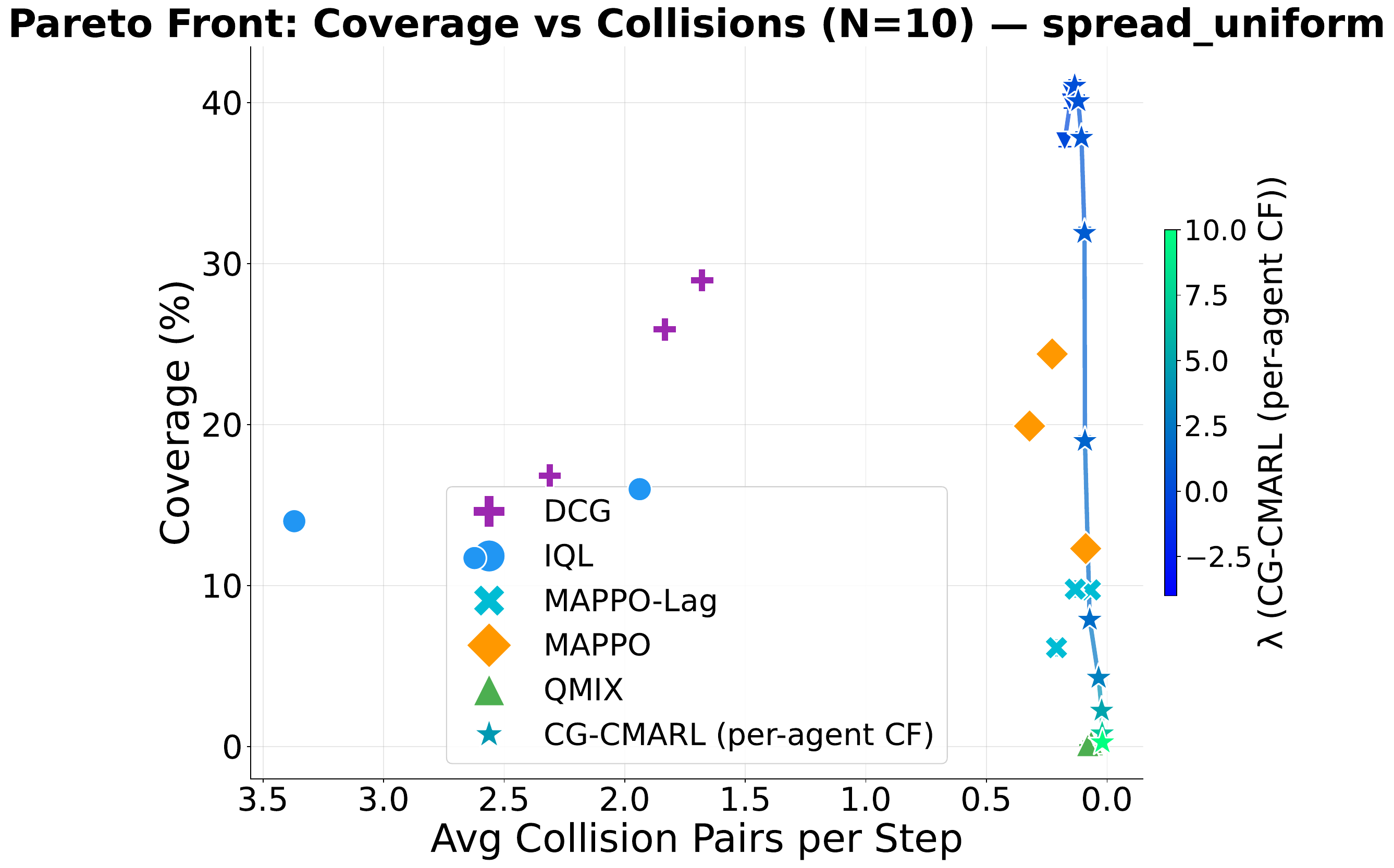}
        \caption{\texttt{spread\_uniform}}
    \end{subfigure}
    \hfill
    \begin{subfigure}{0.32\textwidth}
        \centering
        \includegraphics[width=\linewidth]{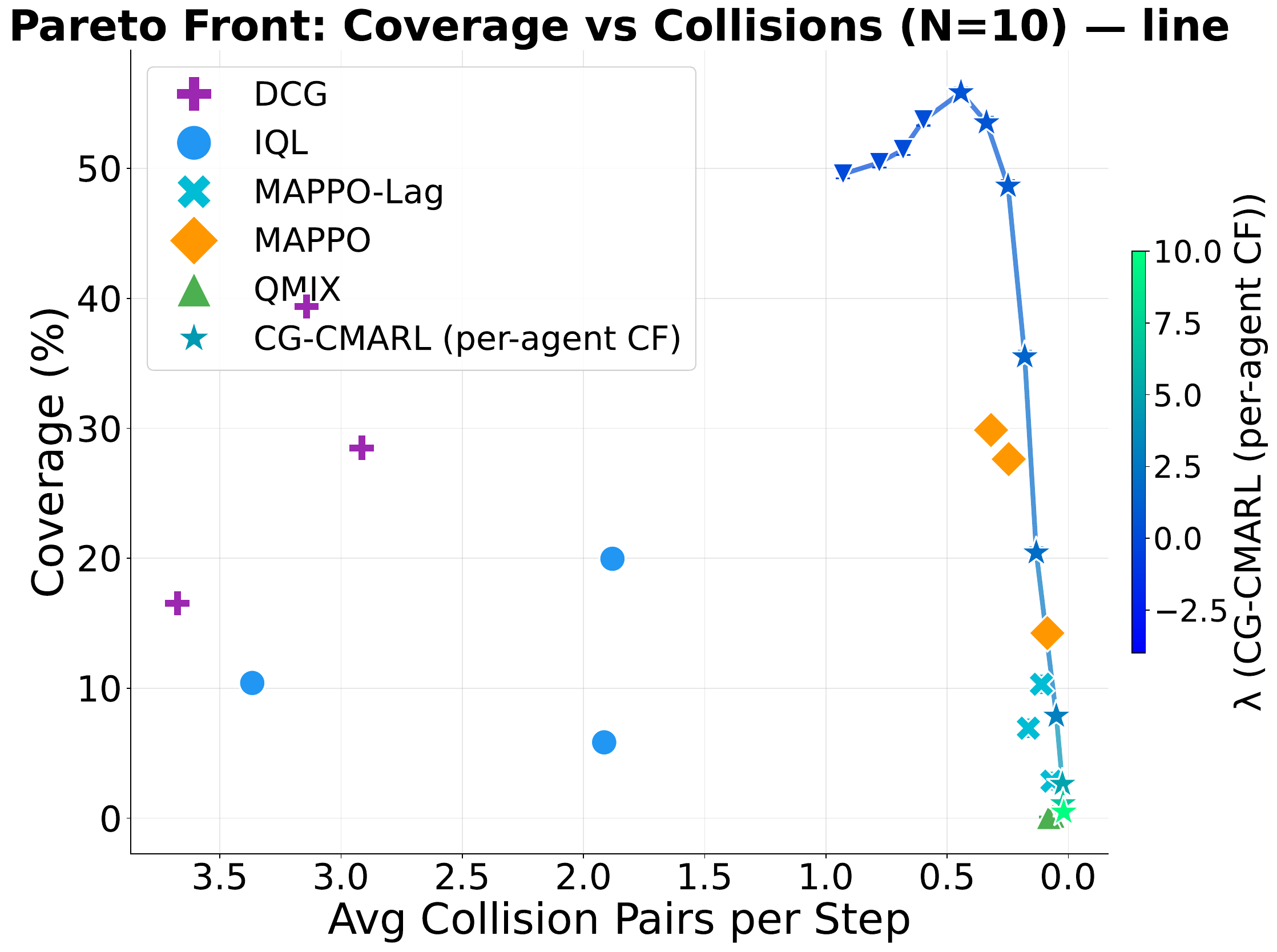}
        \caption{\texttt{line}}
    \end{subfigure}
    \hfill
    \begin{subfigure}{0.32\textwidth}
        \centering
        \includegraphics[width=\linewidth]{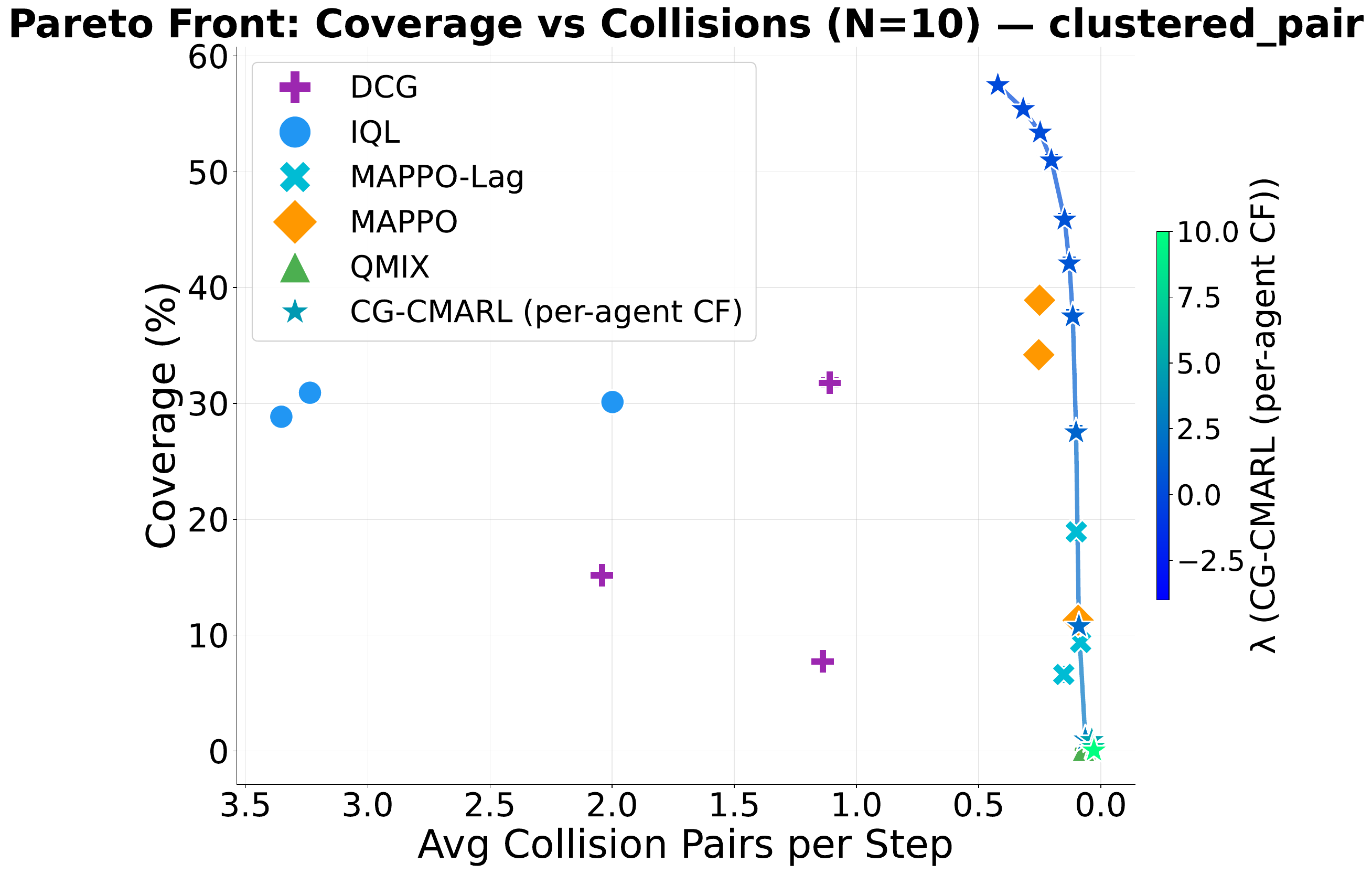}
        \caption{\texttt{clustered\_pair}}
    \end{subfigure}
    \caption{Pareto fronts on fixed landmark scenarios for $N=10$.}
    \label{fig:scenario-pareto-N10}
\end{figure}

\subsection{Safety Metric and Fairness Across Team Sizes}
\label{app:metric-fairness}

The safety metric reported throughout the paper is the
\emph{average number of overlapping agent pairs per step}: at each
timestep we count unordered pairs $(i,k)$ whose Euclidean distance
is below $2\,r_{\mathrm{agent}}$, then average over steps and
episodes. This quantity is exactly the cost signal the
Lagrangian multiplier penalizes during training (the cost head
predicts it per region), so reporting it as the evaluation metric
keeps the training objective and the evaluation objective aligned.

The metric has one property that is worth making explicit: its
maximum possible value is $\binom{N}{2}$, which grows quadratically
in the team size. A raw value of $0.05$ collisions per step therefore
carries different weight at $N=3$ (where the maximum is $3$) than at
$N=10$ (where the maximum is $45$). This means cross-$N$ comparisons
using the raw value understate how well the algorithm scales.

To disentangle ``total safety burden on the system'' from ``per-pair
safety,'' Tables~\ref{tab:baselines-all}--\ref{tab:cgcmarl-all}
report both the raw metric and a normalized \emph{per-pair} rate
defined as
\begin{equation}
\label{eq:per-pair-rate}
\text{per-pair rate} \;:=\;
\frac{\text{Collisions}}{\binom{N}{2}}.
\end{equation}
The per-pair rate admits a clean interpretation: it is the
probability that a uniformly-random pair of agents is in collision
at a uniformly-random timestep.

\paragraph{Empirical scaling of CG-CMARL.}
Table~\ref{tab:per-pair-scaling} compares CG-CMARL at the same
multiplier ($\lambda = 1$) across team sizes. In raw units the
collision count grows by roughly $15\times$ from $N=3$ to $N=10$,
suggesting that safety degrades sharply with team size. The per-pair
rate tells a very different story: it stays essentially flat, between
$0.0011$ and $0.0019$. That is, CG-CMARL maintains a nearly
constant per-pair collision probability as the team scales; the
apparent degradation in the raw metric is almost entirely a
combinatorial artifact of there being more pairs to track.

\begin{table}[h]
\centering
\small
\caption{CG-CMARL at $\lambda = 1$: raw collisions vs.\ per-pair
rate as team size grows. The raw metric's $\approx 15\times$ increase
reflects the quadratic growth in $\binom{N}{2}$, not a genuine
degradation of per-pair safety.}
\label{tab:per-pair-scaling}
\begin{tabular}{@{}rrcr@{}}
\toprule
$N$ & $\binom{N}{2}$ & \textbf{Collisions} & \textbf{Per-pair rate} \\
\midrule
 3 &  3 & 0.0034 & 0.00113 \\
 4 &  6 & 0.0103 & 0.00172 \\
 6 & 15 & 0.0289 & 0.00193 \\
10 & 45 & 0.0534 & 0.00119 \\
\bottomrule
\end{tabular}
\end{table}

We keep the raw pair count as the primary metric because it matches
the training objective, but the per-pair rate is the more
appropriate quantity for cross-$N$ comparisons and is reported
alongside in the tables below.

\subsection{Full Numerical Results}
\label{app:numerical-results}

Tables~\ref{tab:baselines-all}--\ref{tab:cgcmarl-all} report the
full numerical results summarized by the Pareto front plots in
Section~\ref{sec:exp-results}. Collisions denotes the average number
of colliding agent pairs per step (lower is better), matching the
x-axis of the Pareto plots; coverage is reported as a percentage.
All values are averaged over 2 random seeds; $\pm$ denotes one
sample standard deviation. For CG-CMARL, $\bigstar$ marks
Pareto-optimal operating points. These tables are auto-generated by
\texttt{scripts/generate\_appendix\_tables.py} directly from the
per-seed \texttt{pareto\_sweep.json} files.

\input{figures/tables/baselines_all.tex}

\input{figures/tables/cgcmarl_all.tex}

%% file: figures/tables/baselines_all.tex
{\small
\begin{longtable}{@{}ll r@{$\,\pm\,$}l r@{$\,\pm\,$}l r@{$\,\pm\,$}l@{}}
\caption{Baseline results across all team sizes. ``lr'' =
\texttt{local\_ratio}; ``cl'' = \texttt{cost\_limit} (for
MAPPO-Lagrangian). Collisions is the average number of
colliding agent pairs per step (lower is better);
per-pair rate is Collisions divided by $\binom{N}{2}$ and
allows fair comparison across $N$ (see
Appendix~\ref{app:metric-fairness}); coverage is reported
as a percentage.}
\label{tab:baselines-all} \\
\toprule
\textbf{Algorithm} & \textbf{Param.} &
\multicolumn{2}{c}{\textbf{Collisions}} &
\multicolumn{2}{c}{\textbf{Per-pair}} &
\multicolumn{2}{c}{\textbf{Coverage (\%)}} \\
\midrule
\endfirsthead
\multicolumn{8}{@{}l}{\small\itshape (continued from previous page)} \\
\toprule
\textbf{Algorithm} & \textbf{Param.} &
\multicolumn{2}{c}{\textbf{Collisions}} &
\multicolumn{2}{c}{\textbf{Per-pair}} &
\multicolumn{2}{c}{\textbf{Coverage (\%)}} \\
\midrule
\endhead
\midrule
\multicolumn{8}{r@{}}{\small\itshape (continued on next page)} \\
\endfoot
\bottomrule
\endlastfoot
\multicolumn{8}{@{}l}{\textit{N = 3 (3 pairwise regions)}} \\
\addlinespace[2pt]
IQL         & lr=0.0 & 0.0519 & 0.0081 & 0.0173 & 0.0027 & 20.58 & 0.82 \\
IQL         & lr=0.3 & 0.0623 & 0.0075 & 0.0208 & 0.0025 & 17.42 & 3.42 \\
IQL         & lr=0.6 & 0.0483 & 0.0001 & 0.0161 & 0.0000 & 15.67 & 0.47 \\
QMIX        & lr=0.0 & 0.0052 & 0.0008 & 0.0017 & 0.0003 &  7.83 & 0.42 \\
QMIX        & lr=0.3 & 0.0060 & 0.0005 & 0.0020 & 0.0002 &  5.60 & 0.66 \\
QMIX        & lr=0.6 & 0.0071 & 0.0009 & 0.0024 & 0.0003 &  5.83 & 0.24 \\
DCG         & lr=0.0 & 0.0450 & 0.0040 & 0.0150 & 0.0013 & 21.58 & 3.42 \\
DCG         & lr=0.3 & 0.0439 & 0.0081 & 0.0146 & 0.0027 & 23.50 & 1.89 \\
DCG         & lr=0.6 & 0.0322 & 0.0003 & 0.0107 & 0.0001 & 24.17 & 2.36 \\
MAPPO       & lr=0.0 & 0.0232 & 0.0240 & 0.0077 & 0.0080 &  9.84 & 5.33 \\
MAPPO       & lr=0.3 & 0.0126 & 0.0003 & 0.0042 & 0.0001 &  8.00 & 2.36 \\
MAPPO       & lr=0.6 & 0.0084 & 0.0058 & 0.0028 & 0.0019 & 10.90 & 3.64 \\
MAPPO-Lag   & cl=0.1 & 0.0185 & 0.0047 & 0.0062 & 0.0016 & 10.53 & 1.51 \\
MAPPO-Lag   & cl=1.0 & 0.0067 & 0.0012 & 0.0022 & 0.0004 &  9.97 & 0.33 \\
\addlinespace[4pt]
\midrule
\multicolumn{8}{@{}l}{\textit{N = 4 (6 pairwise regions)}} \\
\addlinespace[2pt]
IQL         & lr=0.0 & 0.0837 & 0.0100 & 0.0140 & 0.0017 & 17.19 & 0.97 \\
IQL         & lr=0.3 & 0.0868 & 0.0011 & 0.0145 & 0.0002 & 16.00 & 1.94 \\
IQL         & lr=0.6 & 0.0703 & 0.0047 & 0.0117 & 0.0008 & 16.56 & 1.68 \\
QMIX        & lr=0.0 & 0.0182 & 0.0005 & 0.0030 & 0.0001 &  3.70 & 0.35 \\
QMIX        & lr=0.3 & 0.0177 & 0.0008 & 0.0030 & 0.0001 &  3.95 & 0.14 \\
QMIX        & lr=0.6 & 0.0187 & 0.0021 & 0.0031 & 0.0004 &  4.72 & 0.25 \\
DCG         & lr=0.0 & 0.0856 & 0.0181 & 0.0143 & 0.0030 & 21.50 & 4.42 \\
DCG         & lr=0.3 & 0.0858 & 0.0051 & 0.0143 & 0.0008 & 24.69 & 1.33 \\
DCG         & lr=0.6 & 0.0756 & 0.0020 & 0.0126 & 0.0003 & 23.56 & 0.62 \\
MAPPO       & lr=0.0 & 0.0161 & 0.0096 & 0.0027 & 0.0016 & 10.61 & 3.79 \\
MAPPO       & lr=0.3 & 0.0171 & 0.0123 & 0.0029 & 0.0021 &  5.19 & 2.21 \\
MAPPO       & lr=0.6 & 0.0172 & 0.0183 & 0.0029 & 0.0031 &  9.88 & 2.47 \\
MAPPO-Lag   & cl=0.1 & 0.0122 & 0.0032 & 0.0020 & 0.0005 &  7.97 & 0.18 \\
MAPPO-Lag   & cl=1.0 & 0.0139 & 0.0119 & 0.0023 & 0.0020 & 11.18 & 2.65 \\
\addlinespace[4pt]
\midrule
\multicolumn{8}{@{}l}{\textit{N = 6 (15 pairwise regions)}} \\
\addlinespace[2pt]
IQL         & lr=0.0 & 0.1058 & 0.0060 & 0.0071 & 0.0004 & 15.83 & 2.36 \\
IQL         & lr=0.3 & 0.0960 & 0.0141 & 0.0064 & 0.0009 & 14.21 & 0.53 \\
IQL         & lr=0.6 & 0.0802 & 0.0032 & 0.0053 & 0.0002 & 12.87 & 0.06 \\
QMIX        & lr=0.0 & 0.0217 & 0.0021 & 0.0014 & 0.0001 &  1.85 & 0.07 \\
QMIX        & lr=0.3 & 0.0243 & 0.0021 & 0.0016 & 0.0001 &  1.80 & 0.14 \\
QMIX        & lr=0.6 & 0.0313 & 0.0013 & 0.0021 & 0.0001 &  2.88 & 0.21 \\
DCG         & lr=0.0 & 0.1425 & 0.0193 & 0.0095 & 0.0013 & 24.17 & 0.24 \\
DCG         & lr=0.3 & 0.1458 & 0.0051 & 0.0097 & 0.0003 & 24.00 & 0.47 \\
DCG         & lr=0.6 & 0.1544 & 0.0025 & 0.0103 & 0.0002 & 24.17 & 1.65 \\
MAPPO       & lr=0.0 & 0.0107 & 0.0020 & 0.0007 & 0.0001 &  2.82 & 3.65 \\
MAPPO       & lr=0.3 & 0.0174 & 0.0044 & 0.0012 & 0.0003 &  2.88 & 2.77 \\
MAPPO       & lr=0.6 & 0.0183 & 0.0078 & 0.0012 & 0.0005 &  7.23 & 3.96 \\
MAPPO-Lag   & cl=0.1 & 0.0188 & 0.0083 & 0.0013 & 0.0006 &  4.48 & 3.13 \\
MAPPO-Lag   & cl=1.0 & 0.0310 & 0.0055 & 0.0021 & 0.0004 &  7.37 & 3.63 \\
\addlinespace[4pt]
\midrule
\multicolumn{8}{@{}l}{\textit{N = 10 (45 pairwise regions)}} \\
\addlinespace[2pt]
IQL         & lr=0.0 & 0.2087 & 0.0010 & 0.0046 & 0.0000 & 16.73 & 1.87 \\
IQL         & lr=0.3 & 0.2035 & 0.0068 & 0.0045 & 0.0002 & 15.80 & 0.14 \\
IQL         & lr=0.6 & 0.1759 & 0.0029 & 0.0039 & 0.0001 & 15.12 & 2.09 \\
QMIX        & lr=0.0 & 0.0365 & 0.0090 & 0.0008 & 0.0002 &  0.86 & 1.10 \\
QMIX        & lr=0.3 & 0.0193 & 0.0017 & 0.0004 & 0.0000 &  0.00 & 0.00 \\
QMIX        & lr=0.6 & 0.0222 & 0.0038 & 0.0005 & 0.0001 &  0.00 & 0.00 \\
DCG         & lr=0.0 & 0.2272 & 0.0083 & 0.0050 & 0.0002 & 22.43 & 1.17 \\
DCG         & lr=0.3 & 0.2263 & 0.0015 & 0.0050 & 0.0000 & 21.02 & 0.95 \\
DCG         & lr=0.6 & 0.2145 & 0.0054 & 0.0048 & 0.0001 & 19.28 & 0.25 \\
MAPPO       & lr=0.0 & 0.0798 & 0.0408 & 0.0018 & 0.0009 & 17.03 & 3.21 \\
MAPPO       & lr=0.6 & 0.0387 & 0.0239 & 0.0009 & 0.0005 & 17.40 & 4.27 \\
MAPPO-Lag   & cl=0.5 & 0.0247 & 0.0070 & 0.0005 & 0.0002 & 12.59 & 12.26 \\
\end{longtable}
}

%% file: figures/tables/cgcmarl_all.tex
\begin{table}[htbp]
\centering
\small
\caption{CG-CMARL results across all team sizes. Each row
corresponds to a single trained model evaluated at the indicated
$\lambda$. Collisions is the average number of colliding
agent pairs per step (lower is better); per-pair rate is
Collisions divided by $\binom{N}{2}$ for fair cross-$N$
comparison (see Appendix~\ref{app:metric-fairness}).
$\bigstar$ = Pareto-optimal operating point.}
\label{tab:cgcmarl-all}
\begin{tabular}{@{}cl r@{$\,\pm\,$}l r@{$\,\pm\,$}l r@{$\,\pm\,$}l c@{}}
\toprule
$N$ & $\lambda$ &
\multicolumn{2}{c}{\textbf{Collisions}} &
\multicolumn{2}{c}{\textbf{Per-pair}} &
\multicolumn{2}{c}{\textbf{Coverage (\%)}} &
\textbf{Pareto} \\
\midrule
\multirow{9}{*}{3} & 0.00  & 0.0055 & 0.0001 & 0.0018 & 0.0000 & 47.58 & 2.24 & $\bigstar$ \\
 & 0.05  & 0.0054 & 0.0003 & 0.0018 & 0.0001 & 46.58 & 6.01 &  \\
 & 0.10  & 0.0052 & 0.0006 & 0.0017 & 0.0002 & 47.33 & 2.59 & $\bigstar$ \\
 & 0.20  & 0.0037 & 0.0004 & 0.0012 & 0.0001 & 46.17 & 0.94 & $\bigstar$ \\
 & 0.50  & 0.0042 & 0.0006 & 0.0014 & 0.0002 & 38.08 & 0.12 &  \\
 & 1.00  & 0.0034 & 0.0003 & 0.0011 & 0.0001 & 17.50 & 2.83 & $\bigstar$ \\
 & 2.00  & 0.0030 & 0.0014 & 0.0010 & 0.0005 &  8.25 & 1.53 & $\bigstar$ \\
 & 5.00  & 0.0017 & 0.0001 & 0.0006 & 0.0000 &  3.92 & 2.24 & $\bigstar$ \\
 & 10.00  & 0.0022 & 0.0006 & 0.0007 & 0.0002 &  1.33 & 1.18 &  \\
\addlinespace[4pt]
\midrule
\multirow{9}{*}{4} & 0.00  & 0.0243 & 0.0016 & 0.0041 & 0.0003 & 37.50 & 0.88 &  \\
 & 0.05  & 0.0186 & 0.0034 & 0.0031 & 0.0006 & 38.94 & 1.86 & $\bigstar$ \\
 & 0.10  & 0.0171 & 0.0027 & 0.0029 & 0.0004 & 36.56 & 2.03 &  \\
 & 0.20  & 0.0137 & 0.0007 & 0.0023 & 0.0001 & 38.50 & 0.53 & $\bigstar$ \\
 & 0.50  & 0.0162 & 0.0006 & 0.0027 & 0.0001 & 32.06 & 3.45 &  \\
 & 1.00  & 0.0103 & 0.0021 & 0.0017 & 0.0004 & 26.81 & 1.33 & $\bigstar$ \\
 & 2.00  & 0.0070 & 0.0011 & 0.0012 & 0.0002 & 16.50 & 3.89 & $\bigstar$ \\
 & 5.00  & 0.0075 & 0.0013 & 0.0013 & 0.0002 &  6.75 & 0.53 &  \\
 & 10.00  & 0.0064 & 0.0011 & 0.0011 & 0.0002 &  4.38 & 1.24 & $\bigstar$ \\
\addlinespace[4pt]
\midrule
\multirow{9}{*}{6} & 0.00  & 0.0413 & 0.0022 & 0.0028 & 0.0001 & 33.29 & 3.01 &  \\
 & 0.05  & 0.0457 & 0.0006 & 0.0030 & 0.0000 & 33.75 & 1.65 &  \\
 & 0.10  & 0.0390 & 0.0012 & 0.0026 & 0.0001 & 34.83 & 0.24 & $\bigstar$ \\
 & 0.20  & 0.0428 & 0.0021 & 0.0029 & 0.0001 & 33.92 & 1.89 &  \\
 & 0.50  & 0.0299 & 0.0020 & 0.0020 & 0.0001 & 33.21 & 0.77 & $\bigstar$ \\
 & 1.00  & 0.0289 & 0.0015 & 0.0019 & 0.0001 & 28.67 & 0.35 & $\bigstar$ \\
 & 2.00  & 0.0172 & 0.0016 & 0.0011 & 0.0001 & 22.38 & 2.89 & $\bigstar$ \\
 & 5.00  & 0.0153 & 0.0012 & 0.0010 & 0.0001 & 11.75 & 1.30 & $\bigstar$ \\
 & 10.00  & 0.0120 & 0.0002 & 0.0008 & 0.0000 &  7.33 & 0.94 & $\bigstar$ \\
\addlinespace[4pt]
\midrule
\multirow{8}{*}{10} & 0.00  & 0.1223 & 0.0177 & 0.0027 & 0.0004 & 27.14 & 1.47 &  \\
 & 0.10  & 0.1167 & 0.0130 & 0.0026 & 0.0003 & 27.18 & 0.03 &  \\
 & 0.20  & 0.1074 & 0.0124 & 0.0024 & 0.0003 & 27.64 & 1.41 & $\bigstar$ \\
 & 0.50  & 0.0815 & 0.0144 & 0.0018 & 0.0003 & 24.69 & 0.18 & $\bigstar$ \\
 & 1.00  & 0.0534 & 0.0114 & 0.0012 & 0.0003 & 21.80 & 0.71 & $\bigstar$ \\
 & 2.00  & 0.0343 & 0.0035 & 0.0008 & 0.0001 & 12.24 & 1.73 & $\bigstar$ \\
 & 5.00  & 0.0198 & 0.0010 & 0.0004 & 0.0000 &  1.85 & 0.41 & $\bigstar$ \\
 & 10.00  & 0.0157 & 0.0002 & 0.0003 & 0.0000 &  0.52 & 0.14 & $\bigstar$ \\
\bottomrule
\end{tabular}
\end{table}